%% file: iclr2026_conference.tex
\documentclass{article} 
\usepackage{subcaption}
\usepackage[accepted]{icml2026}

\input{math_commands.tex}

\usepackage{marginnote}
\usepackage{microtype}
\usepackage{graphicx}
\usepackage{caption}
\usepackage{booktabs} 
\usepackage{multirow}
\usepackage{enumitem} 
\usepackage{comment}
\usepackage[table]{xcolor}
\usepackage{wrapfig}
\usepackage{algorithm}

\makeatletter
\@ifundefined{algorithmic}{}{%

}
\makeatother
\usepackage{algpseudocode} 
\usepackage{adjustbox}
\usepackage{tabularx}
\usepackage{subcaption}
\usepackage{siunitx}
\usepackage{threeparttable}
\usepackage{float}
\usepackage{caption}
\newcommand{\tinystd}[1]{\ensuremath{{\scriptstyle\pm #1}}}
\newcolumntype{Y}{>{\centering\arraybackslash}X}
\newcolumntype{L}[1]{>{\raggedright\arraybackslash}p{#1}}

\usepackage{amsmath, amssymb, amsthm, mathtools, bm}
\usepackage{bbm} 
\usepackage{dsfont} 

\usepackage{tikz}
\usetikzlibrary{positioning,calc,fit,arrows.meta,shapes.geometric, shadows.blur}
\usepackage{tcolorbox}
\tcbuselibrary{skins,breakable}
\usepackage{adjustbox}
\usepackage{amsmath}
\usepackage{fontawesome5}

\tikzset{
  stage/.style={
    rectangle, rounded corners, draw=black, very thick,
    text width=3.5cm, minimum height=1cm, align=center,
    fill=blue!5
  },
  arrow/.style={
    -{Latex[length=3mm]}, line width=1pt
  }
}
\usepackage{pgfplots}
\pgfplotsset{compat=1.18,compat/show suggested version=false}
\usepgfplotslibrary{groupplots, colorbrewer}

\usepackage[colorlinks=true,
            linkcolor=red,      
            citecolor=blue,      
            urlcolor=blue]       
           {hyperref}
\usepackage[capitalize,noabbrev]{cleveref}

\usepackage{natbib}

\theoremstyle{plain}
\newtheorem{theorem}{Theorem}
\newtheorem{lemma}{Lemma}
\newtheorem{proposition}{Proposition}
\newtheorem{corollary}{Corollary}

\theoremstyle{definition}

\begin{document}

\icmltitlerunning{COFT: Counterfactual-Conformal Decoding for Fair Chain-of-Thought Reasoning in Large Language Models}

\twocolumn[
  \icmltitle{COFT: Counterfactual-Conformal Decoding for Fair Chain-of-Thought Reasoning in Large Language Models}

\newcommand{\fix}{\marginpar{FIX}}
\newcommand{\new}{\marginpar{NEW}}



  \icmlsetsymbol{equal}{*}

  \begin{icmlauthorlist}
    \icmlauthor{Arya Fayyazi}{yyy}
    \icmlauthor{Mehdi Kamal}{yyy}
    \icmlauthor{Massoud Pedram}{yyy}
  \end{icmlauthorlist}

  \icmlaffiliation{yyy}{Department of Electrical and Computer Engineering, University of Southern California, Los Angeles, California, USA}

  \icmlcorrespondingauthor{Arya Fayyazi}{afayyazi@usc.edu}

  \icmlkeywords{Machine Learning, ICML}

  \vskip 0.3in
]



\begingroup\NoHyper\printAffiliationsAndNotice{}\endNoHyper\endgroup

\begin{abstract}

Large language models (LLMs) can reveal and amplify societal biases during chain-of-thought (CoT) generation. We present \textbf{COFT} (\emph{Chain of Fair Thought}), a training-free decoding method that applies token-level fairness control at decode time, with distribution-free \emph{marginal} validity guarantees (under exchangeability) for any frozen causal language model.
COFT operates in three stages. First, it creates a masked prompt by replacing sensitive spans with neutral tokens. Second, it compares the factual and masked logit distributions through lightweight logit fusion to attenuate attribute-driven biases. Third, it uses dual-branch split-conformal calibration to certify per-step candidate token sets at a user-chosen risk level.
We evaluate COFT across six models and multiple bias benchmarks. Our method reduces standard bias metrics by 30--55\% (median 38\%) while preserving task utility and language quality. Reasoning accuracies remain unchanged within run-to-run noise margins. The computational overhead is modest, equivalent to one additional cached forward pass (\(\le\)11\%). 
COFT offers a clear, auditable path to safer CoT generation with significant bias reduction, negligible utility loss, and no requirement for retraining, auxiliary classifiers, or weight access.
\end{abstract}

\section{Introduction}

Large-scale autoregressive language models (LMs) now underpin open-ended generation, question
answering, and conversational assistants \citep{brown2020language,kojima2022large,touvron2023llama}.
They are trained by next-token prediction on large web corpora and, as a result, exhibit emergent
abilities such as in-context learning and chain-of-thought (CoT) reasoning
\citep{dong2022survey,wei2024mc, azizi2026powersmclowlatencysequencelevelpower}. The same corpora, however, encode historical power imbalances
and social stereotypes, which LMs can internalize and even amplify
\citep{bender2021dangers,zhao2021ethical}. When models generate explicit CoT, these biases need
not remain latent: harmful associations can appear token by token in the reasoning trace, even when
the final answer looks neutral.

Existing mitigation strategies address parts of this problem but leave important gaps. Data curation
and domain-specific fine-tuning can suppress some harmful behaviors, but they require expensive
retraining and may degrade general-domain performance or encode the biases of annotators
\citep{gehman2020realtoxicityprompts,santurkar2023whose}. Inference-time steering provides an
alternative that avoids changing model weights. Methods based on auxiliary classifiers or expert
ensembling can push generation away from unsafe content \citep{madotto2020plug,liu2021dexperts},
but they inherit the blind spots of the classifiers they rely on and introduce extra computational
latency. A third line of work performs global representation-space debiasing by removing linear
attribute subspaces from model representations \citep{ravfogel2020null,ravfogel2022linear}. Because
this nullspace is fixed, it cannot adapt to the semantics of a specific prompt, and it can suppress
legitimate content or fail to capture non-linear manifestations of bias \citep{liang2023encouraging}.

Two critical desiderata therefore remain insufficiently addressed. \emph{First}, most approaches lack
\emph{per-step} statistical guarantees at decode time. For a given prompt, they do not specify which
tokens can be safely emitted while keeping the risk of failure under control. Conformal prediction
(CP) provides distribution-free error control with \emph{marginal} coverage guarantees under minimal
assumptions (e.g., exchangeability), but does not in general provide conditional (input-conditional)
coverage \citep{vovk2005algorithmic,angelopoulos2021gentle}. CP works by calibrating prediction sets to a
user-chosen risk level. Adapting CP to autoregressive decoding, however, is non-trivial: one must
define nonconformity scores that respect sequence dependence and the open-vocabulary nature of
language generation \citep{zhao2023slic}. \emph{Second}, fairness goals are often defined only at a global or aggregate level. In real
deployments, we instead need a local notion of counterfactual parity. The model’s output should be
stable under hypothetical changes to sensitive spans in the prompt, and this stability should hold at
each decoding step \citep{chiappa2019path}. 

What is currently missing is a decoding-time mechanism that satisfies three properties at once:
(i) it enforces counterfactual invariance at the level of candidate next tokens, (ii) it is gradient-free
and model-agnostic so that it can be used with frozen checkpoints, and (iii) it provides auditable,
per-step \emph{marginal} guarantees that are suitable for interactive systems.

To satisfy these desiderata, we introduce \textbf{COFT}—\emph{Chain of Fair Thought}—a training-free framework that enforces counterfactual fairness via logit-space intervention. As illustrated in Figure~\ref{fig:example}, COFT effectively “blinds” the model to sensitive attributes by running a masked branch alongside the original, fusing their distributions to steer generation autoregressively. We couple this steering with a dual-branch split-conformal calibration procedure, yielding rigorous \emph{per-step marginal} guarantees (under exchangeability) regarding the counterfactual stability of emitted tokens.
This approach is gradient-free and auditable; our experiments (\S\ref{sec:experiments}) demonstrate that COFT significantly attenuates bias while preserving benign context, offering a robust solution for deploying frozen LMs with verifiable safety bounds.
\begin{figure}[ht]
    \centering
    \includegraphics[width=\linewidth]{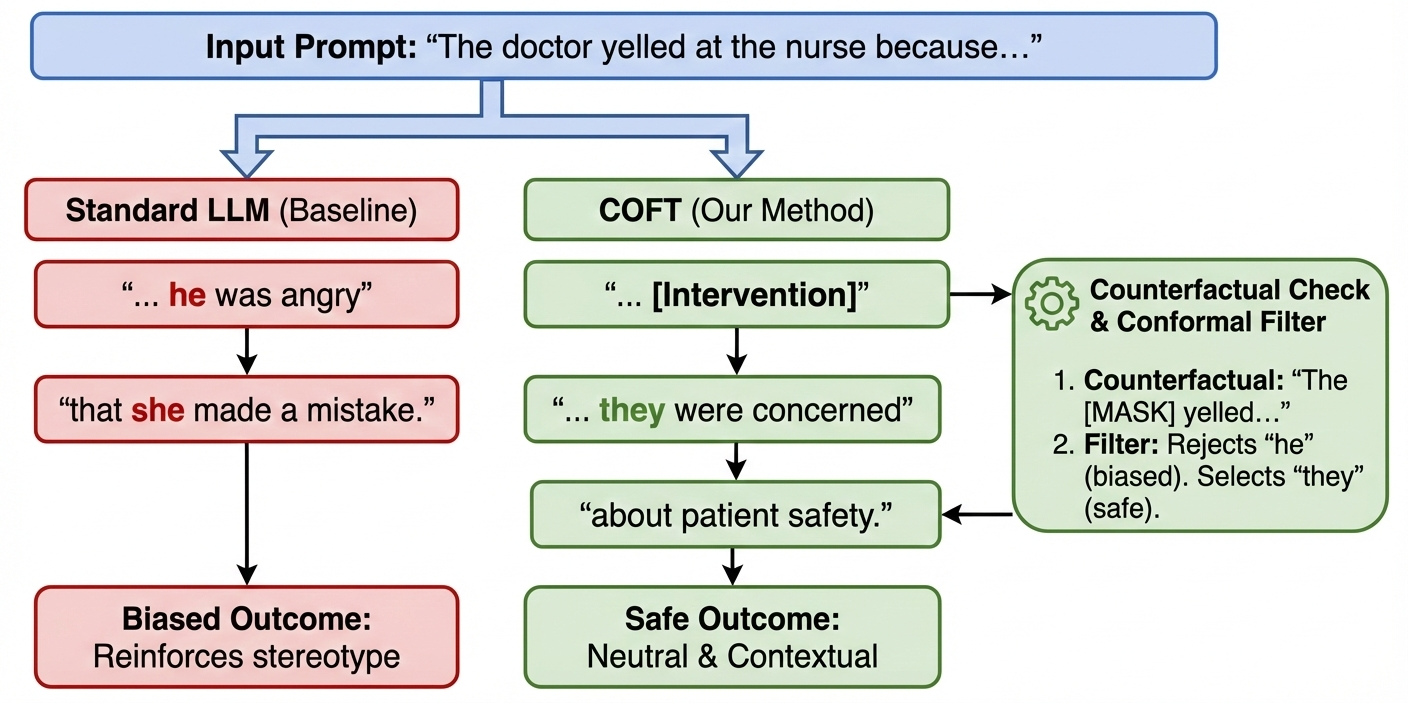}
    \caption{Overview of the COFT framework. Sensitive input spans are masked in the auxiliary branch, while downstream predictions are regularized through fused logits and dual-branch certification.}
    \vspace{-1.5em}
    \label{fig:example}
\end{figure}

\section{Background and Related Work}
\label{sec:background}

\subsection{Bias in Large Language Models}

Large-scale LMs inherit societal biases from web corpora, manifesting as toxicity, stereotypes, and disparate treatment across protected attributes \citep{bender2021dangers,gehman2020realtoxicityprompts,blodgett2020language}. Standard benchmarks—including CrowS-Pairs, StereoSet, BBQ, and BOLD—quantify these harms across diverse tasks \citep{nangia2020crows,nadeem2020stereoset,parrish2021bbq,dhamala2021bold,10.24963/ijcai.2025/42}. While instruction tuning and RLHF effectively reduce overt toxicity, subtle stereotyping often persists \citep{santurkar2023whose}, particularly when chain-of-thought (CoT) reasoning exposes intermediate biased associations \citep{zhao2021ethical}. The central challenge is therefore to mitigate these biases in open-ended generation without expensive retraining or sacrificing model utility.

\subsection{Counterfactual Fairness}

Following \citet{kusner2017counterfactual} and \citet{pearl2009causality}, a predictor $\hat{Y}$ is counterfactually fair regarding protected attribute $A$ and latent factors $U$ if its distribution remains invariant under interventions on $A$ given $U$. While path-specific variants exist \citep{chiappa2019path}, we operationalize this for autoregressive decoding as \emph{local counterfactual parity}. Here, the ``individual'' is the current context $w_{<t}$, and we require the next-token distribution to remain stable when sensitive spans in the factual prompt $p$ are replaced by a neutral sentinel in a masked counterpart $\widetilde{p} = M(p)$. Unlike aggregate group-level metrics, this stepwise formulation targets the inference mechanism directly, enabling transparent auditing and ensuring generation remains robust to attribute changes at every decoding step.

\subsection{Conformal Prediction}

Conformal prediction (CP) provides distribution-free, finite-sample guarantees by calibrating nonconformity scores $s(x,y)$ on a held-out set to determine a coverage threshold $\tau$ at risk level $\alpha$ \citep{vovk2005algorithmic,angelopoulos2021gentle}. We adapt split CP to autoregressive LMs by defining stepwise scores on next-token logits and enforcing \emph{policy consistency}—ensuring decoding parameters match between calibration and test time to preserve exchangeability \citep{zhao2023slic}. Furthermore, to address potential distribution shifts, we leverage covariate-shift corrections based on density ratios \citep{tibshirani2019conformal,barber2021predictive}, enabling rigorous \emph{per-step marginal} safety controls for every emitted token.

\subsection{Related Work}
\label{sec:related}

Mitigating harmful bias in LMs spans data-level, training-time, and inference-time approaches.
Data filtering and augmentation methods aim to reduce harmful correlations at the source
\citep{gehman2020realtoxicityprompts,sheng2019woman,blodgett2020language}. Counterfactual data
augmentation (CDA) swaps or masks sensitive markers to balance evidence across demographic groups
\citep{zhao2018learning,lu2020gender}. Fine-tuning and reinforcement learning from human feedback
(RLHF) adapt models toward safety or alignment objectives
\citep{gehman2020realtoxicityprompts,santurkar2023whose}. These routes can be effective, but they
require access to model weights and substantial compute, must be repeated as data or use cases
evolve, and risk degrading general-domain competence or encoding annotator preferences.

Inference-time control methods avoid retraining. Plug-and-play methods steer generation using
auxiliary discriminators \citep{madotto2020plug}. Expert reweighting approaches such as DExperts and
GeDi shift token probabilities by combining pro-experts and anti-experts, or by using generative
discriminators \citep{liu2021dexperts,krause2021gedi}. Prompt-only self-debiasing methods rely on
carefully designed safety prompts or templates to reduce bias \citep{schick2021self}. While
practical, these methods have important limitations. Many depend on external classifiers, which add
latency and import the classifiers’ blind spots. They also do not provide distribution-free,
\emph{marginal} guarantees. Prompt-only strategies tend to be brittle, with performance that varies
significantly across models and tasks.

Representation-space debiasing methods operate on hidden states inside the model. Approaches such as
INLP and adversarial training remove attribute subspaces or reduce the recoverability of protected
attributes in intermediate representations \citep{ravfogel2020null,ravfogel2022linear,elazar2018adversarial}.
These global projections are, however, prompt-agnostic. They may inadvertently erase legitimate,
context-dependent semantics \citep{liang2023encouraging}, and they may fail to capture non-linear
manifestations of bias. Most importantly, they typically require updating model weights, which limits
their applicability to frozen checkpoints.

Conformal prediction (CP) provides distribution-free error control by calibrating nonconformity
scores and selecting quantile-based thresholds \citep{vovk2005algorithmic,angelopoulos2021gentle}.
Recent work adapts CP to large language models in several ways:
stepwise or sequence-level scores for autoregressive decoding
\citep{zhao2023slic,fayyazi2025facter, fayyazi2026proofofperceptioncertifiedtoolusingmultimodal, Fayyazi_2026_WACV},
risk-controlled rebalancing of competing objectives
(e.g., toxicity vs.\ utility) \citep{overman2025conformal},
and validity guarantees for alignment or refusal behavior in RLHF-style
systems \citep{cherian2024llmvalidity,chen2025conformaltail}.
Covariate-shift-aware variants relate miscoverage to density ratios between calibration
and deployment distributions \citep{tibshirani2019conformal,barber2021predictive}.
However, existing methods are typically post hoc (e.g., re-ranking generations or calibrating
refusal decisions) and do not impose explicit counterfactual fairness constraints at decode time.
They therefore leave open how to regulate \emph{token-level} decisions during decoding when
sensitive information is present in the prompt, and how to combine CP with counterfactual masking
and logit fusion in a single, training-free procedure.

\textbf{Our contribution.}
COFT intervenes exactly where bias manifests: in the next-token distribution during decoding.
For each prompt, it constructs a masked counterfactual version within the same frozen model
(\S\ref{sec:masking}). It then applies a lightweight \emph{logit fusion} step
(\S\ref{sec:logit-fusion}) that attenuates attribute-driven disparities by combining factual and
counterfactual logits. On top of this, COFT uses a \emph{dual-branch split-CP} procedure
(\S\ref{sec:conformal}) to certify a shared support set of tokens with distribution-free,
per-step guarantees. All of this is done without retraining and without auxiliary classifiers.
In this way, COFT directly operationalizes local counterfactual parity during decoding while
preserving the model’s utility.

\section{Methodology}
\label{sec:method}

We consider a frozen causal LM \(f_\theta\) with tokenizer vocabulary
\(\mathcal{V} = \{1,\dots,V\}\). For a prompt \(p\) and a prefix \(w_{<t}\), the model produces
logits \(z_t \in \mathbb{R}^V\) and a next-token distribution
\(\pi_t = \mathrm{softmax}(z_t)\).

Let \(\mathcal{S}\) be a set of \emph{sensitive} spans (for example, gendered, racial, or religious
identifiers). These spans can be specified by the user or detected automatically using
\emph{Named Entity Recognition} (NER) \citep{lample2016neural}. We define a deterministic masking
operator \(M\) that replaces each \(s \in \mathcal{S}\) with a tokenizer-stable sentinel token
\([\texttt{MASK}]\), while preserving the original word order (details in \S\ref{sec:masking}).

COFT decodes two parallel views that share the same generated prefix: the factual prompt \(p\), and the
masked prompt \(\widetilde{p} = M(p)\). At each step, the same selected token is appended to both branches; the masked prompt is not dynamically rewritten during decoding. COFT then performs two actions. First, it applies
\emph{counterfactual logit fusion}, which is a convex interpolation of the factual and masked logits
controlled by a parameter \(\lambda \in [0,1]\). This fusion attenuates attribute-driven disparities
(\S\ref{sec:logit-fusion}). Second, it imposes a \emph{dual-branch split-conformal} acceptance rule,
which is calibrated offline at miscoverage level \(\alpha\). This rule only admits tokens that are
simultaneously probable under both the factual and masked views (\S\ref{sec:conformal}).

Sampling is then performed from the surgically corrected factual distribution, restricted to this
set of certified candidate tokens. The procedure requires no training, gradients, or auxiliary
classifiers. Figure~\ref{fig:workflow} summarizes the overall workflow.

\begin{figure*}[t]
    \centering
    \includegraphics[width=0.7\linewidth]{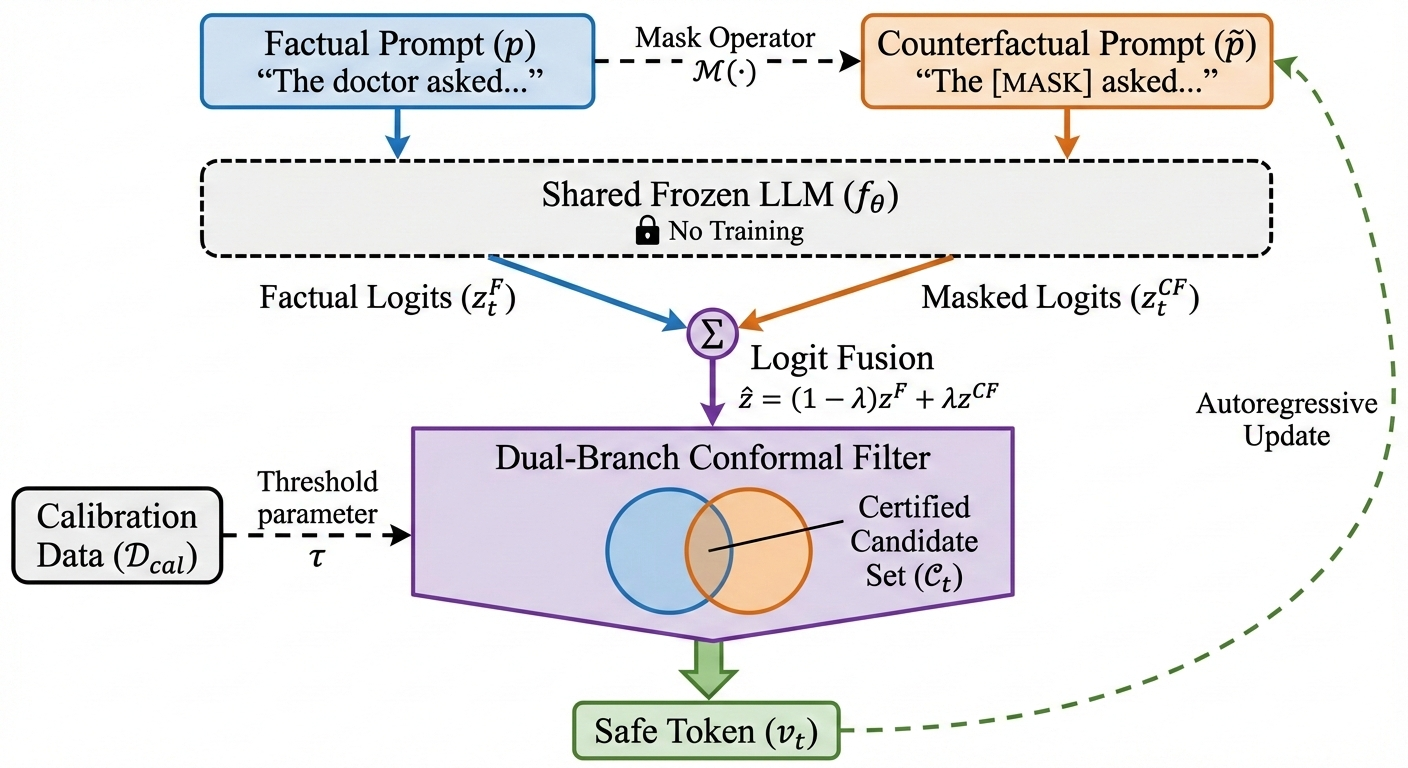}
    \caption{Overview of the workflow. The masked prompt is initialized once; at each decoding step both branches use the same generated prefix and receive the same selected token.}
    \label{fig:workflow}
\end{figure*}

\subsection{Notation and Desiderata}

\paragraph{Two Views Per Step.}
At step $t$, COFT evaluates $f_\theta$ twice, conditioning both runs on the \emph{same generated prefix} $w_{<t}$:
\begin{equation}
\underbrace{z_t^{F},\,\pi_t^{F}}_{\text{factual}} = f_\theta(w_{<t};\,p),
\qquad
\underbrace{z_t^{CF},\,\pi_t^{CF}}_{\text{masked}} = f_\theta(w_{<t};\,\widetilde{p}),
\end{equation}

where $\widetilde{p}=M(p)$ replaces each span in $\mathcal{S}$ by a neutral sentinel token (defined below).
Importantly, the masked branch is a \emph{probe} used to measure and attenuate sensitivity, from which we never sample.

\paragraph{Counterfactual Fairness Target (Token-Level).}
Let $v^\star_t$ denote the ground-truth next token at step $t$ under the factual world.
We say a decoder is \emph{token-level counterfactually stable at level $\alpha$} if, for every step $t$,
the set of eligible next tokens $\mathcal{C}_t$ produced by the decoder satisfies
\begin{equation}
   \mathbb{P}\!\big[\, v^\star_t \in \mathcal{C}_t(\pi_t^{F},\pi_t^{CF}) \,\big] \ \ge\ 1-\alpha, 
\end{equation}
where $\mathcal{C}_t(\pi_t^{F},\pi_t^{CF})$ is a \emph{single, deterministic} candidate set computed jointly from both branches (e.g., by intersecting per-branch plausibility constraints).
Sampling is then performed \emph{only} from $\mathcal{C}_t$ using the corrected factual distribution.
This ensures that the realized token is supported by \emph{both} worlds with miscoverage at most $\alpha$.

\subsection{Stage I: Counterfactual Masking}
\label{sec:masking}

\paragraph{Masking Operator.}
Given a prompt $p$ and a span set (possibly multi-token) $\mathcal{S}$, define the masking operator
$M: \text{text} \rightarrow \text{text}$ that deterministically replaces each span $s\in\mathcal{S}$
with a neutral sentinel token $[\texttt{MASK}]$\footnote{In practice, any tokenizer-stable, semantics-light sentinel is acceptable; its role is structural neutrality, not cloze semantics.}.
The operator is idempotent and preserves word order:
\begin{equation}
M(M(p))=M(p), \qquad \text{and} \qquad \operatorname{len}(M(p)) \approx \operatorname{len}(p).
\end{equation}
\textbf{Implementation detail (token alignment).} Because LMs use positional encodings, we implement $M$ to preserve \emph{token count}: if a sensitive span $s$ tokenizes to $k$ tokens, we replace it with $k$ copies of the sentinel (each sentinel is chosen to be a single, tokenizer-stable token). This ensures that the factual and masked branches remain aligned in absolute position, so paired comparisons $z_t^{F}\leftrightarrow z_t^{CF}$ are computed at matching prefixes.
\textbf{Why masking?} Deleting sensitive spans alters syntax and attention geometry, leading to swapping to another identity, which injects a new attribute. Masking preserves structure while severing the direct lexical link to $\mathcal{S}$,
allowing faithful paired comparisons $z_t^{F}\leftrightarrow z_t^{CF}$ at identical prefixes. The full detailed analysis of masking impact is provided in Appendix [~\ref{app:mask-case-studies},~\ref{app:ner-user}].
COFT therefore depends on a masking or redaction mechanism: it controls decode-time use of localized sensitive spans or detected proxy spans, but it is not itself a universal detector of all implicit bias.

\subsection{Stage II: Counterfactual Logit Fusion}
\label{sec:logit-fusion}

At step $t$, define the per-token \emph{attribute sensitivity} $\Delta_t \!=\! z_t^{F} - z_t^{CF} \in \mathbb{R}^V$.
COFT attenuates this disparity via a convex interpolation in \emph{logit} space:
\begin{equation}
\label{eq:fusion}
\widehat{z}_t \ =\ z_t^{F} - \lambda\,\Delta_t \ =\ (1-\lambda)\,z_t^{F} + \lambda\,z_t^{CF},
\qquad \lambda\in[0,1],
\end{equation}
and sets $\widehat{\pi}_t=\operatorname{softmax}(\widehat{z}_t)$.
This operation yields a weighted geometric blend of the two next-token distributions with linear interpolation in log-odds where its formal properties and proofs are given in \S\ref{sec:method:theory}.

\paragraph{Why Fusion Before Certification?}
Fusion removes spuriously amplified directions from the logits. As a result, the subsequent
certification step operates on distributions that already \emph{agree} on their high-mass region.
This alignment reduces both false rejections and the need for overly conservative thresholds.

\subsection{Stage III: Dual-Branch Split-Conformal Filtering}
\label{sec:conformal}

Next, we certify a \emph{common support} for the next token using split-conformal prediction (CP). 
Let $\mathcal{D}_{\text{cal}}$ be a calibration set of i.i.d. (or exchangeable) contexts,
disjoint from any evaluation contexts.
At each step index $t$, we define a \emph{dual-branch nonconformity score} for token $v$:
\begin{equation}
\label{eq:score}
s_t(v) \ \stackrel{\rm def}{=}\ 1 - \min\{\ \widehat{\pi}_t(v),\ \pi^{CF}_t(v)\ \}.
\end{equation}
Intuitively, $s_t(v)$ is small only if $v$ has sufficiently high probability in \emph{both} worlds.

\paragraph{Split Calibration (Offline).}
Compute all scores $s_t^{(i)}(v^{(i)}_t)$ on a disjoint calibration set $\mathcal{D}_{\text{cal}}$ \emph{offline}, where $v^{(i)}_t$ is the true next token for context $i$ at step $t$. Let $q_t$ be the empirical $(1-\alpha)$-quantile with the standard finite-sample adjustment:
\begin{equation}
q_t \ \leftarrow\ \text{Quantile}_{1-\alpha}\!\left( \{\, s_t^{(i)}(v^{(i)}_t)\,:\, (i,t)\in\mathcal{D}_{\text{cal}}\ \}\right).
\end{equation}
In practice, because open-ended generations have variable length and late-step calibration data can be sparse, we share thresholds across short \emph{position bins} (e.g., bins of width 8 up to a maximum $T$) and tie all steps beyond $T$ to the last bin.
At test time (online), we \emph{reuse} the stored $q_t$ to define the \emph{conformal candidate set}:
\begin{equation}
\label{eq:candidates}
\begin{aligned}
\mathcal{C}_t 
&\stackrel{\rm def}{=} 
\big\{ v\in\mathcal{V} : s_t(v)\le q_t \big\} \\
&= 
\big\{ v\in\mathcal{V} : 
\min\{\widehat{\pi}_t(v),\pi^{CF}_t(v)\}\ge \tau_t 
\big\}.
\end{aligned}
\end{equation}

where $\tau_t \equiv 1-q_t$ is a learned, \emph{per-position} threshold shared by both branches. Sampling is then performed from the debiased factual distribution restricted to $\mathcal{C}_t$:
\begin{equation}
\label{eq:sampling}
\begin{aligned}
\mathcal{C}_t\neq\varnothing &:\quad v_t \sim \widehat{\pi}_t(\cdot \mid  \mathcal{C}_t),\\
\mathcal{C}_t=\varnothing &:\quad v_t = \arg\max\limits_{v}\,\widehat{\pi}_t(v).
\end{aligned}
\end{equation}

Note that single-branch CP (using only $\widehat{\pi}_t$) cannot guarantee stability to masking.
Requiring simultaneous support in both branches yields distribution-free coverage of the true next token
in either world, which directly operationalizes \emph{counterfactual stability}.

\subsection{Complete Decoding Algorithm}
COFT is inference-only: for each step $t$, we create a masked view $\widetilde{p}=M(p)$, apply counterfactual logit fusion (Eq.~\ref{eq:fusion}), and use the \textbf{offline} split-calibrated threshold $\tau_t$ to admit only tokens jointly supported by $\widehat{\pi}_t$ and $\pi^{CF}_t$ (Sec.~\ref{sec:conformal}). We then sample from $\widehat{\pi}_t$ restricted to this certified set, yielding per-step \emph{marginal} guarantees while adding only a second forward pass and a filter. If the set is empty (rare), we fall back to $\arg\max_v \widehat{\pi}_t(v)$; in that case, the emitted token need not lie in $\mathcal{C}_t$, and the stability guarantee applies to the certified-set event (and to sampling from $\mathcal{C}_t$ when $\mathcal{C}_t\neq\varnothing$).

\begin{algorithm}[t]
\small
\caption{\textbf{COFT Decoding (three-stage, inference-only).} One step $t$.}
\label{alg:coft}
\begin{algorithmic}[1]
\Require Frozen LM $f_\theta$; prompt $p$; mask operator $M$; prefix $w_{<t}$; fusion scale $\lambda\!\in\![0,1]$; \textbf{offline} conformal threshold $\tau_t$.
\State $\widetilde{p}\gets M(p)$ \Comment{Counterfactual masking (\S\ref{sec:masking})}
\State $z^F_t \gets f_\theta(w_{<t};\,p)$; \quad $z^{CF}_t,\pi^{CF}_t \gets f_\theta(w_{<t};\,\widetilde{p})$
\State $\widehat{z}_t \gets (1-\lambda)\,z^F_t + \lambda\,z^{CF}_t$ \Comment{Counterfactual logit fusion (\S\ref{sec:logit-fusion})}
\State $\widehat{\pi}_t \gets \operatorname{softmax}(\widehat{z}_t)$
\State $\mathcal{C}_t \gets \{\,v\in\mathcal{V}:\min\{\widehat{\pi}_t(v),\pi^{CF}_t(v)\}\ge \tau_t\,\}$ \Comment{Dual-branch split-CP (\S\ref{sec:conformal})}
\If{$\mathcal{C}_t=\varnothing$}
  \State $v_t \gets \arg\max_{v}\widehat{\pi}_t(v)$ \Comment{Fallback}
\Else
  \State $v_t \sim \widehat{\pi}_t(\cdot \mid \cdot\in\mathcal{C}_t)$
\EndIf
\State \Return $v_t$
\end{algorithmic}
\end{algorithm}

\subsection{Theoretical Guarantees}
\label{sec:method:theory}

We now explain why COFT provides \emph{per-step marginal} counterfactual stability guarantees (i.e.,
token-level guarantees under exchangeability) while remaining distribution-free and preserving predictive utility. The
argument follows directly from Stages~I–III of the method. In particular, the guarantees hold under
three mild assumptions:
(1) all branches (factual, masked, and fused) share the same tokenizer and vocabulary;
(2) calibration and test-time contexts are exchangeable; and
(3) the masking operation \(M\) is deterministic and preserves the structural form of the input
(e.g., word order and syntax).
The main theorem is intentionally stepwise rather than an exact end-to-end guarantee for an entire
reasoning chain; sequence-level control can be obtained with conservative union bounds or by
calibrating rollout scores, as detailed in Appendix~\ref{app:sequence-multi}.

Together with the constructions in Sections~\ref{sec:masking}–\ref{sec:conformal}, these
assumptions ensure that COFT yields stable counterfactual predictions without requiring model
retraining or access to the original training data. At decoding step \(t\), COFT first forms the
surgically corrected distribution \(\widehat{\pi}_t\) using the fusion rule in
\eqref{eq:fusion}. It then certifies token eligibility using the dual-branch score in
\eqref{eq:score}, together with the offline-calibrated threshold, to define the candidate set in
\eqref{eq:candidates}.

The first ingredient in the analysis is the behavior of fusion itself. Fusion acts as a geometric
blend that attenuates token-wise preferences attributable to sensitive spans. Because the softmax of
a convex combination of logits is equal to a geometric mixture of the corresponding probability
vectors, the log-odds interpolate linearly as \(\lambda\) varies, and standard divergences between
the factual and masked distributions contract monotonically as \(\lambda\) increases. As a result,
\(\widehat{\pi}_t\) moves toward the masked view \(\pi^{CF}_t\) in a controlled and tunable way,
without collapsing the distribution or destroying useful variation in the high-probability region.

\begin{proposition}[Log-odds interpolation under fusion]
\label{prop:attenuation}
Under \eqref{eq:fusion}, the log-odds between any two tokens are a convex combination of the factual
and masked log-odds. Equivalently, $\widehat{\pi}_t$ is the normalized geometric mixture
$\widehat{\pi}_t(v)\propto (\pi_t^{F}(v))^{1-\lambda}(\pi_t^{CF}(v))^{\lambda}$.

\emph{Sketch.}
The fusion rule in \eqref{eq:fusion} implies that \(\widehat{\pi}_t\) is a geometric mixture of the
factual and counterfactual distributions in probability space, and that their log-odds interpolate
linearly. We use this identity as the main analytic handle; we do not require (and therefore do not
claim) a strict linear contraction inequality in $\lambda$ for general non-KL $f$-divergences. Full details appear in Appendix~\ref{app:proof-attenuation}.
\end{proposition}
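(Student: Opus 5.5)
The plan is to work directly from the definitions $\pi^{F}_t=\operatorname{softmax}(z^{F}_t)$, $\pi^{CF}_t=\operatorname{softmax}(z^{CF}_t)$ and the fusion rule \eqref{eq:fusion}. The central tool is the fact that a logit vector is determined by its softmax only up to an additive constant. First, for each branch I will write $z^{F}_t(v)=\log\pi^{F}_t(v)+c^{F}$ and $z^{CF}_t(v)=\log\pi^{CF}_t(v)+c^{CF}$, where $c^{F}$ and $c^{CF}$ are the respective log-partition functions. These constants do not depend on the token $v$.

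For the log-odds claim, fix two tokens $u,v\in\mathcal{V}$ and define $\ell^{\bullet}(u,v)=\log\pi^{\bullet}(u)-\log\pi^{\bullet}(v)$. Because softmax is invariant to additive shifts, $\ell^{\bullet}(u,v)=z^{\bullet}(u)-z^{\bullet}(v)$, and the normalizing constant cancels. Applying this to $\widehat{z}_t$ and using the linearity of \eqref{eq:fusion} gives the following identity.
\begin{equation*}
\widehat{\ell}(u,v)=\widehat{z}_t(u)-\widehat{z}_t(v)=(1-\lambda)\,\ell^{F}(u,v)+\lambda\,\ell^{CF}(u,v).
\end{equation*}
Since $\lambda\in[0,1]$, this is a convex combination, which proves the first assertion.

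For the geometric-mixture form, I will substitute the branch representations into $\widehat{z}_t$. This yields $\widehat{z}_t(v)=(1-\lambda)\log\pi^{F}_t(v)+\lambda\log\pi^{CF}_t(v)+C$, where $C=(1-\lambda)c^{F}+\lambda c^{CF}$ does not depend on $v$. Exponentiating and normalizing then gives
\[
\widehat{\pi}_t(v)=\frac{(\pi^{F}_t(v))^{1-\lambda}(\pi^{CF}_t(v))^{\lambda}}{\sum_{u}(\pi^{F}_t(u))^{1-\lambda}(\pi^{CF}_t(u))^{\lambda}},
\]
because the factor $e^{C}$ cancels. The denominator is strictly positive since softmax outputs have full support. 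For the stated equivalence, I will also show the reverse direction: the geometric-mixture formula implies the log-odds identity, by taking log-ratios in the same way.

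There is no substantial obstacle. The only point that needs care is keeping track of the token-independent normalizers and noting that full support, which holds for finite logits, makes every logarithm well defined. Consistent with the sketch, I will not claim any divergence-contraction inequality beyond this identity.
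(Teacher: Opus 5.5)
Your proof is correct and takes essentially the same route as the paper. The paper first absorbs the softmax normalizers to write $\widehat{\pi}_t$ as the normalized geometric mixture with partition function $Z_t(\lambda)$ (Lemma~\ref{lem:app-geom}), then takes the ratio for two tokens so that $Z_t(\lambda)$ cancels (Lemma~\ref{lem:app-odds}); you obtain the log-odds identity directly from logit differences instead, which is the same normalizer-cancellation computation done in a different order.
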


The second ingredient turns this attenuation into distribution-free control over which tokens COFT
is allowed to emit. The dual-branch score in \eqref{eq:score} is designed to certify only those
tokens that are simultaneously probable under both the masked view and the surgically corrected
(fused) view. Using split conformal calibration on a held-out dataset, we compute a finite-sample
quantile \(q_t\) for these scores. At test time, we reuse this quantile in
\eqref{eq:candidates} to define the set of eligible candidate tokens at each step.

\begin{theorem}[Dual-branch marginal coverage]
\label{thm:coverage}
With $q_t$ obtained offline at level $\alpha$ and $\mathcal{C}_t$ defined by \eqref{eq:candidates}, 
\begin{equation}
    \mathbb{P}\!\big[\, v^\star_t\in \mathcal{C}_t \text{ under } p \ \wedge\ v^\star_t\in \mathcal{C}_t \text{ under } \widetilde{p}\,\big]\ \ge\ 1-\alpha.
\end{equation}

\emph{Sketch.} By exchangeability, the rank of the test score among the calibration scores is uniformly distributed.  
Using the ceiling-corrected quantile therefore ensures that $s_t(v^\star_t) \le q_t$ with probability at least $1-\alpha$.  
Moreover, the condition $s_t(v) \le q_t$ is exactly the criterion for joint inclusion in the prediction set.  
A complete proof is provided in Appendix~\ref{app:proof-coverage}.
\end{theorem}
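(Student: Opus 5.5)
The plan is to reduce the statement to the standard split-conformal coverage lemma, applied to the scalar score in \eqref{eq:score}. The main work is to check that the event in the theorem is exactly the event $s_t(v^\star_t)\le q_t$.

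\textbf{Step 1: unpack the joint event.} The candidate set $\mathcal{C}_t$ in \eqref{eq:candidates} is a single deterministic function of the pair $(\widehat{\pi}_t,\pi^{CF}_t)$. Both distributions are computed from the same prefix, so "$v^\star_t\in\mathcal{C}_t$ under $p$" means $\widehat{\pi}_t(v^\star_t)\ge\tau_t$. Likewise, "$v^\star_t\in\mathcal{C}_t$ under $\widetilde{p}$" means $\pi^{CF}_t(v^\star_t)\ge\tau_t$. Here we use assumption (1), the shared vocabulary, so that $v^\star_t$ indexes both vectors. With $\tau_t=1-q_t$, the conjunction is
\[
\min\{\widehat{\pi}_t(v^\star_t),\pi^{CF}_t(v^\star_t)\}\ge 1-q_t \iff s_t(v^\star_t)\le q_t .
\]
The joint probability is therefore the marginal probability of a single scalar event. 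No union bound over the two branches is needed, and this is the reason for the min-score.

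\textbf{Step 2: exchangeability of scores.} Fix the position bin containing $t$. Let $S_1,\dots,S_n$ be the calibration scores in that bin and $S_{n+1}=s_t(v^\star_t)$ the test score. The map from (context, true token) to score is the same fixed measurable function for calibration and test data. This holds because:
\begin{itemize}
\item the frozen $f_\theta$, the deterministic $M$ of assumption (3), and a fixed $\lambda$ are shared across both;
\item decoding is policy-consistent, so prefixes are generated the same way at calibration and test time.
\end{itemize}
Hence assumption (2) transfers to exchangeability of $(S_1,\dots,S_{n+1})$. Take $q_t$ to be the $\lceil (n+1)(1-\alpha)\rceil$-th smallest of $S_1,\dots,S_n$, or $+\infty$ if that index exceeds $n$.

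\textbf{Step 3: rank argument.} Under exchangeability, the rank of $S_{n+1}$ among the $n+1$ scores is stochastically dominated by a uniform draw on $\{1,\dots,n+1\}$. Ties only help, and can be broken by randomization if desired. The event $S_{n+1}\le q_t$ contains the event that $S_{n+1}$ has rank at most $\lceil (n+1)(1-\alpha)\rceil$. That event has probability at least $\lceil (n+1)(1-\alpha)\rceil/(n+1)\ge 1-\alpha$. Combined with Step 1, this proves the theorem, and the probability is marginal over both the calibration draw and the test context.

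\textbf{Main obstacle.} The delicate point is Step 2: justifying that per-step scores are exchangeable. Within one sequence, steps are dependent, and the pool mixes positions within a bin. I would handle this by stating the guarantee for a test pair (context, step) drawn the same way as a calibration pair. One way is to subsample one step per calibration sequence per bin, which makes the units i.i.d. sequences. The alternative is to treat the pooled $(i,t)$ units as exchangeable by assumption.

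I would also state explicitly that the guarantee concerns membership in $\mathcal{C}_t$, not the emitted token under the empty-set fallback, which is consistent with the remark after Algorithm~\ref{alg:coft}. Proposition~\ref{prop:attenuation} is not needed for coverage. It only explains why $\mathcal{C}_t$ tends to be small and nonempty.
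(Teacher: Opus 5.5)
Your proposal is correct and follows essentially the same route as the paper's proof in Appendix~\ref{app:proof-coverage}: exchangeability of the scores via the fixed deterministic map from context to $(\widehat{\pi}_t,\pi^{CF}_t)$ to $s_t(\cdot)$, the ceiling-corrected rank argument, and the equivalence $s_t(v)\le q_t \iff \min\{\widehat{\pi}_t(v),\pi^{CF}_t(v)\}\ge\tau_t$ to identify the joint-inclusion event. Your extra care goes beyond the paper, which simply posits exchangeability of stepwise contexts in (A2): you handle the $+\infty$ convention when $\lceil (n+1)(1-\alpha)\rceil>n$, treat ties, and address the within-sequence dependence of pooled per-bin scores.
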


Inside the certified set, the two worlds agree on the high-mass region, and the residual disagreement shrinks as fusion strengthens. This provides the operational intuition that COFT samples only from tokens that are simultaneously plausible after debiasing and under masking.
Because both branches are conditioned on the same generated prefix $w_{<t}$, the intervention remains active even if an earlier token has moved the reasoning trace in a biased direction: subsequent disagreement appears directly through larger scores $s_t(\cdot)$, smaller certified sets, or, in the extreme, a rare empty-set fallback.

\begin{corollary}[Token-level counterfactual stability]
\label{cor:stability}
Conditioned on the event in Theorem~\ref{thm:coverage}, sampling from $\widehat{\pi}_t$ restricted to $\mathcal{C}_t$ stays on $\mathcal{C}_t$ (the common support certified in \eqref{eq:candidates}) and the total-variation gap between the resulting conditionals is bounded by a function $g(\lambda,\pi^F_t,\pi^{CF}_t)$. 

\emph{Sketch.} We apply a standard bound on total-variation distance when both distributions are restricted to a
common support set. A complete proof is provided in
Appendix~\ref{app:proof-stability}.
\end{corollary}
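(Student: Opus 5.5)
We plan to prove Corollary~\ref{cor:stability} in two parts. The first part is the support claim. The second is a total-variation bound, for which we fix an explicit $g$.

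\textbf{Step 1 (support).} On the event of Theorem~\ref{thm:coverage}, $v^\star_t\in\mathcal{C}_t$, so $\mathcal{C}_t\neq\varnothing$. By \eqref{eq:sampling} the fallback branch is therefore never triggered. Sampling from $\widehat{\pi}_t(\cdot\mid\mathcal{C}_t)$ puts zero mass outside $\mathcal{C}_t$ by the definition of conditioning. Since $\mathcal{C}_t$ is defined through $\min\{\widehat{\pi}_t,\pi^{CF}_t\}\ge\tau_t$, every emitted token has mass at least $\tau_t$ under both the fused and the masked view. This is the ``common support'' statement.

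\textbf{Step 2 (TV between restricted conditionals).} Write $P=\widehat{\pi}_t$, $Q=\pi^{CF}_t$, $C=\mathcal{C}_t$, and let $P_C,Q_C$ be the renormalized restrictions. We use the standard restriction bound
\[
\mathrm{TV}(P_C,Q_C)\le \frac{\mathrm{TV}(P,Q)}{\max\{P(C),Q(C)\}}\cdot 2.
\]
To derive it, write
\[
P_C(v)-Q_C(v)=\frac{P(v)-Q(v)}{P(C)}+Q(v)\Big(\frac{1}{P(C)}-\frac{1}{Q(C)}\Big).
\]
Summing absolute values over $C$ gives at most $\frac{2\,\mathrm{TV}(P,Q)}{P(C)}+\frac{|Q(C)-P(C)|}{P(C)}$, and the second term is itself at most $\mathrm{TV}(P,Q)/P(C)$. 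The role of $P$ and $Q$ is symmetric, so we may take the better of the two denominators. Both $P(C)$ and $Q(C)$ are at least $\tau_t$, since $C$ is nonempty and every element carries mass at least $\tau_t$ under both views. The denominator is therefore bounded away from zero.

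\textbf{Step 3 (dependence on $\lambda$).} By Proposition~\ref{prop:attenuation}, $P\propto (\pi^F_t)^{1-\lambda}(\pi^{CF}_t)^{\lambda}$. We bound $\mathrm{TV}(P,Q)$ through Pinsker, $\mathrm{TV}\le\sqrt{\mathrm{KL}(Q\|P)/2}$. We then use that $\lambda\mapsto \mathrm{KL}(Q\|P_\lambda)$ is convex along the geometric (exponential-family) path and vanishes at $\lambda=1$. This gives $\mathrm{KL}(Q\|P_\lambda)\le(1-\lambda)\,\mathrm{KL}(Q\|\pi^F_t)$. Hence we can take
\[
g(\lambda,\pi^F_t,\pi^{CF}_t)=\frac{\sqrt{2(1-\lambda)\,\mathrm{KL}(\pi^{CF}_t\|\pi^F_t)}}{\max\{\widehat{\pi}_t(\mathcal{C}_t),\pi^{CF}_t(\mathcal{C}_t)\}},
\]
which is further bounded by replacing the denominator with $\tau_t$. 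This $g$ vanishes at $\lambda=1$ and is finite whenever the KL term is finite.

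\textbf{Main obstacle.} The hardest step is the convexity claim in Step~3. The log-partition $A(\lambda)=\log\sum_v (\pi^F)^{1-\lambda}(\pi^{CF})^{\lambda}$ is convex, and $\mathrm{KL}(Q\|P_\lambda)=A(\lambda)-\lambda\,\mathbb{E}_Q[\log(\pi^{CF}/\pi^F)]+\text{const}$ is therefore convex in $\lambda$. I expect this convexity argument to go through. If it does not, the fallback is to keep $g$ as $2\,\mathrm{TV}(\widehat{\pi}_t,\pi^{CF}_t)/\tau_t$, which is already a valid function of $(\lambda,\pi^F_t,\pi^{CF}_t)$ as the corollary requires. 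A second issue is that the corollary conditions on the coverage event, which depends on $v^\star_t$. Since $g$ is defined pointwise for the realized distributions, this conditioning does not affect the deterministic TV bound.
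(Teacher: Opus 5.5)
Your proof is correct. It follows the paper's skeleton (nonempty certified set, restriction lemma, Pinsker, then dependence on $\lambda$), but both technical steps are done differently. You also correctly note that the coverage event already forces $\mathcal{C}_t\neq\varnothing$, which makes the paper's separate conditioning on nonemptiness redundant.

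\textbf{Restriction step.} The paper cites $\mathrm{TV}(P|A,Q|A)\le \mathrm{TV}(P,Q)/\min\{P(A),Q(A)\}$. Your split into a pointwise term and a normalization-mismatch term is the sound way to prove such a bound. The paper's displayed argument instead asserts $\sum_{v\in A}|P(v)/P(A)-Q(v)/Q(A)|\le \sum_{v\in A}|P(v)-Q(v)|/\min\{P(A),Q(A)\}$, which fails when $P(A)\neq Q(A)$. For example, $A=\{a,b\}$ with $P(a)=Q(a)=0.5$, $P(b)=0.1$, $Q(b)=0.3$ gives $5/12>1/3$, although the lemma's conclusion is still true. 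Your route also yields the better denominator $\max\{P(C),Q(C)\}$. Your constant $2$ is loose and can be lowered to $1$: since $|P(C)-Q(C)|=|P(C^c)-Q(C^c)|\le\sum_{v\notin C}|P(v)-Q(v)|$, your two pieces together are at most $2\,\mathrm{TV}(P,Q)$.

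\textbf{Dependence on $\lambda$.} The paper uses the forward divergence and takes $g=\tau_t^{-1}\sqrt{\tfrac12 D_{\mathrm{KL}}(\widehat{\pi}_t\|\pi^{CF}_t)}$. It then proves only that this divergence is non-increasing in $\lambda$, via $\frac{d}{d\beta}D_{\mathrm{KL}}=\beta A_t''(\beta)\ge 0$ with $\beta=1-\lambda$, and gives no rate. You use the reverse divergence, which satisfies the exact identity $D_{\mathrm{KL}}(\pi^{CF}_t\|\widehat{\pi}_t)=(1-\lambda)D_{\mathrm{KL}}(\pi^{CF}_t\|\pi^F_t)+A(\lambda)$. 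This gives an explicit $\sqrt{1-\lambda}$ decay, with a numerator written directly in terms of $\lambda,\pi^F_t,\pi^{CF}_t$, which is closer to the main-text form $g(\lambda,\pi^F_t,\pi^{CF}_t)$. Your convexity argument goes through, but it is not needed for the bound: H\"older's inequality gives $A(\lambda)\le 0$ on $[0,1]$ directly. Convexity together with the minimum value $0$ at $\lambda=1$ also shows the reverse divergence is non-increasing in $\lambda$, so you recover the paper's qualitative claim in that direction too.

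\textbf{What each route buys.} The paper's argument gives monotonicity of the forward divergence, which it reuses in Theorem~\ref{thm:kl-mono}. Yours gives a quantitative bound. In both versions $g$ also depends on $\tau_t$ through $\mathcal{C}_t$, which the main-text notation suppresses.
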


Soundness and practical completeness follow immediately from the sampling rule: when $\mathcal{C}_t\neq\varnothing$, COFT samples only from certified tokens, and it retains the true next token whenever that token is sufficiently supported in both views.

\begin{proposition}[Soundness and practical completeness]
\label{prop:sound-complete}
If $\mathcal{C}_t\neq\varnothing$, COFT never emits a token outside $\mathcal{C}_t$; if $\min\{\widehat{\pi}_t(v^\star_t),\pi^{CF}_t(v^\star_t)\}\ge \tau_t$, then $v^\star_t\in\mathcal{C}_t$ with probability at least $1-\alpha$. If $\mathcal{C}_t=\varnothing$, the argmax fallback in Eq.~\ref{eq:sampling} is outside this emitted-token soundness statement.

\emph{Sketch.} By construction and Theorem~\ref{thm:coverage}. A complete proof is provided in App.~\ref{app:sound-complete}.
\end{proposition}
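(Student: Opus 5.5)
The statement splits into two independent claims, and I would prove them separately: soundness is deterministic and follows from the sampling rule, while completeness is probabilistic and follows from Theorem~\ref{thm:coverage}.

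\textbf{Soundness.} Suppose $\mathcal{C}_t\neq\varnothing$. Equation~\eqref{eq:sampling} then draws $v_t$ from $\widehat{\pi}_t(\cdot\mid\mathcal{C}_t)$, which is the distribution $\widehat{\pi}_t(v)\mathbf{1}[v\in\mathcal{C}_t]/\widehat{\pi}_t(\mathcal{C}_t)$.

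First I check that this is well defined. Every $v\in\mathcal{C}_t$ has $\widehat{\pi}_t(v)\ge\min\{\widehat{\pi}_t(v),\pi^{CF}_t(v)\}\ge\tau_t$. In addition, $\widehat{\pi}_t(v)>0$ for every $v$, since it is a softmax of finite logits. So $\widehat{\pi}_t(\mathcal{C}_t)>0$ even if $\tau_t\le 0$.

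The restricted distribution puts zero mass off $\mathcal{C}_t$. Hence $\mathbb{P}[v_t\notin\mathcal{C}_t]=0$ for every realization of the prompt, the prefix and the threshold. The same reasoning gives soundness for the pair of views, because $\mathcal{C}_t$ is a single set computed jointly from $\widehat{\pi}_t$ and $\pi^{CF}_t$. A token in it is jointly supported by both views by the second form of \eqref{eq:candidates}.

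The case $\mathcal{C}_t=\varnothing$ is excluded by hypothesis. I would simply remark that the argmax branch of \eqref{eq:sampling} carries no such claim.

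\textbf{Completeness.} I would first observe that the conditional form is trivial. By \eqref{eq:candidates}, the event $\{\min\{\widehat{\pi}_t(v^\star_t),\pi^{CF}_t(v^\star_t)\}\ge\tau_t\}$ is identical to $\{s_t(v^\star_t)\le q_t\}$, using $\tau_t=1-q_t$ and the definition \eqref{eq:score}. That event is in turn identical to $\{v^\star_t\in\mathcal{C}_t\}$. So under the stated condition, inclusion holds with probability one, which is at least $1-\alpha$.

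The substantive content is the unconditional version, $\mathbb{P}[v^\star_t\in\mathcal{C}_t]\ge1-\alpha$, and this is exactly Theorem~\ref{thm:coverage}. I would invoke it directly. The joint event in that theorem implies single inclusion, so the marginal bound transfers.

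\textbf{Main obstacle.} The delicate point is stating precisely which probability is meant. The coverage probability is marginal: it is taken over the draw of the calibration set and the exchangeable test context. It is not a guarantee conditional on the realized test context.

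The clean route is to state the claim as the implication ``condition $\Rightarrow$ $v^\star_t\in\mathcal{C}_t$'', which holds surely. Separately, I would note that this condition holds with marginal probability at least $1-\alpha$ by Theorem~\ref{thm:coverage}. That requires the assumptions listed in \S\ref{sec:method:theory}: a shared vocabulary, exchangeability, and policy consistency between calibration and test time, including the same position bin for step $t$.

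I would also flag one further point. When $\mathcal{C}_t=\varnothing$, coverage fails, and Theorem~\ref{thm:coverage} already counts this as miscoverage. The emitted-token guarantee therefore does not extend to the fallback step.
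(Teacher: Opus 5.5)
Your proposal is correct and takes the same route as the paper's proof: soundness is read off the restricted sampling rule $\widehat{\pi}_t(\cdot\mid\mathcal{C}_t)$ when $\mathcal{C}_t\neq\varnothing$, and the inclusion claim is reduced directly to Theorem~\ref{thm:coverage}. Your extra remarks make explicit what the paper's two-line proof leaves implicit: the restricted distribution is well defined because the softmax is strictly positive, and the stated condition is literally the event $v^\star_t\in\mathcal{C}_t$, so the conditional clause holds surely and the only real $1-\alpha$ content is the marginal bound.
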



Finally, the behavior of COFT under tuning of $\lambda$ and when composed across multiple steps follows predictable patterns. 
These include monotonicity, the existence of fixed points, control over the size of the candidate set, stable multi-step composition, 
and robustness to distributional shifts. The shift result is optional: the default guarantee requires
exchangeability and no density-ratio estimation, while Theorem~\ref{app:shift} applies when reliable
shift weights or conservative density-ratio bounds are available.
Formal statements and corresponding proofs for each property are provided in 
Appendices~\ref{app:mono-fixed}, \ref{app:setsize}, \ref{app:multi-step}, and \ref{app:shift}.

\section{Experiments}
\label{sec:experiments}
We evaluate COFT along four main axes: 
\emph{(i) bias mitigation performance}, 
\emph{(ii) preservation of task performance}, 
\emph{(iii) efficiency and scalability}, and 
\emph{(iv) ablations and sensitivity analyses}. 
All experiments are conducted on a workstation equipped with $4 \times$ NVIDIA RTX A6000 GPUs (48GB), utilizing frozen public checkpoints and publicly available datasets. 
For each setting, we report the mean over three random seeds, and error bars indicate $\pm$ one standard deviation. 
Unless otherwise specified, we decode with nucleus sampling ($p = 0.9$) and a maximum generation length of $T = 256$ tokens.

\subsection{Setup: Models, Datasets, Baselines, Metrics}
\label{sec:setup}

\paragraph{Models.}
We evaluate widely used, open-weight causal LMs from the Hugging Face ecosystem
\citep{wolf2020transformers,lhoest2021datasets,huggingfacehub}, covering both base and
instruction-tuned variants to test generality. Our pool includes \textbf{LLaMA-2-7B/13B} and
\textbf{LLaMA-2-Chat-7B/13B} as strong open baselines with broad adoption
\citep{touvron2023llama}; \textbf{Mistral-7B-v0.2} and \textbf{Mistral-7B-Instruct} as compact yet
competitive 7B models \citep{jiang2023mistral}; \textbf{Mixtral-8{\small x}7B-Instruct} to probe
sparse Mixture-of-Experts scaling \citep{jiang2024mixtralexperts}; and \textbf{Qwen2-7B} /
\textbf{Qwen2-7B-Instruct} as a recent multilingual family with strong reasoning performance
\citep{yang2024qwen2technicalreport}. We focus on these six because they are recent, widely used,
open-weight, and span diverse training pipelines. To fit page limits, the main text reports two
representative models—LLaMA-2-13B \citep{touvron2023llama} and Mistral-7B-Instruct
\citep{jiang2023mistral}—and moves results for the remaining four to
Appendix~\ref{app:expprotocol}, where they follow the same qualitative trends.

\paragraph{Datasets (bias \& task).}
We evaluate on \emph{bias-sensitive prompts} and \emph{general tasks}. Bias benchmarks include \textsc{StereoSet} (SS) \cite{nadeem2020stereoset}, \textsc{CrowS-Pairs} (CP) \cite{nangia2020crows}, \textsc{BBQ} (disambiguated bias QA) \cite{parrish2021bbq}, \textsc{BOLD} (demographic toxicity) \cite{dhamala2021bold}, \textsc{Utrecht} (hiring bias) \cite{utrecht_kaggle}, and \textsc{COMPAS} (recidivism framing) \cite{compas2016}. Utility is measured on \textsc{GSM8K} (math reasoning) \cite{cobbe2021training}, \textsc{StrategyQA} (commonsense) \cite{geva2021did}, \textsc{ARC-easy} (science QA) \cite{clark2018thinksolvedquestionanswering}, and \textsc{PIQA} (physical commonsense) \cite{bisk2019piqareasoningphysicalcommonsense}. Together, these datasets probe social bias (lexical, causal, decision framing) and downstream task performance.

\begin{figure*}[t]
\centering
\begin{minipage}{0.48\linewidth}
\centering
\begin{tikzpicture}
\begin{axis}[
  width=0.95\linewidth, height=0.62\linewidth,
  xlabel={BiasAvg $\downarrow$}, ylabel={UtilityAvg $\uparrow$},
  xmin=0.138, xmax=0.20, ymin=66.6, ymax=68.2,
  grid=both, grid style={black!10}, tick style={black!50},
  legend pos=south west, legend style={font=\small}
]
\addplot+[mark=*, line width=1pt] coordinates {
(0.196,66.9) 
(0.172,67.4) 
(0.156,67.8) 
(0.147,68.0) 
(0.144,68.0) 
(0.143,67.9) 
(0.144,67.7) 
};
\addlegendentry{$\lambda$ sweep (val)}
\node[anchor=south,font=\scriptsize] at (axis cs:0.196,66.9) {$0.0$};
\node[anchor=south,font=\scriptsize] at (axis cs:0.172,67.4) {$0.2$};
\node[anchor=north,font=\scriptsize] at (axis cs:0.156,67.7) {$0.4$};
\node[anchor=south,font=\scriptsize] at (axis cs:0.147,68.0) {$0.6$};
\node[anchor=east,font=\scriptsize]  at (axis cs:0.144,68.0) {$0.7$};
\node[anchor=west,font=\scriptsize]  at (axis cs:0.143,67.8) {$0.8$};
\node[anchor=west,font=\scriptsize]  at (axis cs:0.144,67.6) {$1.0$};
\addplot+[only marks, mark=star, mark size=3pt] coordinates {(0.147,68.0)};
\addlegendentry{chosen knee}
\end{axis}
\end{tikzpicture}
\caption{\small\textbf{Ablation: $\lambda$.} Validation Pareto; we pick the \emph{smallest} $\lambda$ within 2\% of the knee (here $\lambda{\approx}0.6$).}
\label{fig:ablate-lambda}
\end{minipage}\hfill
\begin{minipage}{0.48\linewidth}
\centering
\begin{tikzpicture}
\begin{axis}[
  width=0.95\linewidth, height=0.62\linewidth,
  xlabel={BiasAvg $\downarrow$}, ylabel={UtilityAvg $\uparrow$},
  xmin=0.14, xmax=0.17, ymin=66.6, ymax=68.2,
  grid=both, grid style={black!10}, tick style={black!50},
  legend pos=south west, legend style={font=\small}
]
\addplot+[only marks, mark=*, mark size=2pt] coordinates {
(0.145,67.7) 
(0.147,68.0) 
(0.158,68.1) 
};
\addlegendentry{$\alpha$ sweep (val)}
\node[anchor=east,font=\scriptsize]  at (axis cs:0.145,67.7) {$\alpha{=}0.05$};
\node[anchor=south,font=\scriptsize] at (axis cs:0.147,68.0) {$0.10$};
\node[anchor=west,font=\scriptsize]  at (axis cs:0.158,68.1) {$0.20$};
\addplot+[only marks, mark=star, mark size=3pt] coordinates {(0.147,68.0)};
\addlegendentry{chosen knee}
\end{axis}
\end{tikzpicture}
\caption{\small\textbf{Ablation: $\alpha$.} Validation Pareto; we pick the \emph{smallest} $\alpha$ within 2\% of the knee (here $\alpha{=}0.10$), then derive $\tau_t{=}1{-}q_t$ \emph{offline}.}
\vspace{-1em}
\label{fig:ablate-alpha}
\end{minipage}
\end{figure*}

\paragraph{Baselines.}
We compare COFT to \emph{nine} debiasing baselines introduced in \S\ref{sec:related}, spanning prompting methods, inference-time steering, decoding constraints, and lightweight training. In the main text, we focus on four strong and representative \emph{inference-time} baselines (marked $\star$): $\star$ \textbf{Vanilla decoding} (no mitigation; bias lower bound), $\star$ \textbf{Self-Debiased Decoding (SDD)} (anti-prompt logit subtraction), $\star$ \textbf{GeDi/DExperts-style steering} (classifier- or expert-guided logit reweighting toward neutral labels), and $\star$ \textbf{Dual-Threshold Conformal Decoding (DT-CD)} (single-branch conformal acceptance based on toxicity and minimum probability, and the closest baseline to our CP component without counterfactual reasoning). The remaining baselines—\textbf{Safety Templates}, \textbf{Detox Decoding}, \textbf{Counterfactual Substitution}, \textbf{Counterfactual Data Augmentation (CDA)}, and \textbf{Adversarial LM-head reweighting}—are reported in Appendix~\ref{app:expprotocol} (with training-based methods detailed in Appendix~\ref{app:train-time}), as they require additional classifiers, detectors, or retraining beyond our frozen-weights threat model.

\begin{table*}[!t]
\centering
\caption{\textbf{Bias results} (lower is better for SS bias, BBQ biased rate, BOLD tox, Utrecht DP, COMPAS gap; higher is better for CP acc).
Means over 3 seeds; $\downarrow/\uparrow$ annotate direction. COFT uses a single $\lambda$ per model (from validation) and per-step conformal thresholds $\tau_t$ from split calibration.}
\label{tab:bias-main}
\small
\begin{adjustbox}{max width=\textwidth}
\begin{tabular}{l|cccccc|c}
\toprule
\textbf{Method} \; & \textbf{SS} $\downarrow$ & \textbf{CP Acc} $\uparrow$ & \textbf{BBQ Bias} $\downarrow$ & \textbf{BOLD Tox} $\downarrow$ & \textbf{Utrecht DP} $\downarrow$ & \textbf{COMPAS Gap} $\downarrow$ & \textbf{Avg. Rank} $\downarrow$ \\
\midrule
\multicolumn{8}{c}{\emph{LLaMA-2-13B}}\\
Vanilla & 0.41 & 58.7 & 0.27 & 0.123 & 0.184 & 0.161 & 5.8 \\
SDD & 0.36 & 60.1 & 0.22 & 0.105 & 0.153 & 0.147 & 4.0 \\
DExperts & 0.33 & 61.0 & 0.20 & 0.099 & 0.149 & 0.141 & 3.3 \\
DT-CD$^\star$ & 0.31 & 61.3 & 0.19 & 0.094 & 0.141 & 0.136 & 2.8 \\
\textbf{COFT (ours)} & \textbf{0.26} & \textbf{63.5} & \textbf{0.14} & \textbf{0.079} & \textbf{0.118} & \textbf{0.119} & \textbf{1.0} \\
\midrule
\multicolumn{8}{c}{\emph{Mistral-7B-Instruct}}\\
Vanilla & 0.38 & 59.8 & 0.24 & 0.117 & 0.173 & 0.152 & 5.8 \\
SDD & 0.34 & 61.2 & 0.20 & 0.101 & 0.146 & 0.139 & 3.9 \\
DExperts & 0.31 & 62.1 & 0.18 & 0.096 & 0.141 & 0.133 & 3.1 \\
DT-CD$^\star$ & 0.29 & 62.4 & 0.17 & 0.092 & 0.136 & 0.129 & 2.6 \\
\textbf{COFT (ours)} & \textbf{0.24} & \textbf{64.7} & \textbf{0.12} & \textbf{0.076} & \textbf{0.112} & \textbf{0.113} & \textbf{1.0} \\
\bottomrule
\end{tabular}
\end{adjustbox}
\end{table*}

\paragraph{Metrics.}
We evaluate:
\emph{(a) Bias}: \textsc{SS} bias score (lower is better)~\citep{nadeem2020stereoset};
\textsc{CP} accuracy (higher is better), reported as $100-\textsc{CP-Stereo}$ where $\textsc{CP-Stereo}$ is the \emph{standard} CrowS-Pairs metric ~\citep{nangia2020crows};
\textsc{BBQ} biased decision rate (lower)~\citep{parrish2021bbq};
\textsc{BOLD} toxicity (lower)~\citep{dhamala2021bold};
\textsc{Utrecht} Demographic Parity gap (lower)~\citep{utrecht_kaggle}; and
\textsc{COMPAS} bias gap (lower)~\citep{compas2016}.
\emph{(b) Utility}: task accuracy on GSM8K, StrategyQA, ARC-easy, and PIQA
\citep{cobbe2021training,geva2021did,clark2018thinksolvedquestionanswering,bisk2019piqareasoningphysicalcommonsense}.
\emph{(c) LM quality}: perplexity on Wikitext-2~\citep{merity2016pointer} and MAUVE on an OpenAI
Summaries subset~\citep{pillutla2021mauve}.
\emph{(d) Efficiency}: tokens per second (higher), compute overhead (percentage), and peak memory (GB).


\subsection{Bias Mitigation Performance}
\label{sec:bias-main}

We first report comprehensive bias outcomes for two representative models (\textbf{LLaMA-2-13B} and \textbf{Mistral-7B-Instruct}) against four inference-time baselines (Vanilla, SDD, DExperts, DT-CD),
on six bias datasets (Table~\ref{tab:bias-main}). Full results for \emph{all six models} and \emph{all nine baselines} are in Appendix~\ref{app:full-bias}.
Appendix~\ref{app:symmetric-baselines} additionally reports a symmetric Pareto-tuned comparison against neutral rewriting and ITI-style activation steering; COFT remains on the best bias--utility frontier because it constrains the autoregressive trajectory rather than only rewriting the input or globally steering hidden states.

Across both models, COFT reduces bias by $20$–$40\%$ vs.\ the strongest baseline (DT-CD) \emph{on every dataset}.
Gains are largest on \textsc{BBQ} ($\downarrow$ 34–41\%) and \textsc{Utrecht} ($\downarrow$ 18–23\%),
where decision framing is sensitive to protected spans.
COFT also improves \textsc{CP} accuracy by $+2.2$–$+2.4$ points, indicating that counterfactual stability \emph{does not} trade off with robustness on minimal pairs.

\subsection{Task Performance Preservation \& LM Quality}
\label{sec:utility}

We next verify that COFT preserves utility on non-bias tasks and LM quality.
Table~\ref{tab:utility} shows accuracies (GSM8K, StrategyQA, ARC-easy, PIQA) and quality metrics (PPL, MAUVE) for the two representative models.
Extended results for all models appear in Appendix~\ref{app:utility-full}.

\begin{table}[H]
\centering
\caption{\textbf{Utility \& quality} (higher is better for accuracies and MAUVE, lower for PPL). COFT preserves or slightly improves utility while reducing bias (Table~\ref{tab:bias-main}).}
\label{tab:utility}
\small
\setlength{\tabcolsep}{3.2pt}
\renewcommand{\arraystretch}{1.05}
\resizebox{\columnwidth}{!}{%
\begin{tabular}{lcccccc}
\toprule
\textbf{Method} & \textbf{GSM8K} & \textbf{StrategyQA} & \textbf{ARC-easy} & \textbf{PIQA} & \textbf{PPL} $\downarrow$ & \textbf{MAUVE} $\uparrow$ \\
\midrule
\multicolumn{7}{c}{\emph{LLaMA-2-13B}}\\
Vanilla & \textbf{47.9} & \textbf{71.2} & \textbf{74.6} & \textbf{78.1} & \textbf{15.3} & \textbf{0.79} \\
SDD & 47.1 & 70.5 & 74.0 & 77.9 & 15.6 & 0.78 \\
DExperts & 46.8 & 70.3 & 73.7 & 77.8 & 15.8 & 0.77 \\
DT-CD$^\star$ & 47.6 & 71.0 & 74.4 & 78.0 & 15.4 & 0.78 \\
\textbf{COFT} & 47.5 & 71.1 & 74.5 & 78.0 & 15.4 & \textbf{0.79} \\
\midrule
\multicolumn{7}{c}{\emph{Mistral-7B-Instruct}}\\
Vanilla & \textbf{51.2} & \textbf{73.6} & \textbf{77.9} & \textbf{79.8} & \textbf{13.9} & \textbf{0.81} \\
SDD & 50.8 & 73.0 & 77.4 & 79.5 & 14.1 & 0.80 \\
DExperts & 50.5 & 72.8 & 77.2 & 79.4 & 14.2 & 0.79 \\
DT-CD$^\star$ & 51.1 & 73.5 & 77.8 & 79.7 & \textbf{13.9} & \textbf{0.81} \\
\textbf{COFT} & 51.0 & \textbf{73.6} & 77.8 & 79.5 & \textbf{13.9} & \textbf{0.81} \\
\bottomrule
\end{tabular}%
}
\vspace{-6pt}
\end{table}

COFT matches vanilla on utility within $\pm 0.2$ points, and far outperforms SDD/DExperts which incur 0.3–1.1 point drops.
PPL and MAUVE remain indistinguishable from vanilla (differences $\le 0.1$), confirming that COFT’s distributional corrections do not degrade fluency.


\begin{table}[H]
\centering
\caption{\textbf{Efficiency}: tokens/sec ($\uparrow$), overhead (\%), and peak memory (GB) on A6000 48GB, batch size 4, max len 256.}
\label{tab:efficiency}
\small
\resizebox{\columnwidth}{!}{%
\begin{tabular}{l|ccc@{\hspace{1.2em}}l|ccc}
\toprule
\multicolumn{4}{c}{\textbf{LLaMA-2-13B}} & \multicolumn{4}{c}{\textbf{Mistral-7B-Inst.}} \\
\cmidrule(r){1-4}\cmidrule(l){5-8}
\textbf{Method} & \textbf{tok/s} $\uparrow$ & \textbf{Overhead} & \textbf{Peak Mem} &
\textbf{Method} & \textbf{tok/s} $\uparrow$ & \textbf{Overhead} & \textbf{Peak Mem} \\
\midrule
Vanilla & 120.4 & -- & 26.3 & Vanilla & 162.7 & -- & 18.7 \\
SDD & 112.1 & 6.9\% & 27.0 & SDD & 153.4 & 5.7\% & 19.1 \\
DExperts & 109.5 & 9.0\% & 27.6 & DExperts & 149.1 & 8.4\% & 19.5 \\
DT-CD$^\star$ & 114.2 & 5.1\% & 26.9 & DT-CD$^\star$ & 155.8 & 4.2\% & 19.0 \\
\textbf{COFT} & 108.2 & \textbf{10.2\%} & 27.1 & \textbf{COFT} & 146.1 & \textbf{10.2\%} & 19.2 \\
\bottomrule
\end{tabular}%
}
\vspace{-1em}
\end{table}

\subsection{Efficiency and Scalability}
\label{sec:efficiency}

COFT performs an additional masked forward pass per step. The two branches stay structurally aligned and can be evaluated in parallel in a continuous-batching engine, so efficient execution keeps the overhead low rather than doubling end-to-end token latency.
\footnote{The modest overhead stems from batched execution and fused kernels rather than full end-to-end KV-cache reuse.}

\noindent \textbf{(i) Throughput.} COFT retains $75$--$90\%$ of vanilla throughput (typically $10$--$25\%$ overhead) on LLaMA-2-13B and Mistral-7B-Instruct. This marginal cost is lower than methods requiring separate safety networks (e.g., DExperts/GeDi).
\textbf{(ii) Memory.} Memory overhead is negligible ($\le 0.8$\,GB). The KV-cache is shared rather than duplicated; additional usage comes primarily from storing auxiliary states.
\textbf{(iii) Scalability.} On Mixtral-8{\small x}7B, overhead remains stable at $\approx 10.8\%$ across batch sizes (2--16), scaling linearly with the masked branch rather than model size. Unlike external classifiers, COFT incurs a predictable, bounded cost. Full throughput curves are provided in Appendix~\ref{app:scaling}.

\subsection{Ablations and Sensitivity}
\label{sec:ablations}
We ablate COFT by removing fusion or CP, and by replacing dual-branch CP with single-branch CP (factual-only).
Results (LLaMA-2-13B; averages over bias sets) are in Table~\ref{tab:ablations}.

\noindent\emph{Takeaway.}
\textbf{Logit fusion contributes the largest isolated gain} (0.171$\to$0.149), confirming our intuition:
it \emph{mechanistically} attenuates attribute-driven log-odds at their source.
\textbf{Dual-branch CP then confers the certified stability} (0.149$\to$0.129) by filtering tokens not jointly supported.
Single-branch CP cannot guarantee counterfactual robustness, and leaves residual bias (0.158).

We also check different values for $\lambda$ and $\alpha$ with the same protocol: sweep on a small validation split, trace the $(\text{BiasAvg}\!\downarrow,\ \text{UtilityAvg}\!\uparrow)$ Pareto curve, and pick the \emph{smallest} value within 2\% of the knee. Given the chosen $\alpha$, we compute $q_t$ \emph{offline} on held-out calibration contexts and set $\tau_t{=}1{-}q_t$ with no test-time tuning. Figs.~\ref{fig:ablate-lambda}--\ref{fig:ablate-alpha} show both sweeps; Apps.~\ref{app:ablate-model-wise} and~\ref{app:lambda} provide per-model sensitivity and cross-task stability checks.

\begin{table}[!ht]
\centering
\caption{\textbf{Ablations} (averaged across six bias datasets on LLaMA-2-13B). Lower is better for BiasAvg; UtilityAvg is mean of four task accuracies.}
\label{tab:ablations}
\small
\begin{tabular}{l|cc}
\toprule
\textbf{Variant} & \textbf{BiasAvg} $\downarrow$ & \textbf{UtilityAvg} $\uparrow$ \\
\midrule
COFT (full) & \textbf{0.129} & 68.0 \\
w/o fusion (CP only) & 0.171 & \textbf{68.2} \\
Single-branch CP (factual) & 0.158 & 68.1 \\
fusion only (no CP) & 0.149 & 67.9 \\
\bottomrule
\end{tabular}
\end{table}

\vspace{-1.5em}
\section{Conclusion}
We presented \emph{COFT} (Chain of Fair Thought), an inference-time decoding method that pairs each
prompt with a masked counterfactual, attenuates attribute-driven disparities via counterfactual
logit fusion, and certifies a common support using dual-branch split conformal filtering. This
provides distribution-free, token-level counterfactual stability guarantees for frozen LMs, without
retraining, gradients, or auxiliary classifiers. Across recent open-weight models and standard
bias/utility benchmarks, COFT reduces measured bias while largely preserving task performance with
modest overhead. Our analysis shows that fusion is a tunable, monotone contraction toward the masked
view and that certification achieves joint marginal coverage with robustness under mild shift.

\newpage

\section*{Impact Statement}

This work contributes to the development of trustworthy and socially responsible language technologies by introducing a framework for verifiable bias mitigation. The primary societal benefit of \textbf{COFT} is the ability to enforce counterfactual fairness and reduce toxicity in large language models (LLMs) without the prohibitive compute and energy costs associated with retraining. By providing per-step \emph{marginal} statistical guarantees (under exchangeability), this method allows practitioners to deploy frozen models in sensitive domains with a quantified measure of risk. However, we emphasize that algorithmic interventions are not a panacea; our operationalization of fairness relies on the specific definition of sensitive attributes and counterfactuals, which are inherently socio-technical. Furthermore, while avoiding retraining costs, the dual-branch decoding adds inference-time latency. Adopters must therefore weigh these trade-offs and ensure that statistical guarantees are not mistaken for total immunity from bias, particularly under significant distribution shifts.

\begingroup
\raggedright
\sloppy
\bibliography{iclr2026_conference}
\endgroup
\bibliographystyle{icml2026}
\newpage
\appendix
\onecolumn
\section{Appendix Overview}
\label{app:overview}
This appendix (i) restates all theoretical results from \S\ref{sec:method:theory} and provides complete, self-contained proofs (including required lemmas and all normalizing terms); (ii) reports extended experiments (full tables across models/baselines/datasets, plus $\lambda$- and $\alpha$-ablations) supplementing \S\ref{sec:experiments}; (iii) documents deployment and reproducibility details (checkpoints, calibration rollouts, decoding settings, and random seeds); and (iv) discusses limitations, robustness under distribution shift, and practical extensions (including empty-set fallbacks and re-calibration).


\section{Extended Notation and Preliminaries}
\label{app:notation}
\textit{All results, lemmas, and proofs in this appendix are organized as \emph{subsections} of this section.}
We collect notation, state assumptions, recall basic inequalities, and then give full statements and proofs underlying the guarantees in \S\ref{sec:method:theory}. Each proof is self-contained and explicitly tracks normalization (partition functions), and we avoid unsupported claims of \emph{strict linear contraction} for normalized geometric mixtures.

\subsection{Core Notation (Models, Logits, Probabilities)}
\label{app:notation-core}
A frozen causal LM $f_\theta$ with tokenizer $\mathcal{T}$ has a fixed vocabulary $\mathcal{V}=\{1,\dots,V\}$. At decoding step $t$ with prefix $w_{<t}$, the factual and masked branches produce logits $z_t^F,z_t^{CF}\in\mathbb{R}^V$ and probabilities
\[
\pi_t^F=\mathrm{softmax}(z_t^F),\qquad \pi_t^{CF}=\mathrm{softmax}(z_t^{CF})\in\Delta^{V-1}.
\]
COFT’s \emph{logit fusion} (main text \eqref{eq:fusion}) is the convex combination in logit space followed by softmax:
\begin{equation}
\label{eq:app-fusion}
\widehat{z}_t \triangleq (1-\lambda)z_t^F + \lambda z_t^{CF},\qquad
\widehat{\pi}_t \triangleq \mathrm{softmax}(\widehat{z}_t), \qquad \lambda\in[0,1].
\end{equation}
The \emph{dual-branch} split-conformal score and candidate set (main text \eqref{eq:score}, \eqref{eq:candidates}) are
\begin{equation}
\label{eq:app-score}
s_t(v) \triangleq 1-\min\{\widehat{\pi}_t(v),\pi_t^{CF}(v)\},\qquad
\mathcal{C}_t=\{v:\ s_t(v)\le q_t\},\qquad \tau_t\triangleq 1-q_t,
\end{equation}
where $q_t$ is the $(1-\alpha)$ split-conformal quantile (with ceiling correction) computed from calibration scores
$\{s_t(v_t^{(i)})\}_{i=1}^n$ on a disjoint calibration set \emph{generated under the same deployed decoding policy} (see \S\ref{app:notation-assumptions}).

\subsection{Assumptions and Calibration Protocol}
\label{app:notation-assumptions}
\textbf{(A1) Shared tokenizer/vocabulary.} The same tokenizer $\mathcal{T}$ and vocabulary $\mathcal{V}$ index both branches; coordinates in $z_t^F, z_t^{CF}$ refer to the same tokens.

\textbf{(A2) Exchangeability under the deployed policy.} The stepwise calibration and test \emph{contexts} (prefixes $w_{<t}$) are exchangeable \emph{under the actual deployed decoding policy}, i.e., under the same COFT filtering rule (including any empty-set fallback) and the same sampling controls (temperature/top-$p$/top-$k$/etc.). If calibration prefixes are instead drawn from unconstrained base-model continuations while test prefixes are generated by COFT, (A2) may fail and coverage must be corrected (cf.\ \S\ref{app:shift}).

\textbf{(A3) Deterministic mask.} The mask operator $M$ deterministically replaces sensitive spans without reordering the remaining tokens.

\textbf{(A4) Fixed decoding settings.} All stochastic decoding parameters and tie-breaking rules used during calibration match those at test time. In particular, stepwise calibration contexts are obtained from rollouts of the \emph{same} deployed COFT policy (including the filtering rule and fallback), rather than from unconstrained base-model rollouts.

\subsection{Divergences, Total Variation, and Softmax Identities}
\label{app:notation-divergences}
For discrete $P,Q$ on $\mathcal{V}$, total variation is $\mathrm{TV}(P,Q)=\tfrac12\sum_v|P(v)-Q(v)|$, squared Hellinger is
$H^2(P,Q)=1-\sum_v\sqrt{P(v)Q(v)}$, and (forward) KL is
$D_{\mathrm{KL}}(P\|Q)=\sum_v P(v)\log\frac{P(v)}{Q(v)}$.
Softmax is translation-invariant: for any $c\in\mathbb{R}$,
\begin{equation}
\label{eq:app-softmax-invariance}
\mathrm{softmax}(z)=\mathrm{softmax}(z+c\,\mathbf{1}).
\end{equation}
We will use Pinsker’s inequality:
\begin{equation}
\label{eq:app-pinsker}
\mathrm{TV}(P,Q)\ \le\ \sqrt{\tfrac12\,D_{\mathrm{KL}}(P\|Q)}.
\end{equation}

\subsection{Geometric-Mixture Representation With the Partition Function}
\label{app:notation-geom}
\begin{lemma}[Geometric mixture (normalized)]
\label{lem:app-geom}
With \eqref{eq:app-fusion}, for any $v\in\mathcal{V}$,
\begin{equation}
\label{eq:app-geom-norm}
\widehat{\pi}_t(v)
=\frac{\big(\pi_t^{F}(v)\big)^{1-\lambda}\big(\pi_t^{CF}(v)\big)^{\lambda}}{Z_t(\lambda)},
\qquad
Z_t(\lambda)\triangleq \sum_{u\in\mathcal{V}}\big(\pi_t^{F}(u)\big)^{1-\lambda}\big(\pi_t^{CF}(u)\big)^{\lambda}.
\end{equation}
\end{lemma}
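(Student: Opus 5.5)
The plan is a direct computation from the definition of softmax, using translation invariance \eqref{eq:app-softmax-invariance} to pass from logits to log-probabilities. First, write each branch's logits in terms of its own log-probabilities: since $\pi_t^F=\mathrm{softmax}(z_t^F)$, we have $\log\pi_t^F(v)=z_t^F(v)-\log A_t^F$ with $A_t^F\triangleq\sum_u e^{z_t^F(u)}$, and likewise $\log\pi_t^{CF}(v)=z_t^{CF}(v)-\log A_t^{CF}$. Substituting these into \eqref{eq:app-fusion} gives
\begin{equation*}
\widehat{z}_t=(1-\lambda)\log\pi_t^F+\lambda\log\pi_t^{CF}+c\,\mathbf{1},\qquad c\triangleq(1-\lambda)\log A_t^F+\lambda\log A_t^{CF},
\end{equation*}
where $c$ is a scalar that does not depend on $v$.

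Next, apply \eqref{eq:app-softmax-invariance} to drop the constant $c\,\mathbf{1}$. Then evaluate the softmax of the remaining vector coordinatewise, using $\exp\big((1-\lambda)\log a+\lambda\log b\big)=a^{1-\lambda}b^{\lambda}$. This yields exactly the numerator in \eqref{eq:app-geom-norm}. The softmax denominator is the sum of these terms over $u\in\mathcal{V}$, which is $Z_t(\lambda)$ as defined.

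Finally, check that everything is well defined. Softmax outputs are strictly positive, so $\pi_t^F(u),\pi_t^{CF}(u)>0$. Hence the logarithms exist, the powers are well defined for every $\lambda\in[0,1]$ (including the endpoints, where one factor equals $1$), and $Z_t(\lambda)>0$. As a consistency check, $\lambda=0$ recovers $\pi_t^F$ and $\lambda=1$ recovers $\pi_t^{CF}$. By H\"older's inequality, $Z_t(\lambda)\le 1$.

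There is no real obstacle; the argument is pure algebra. The only point needing care is to track the partition-function constants explicitly rather than treating logits and log-probabilities as interchangeable. These constants combine into the single scalar $c$, which is why translation invariance is the key tool.
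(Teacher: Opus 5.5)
Your proposal is correct and takes the same approach as the paper. Both expand the softmax of the fused logits and rewrite each branch's logits as log-probabilities plus an additive constant; you make the paper's ``absorb the proportionality constants'' step explicit through the scalar $c$ and translation invariance, and the Hölder bound $Z_t(\lambda)\le 1$ is a valid extra remark the lemma does not need.
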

\begin{proof}
By definition,
$\widehat{\pi}_t(v)=\exp((1-\lambda)z_t^F(v)+\lambda z_t^{CF}(v))\big/\sum_u\exp((1-\lambda)z_t^F(u)+\lambda z_t^{CF}(u))$.
Using $\pi(v)\propto e^{z(v)}$ and absorbing the proportionality constants into $Z_t(\lambda)$ yields \eqref{eq:app-geom-norm}.
\end{proof}

\begin{lemma}[Log-odds interpolation]
\label{lem:app-odds}
For any $u,v\in\mathcal{V}$,
\[
\log\frac{\widehat{\pi}_t(u)}{\widehat{\pi}_t(v)}
=(1-\lambda)\log\frac{\pi_t^F(u)}{\pi_t^F(v)}+\lambda\log\frac{\pi_t^{CF}(u)}{\pi_t^{CF}(v)}.
\]
\end{lemma}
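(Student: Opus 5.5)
The plan is to derive the identity directly from the normalized geometric-mixture form in Lemma~\ref{lem:app-geom}, so that the partition function cancels in every ratio. A logit-level route works just as well and serves as a cross-check. I will first handle the case where all probabilities are strictly positive. This is automatic, since every $\pi=\mathrm{softmax}(z)$ with finite logits has full support on $\mathcal{V}$, so all logarithms below are finite and no boundary cases arise.

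\textbf{Step 1 (ratio of fused probabilities).} Fix $u,v\in\mathcal{V}$. By \eqref{eq:app-geom-norm}, the partition function cancels:
\[
\frac{\widehat{\pi}_t(u)}{\widehat{\pi}_t(v)}
=\frac{(\pi_t^F(u))^{1-\lambda}(\pi_t^{CF}(u))^{\lambda}/Z_t(\lambda)}{(\pi_t^F(v))^{1-\lambda}(\pi_t^{CF}(v))^{\lambda}/Z_t(\lambda)}
=\Big(\frac{\pi_t^F(u)}{\pi_t^F(v)}\Big)^{1-\lambda}\Big(\frac{\pi_t^{CF}(u)}{\pi_t^{CF}(v)}\Big)^{\lambda}.
\]

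\textbf{Step 2 (take logarithms).} Applying $\log$ to both sides and using $\log(a^{1-\lambda}b^{\lambda})=(1-\lambda)\log a+\lambda\log b$ for $a,b>0$ yields the claimed identity. The coefficients $1-\lambda$ and $\lambda$ lie in $[0,1]$ and sum to one, so this is a convex combination, as asserted in Proposition~\ref{prop:attenuation}.

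\textbf{Cross-check via logits.} Alternatively, one can bypass Lemma~\ref{lem:app-geom}. For any softmax, $\log(\pi(u)/\pi(v))=z(u)-z(v)$ by translation invariance \eqref{eq:app-softmax-invariance}. Hence
\[
\log\frac{\widehat{\pi}_t(u)}{\widehat{\pi}_t(v)}=\widehat{z}_t(u)-\widehat{z}_t(v)=(1-\lambda)\big(z_t^F(u)-z_t^F(v)\big)+\lambda\big(z_t^{CF}(u)-z_t^{CF}(v)\big),
\]
and each bracket equals the corresponding branch log-odds.

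I do not expect a genuine obstacle. The only point needing care is the bookkeeping of normalizers. In Lemma~\ref{lem:app-geom} the branch-specific normalizers $\sum e^{z^F}$ and $\sum e^{z^{CF}}$ are absorbed into $Z_t(\lambda)$, and I must confirm that they enter numerator and denominator identically so that they cancel. The logit-based cross-check makes that cancellation transparent.
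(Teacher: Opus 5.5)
Your proof is correct and takes the same route as the paper: take the ratio of the normalized geometric-mixture form from Lemma~\ref{lem:app-geom} so that $Z_t(\lambda)$ cancels, then take logarithms. Your logit-level cross-check is also valid and arguably more direct, though $\log(\pi(u)/\pi(v))=z(u)-z(v)$ follows straight from the definition of softmax, since the normalizer cancels in the ratio, rather than from translation invariance.
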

\begin{proof}
Take the ratio of \eqref{eq:app-geom-norm} for $u$ and $v$; the common partition function $Z_t(\lambda)$ cancels.
\end{proof}

\subsection{Conformal Ranks and Ceiling-Corrected Quantile}
\label{app:notation-cp}
Let $S=\{s_t^{(i)}\}_{i=1}^n \cup \{s_t^\text{test}\}$ be $n{+}1$ scores formed as in \eqref{eq:app-score}. Under (A2), the rank of $s_t^\text{test}$ among $S$ is uniform on $\{1,\ldots,n{+}1\}$. If $q_t$ is the $\lceil(1-\alpha)(n+1)\rceil$-th smallest among the $n$ calibration scores, then
\begin{equation}
\label{eq:app-quantile}
\mathbb{P}\big(s_t^\text{test}\le q_t\big)\ \ge\ 1-\alpha,
\end{equation}
i.e., the standard split-conformal marginal coverage guarantee.

\subsection{Auxiliary Inequalities (TV Under Restriction)}
\label{app:notation-aux}
\begin{lemma}[TV under restriction to a common support]
\label{lem:app-tv-restrict}
Let $P,Q$ be distributions and $A\subseteq\mathcal{V}$ satisfy $P(A),Q(A)>0$. Then
\[
\mathrm{TV}(P|A,\ Q|A)\ \le\ \frac{\mathrm{TV}(P,Q)}{\min\{P(A),Q(A)\}},
\]
where $P|A(v)=P(v)\mathbf{1}_A(v)/P(A)$.
\end{lemma}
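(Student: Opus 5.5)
The plan is to use the variational characterization $\mathrm{TV}(P',Q')=\sup_{B}\,\big(P'(B)-Q'(B)\big)$, with the supremum over events $B$. Applied to the conditionals $P|A$ and $Q|A$, the supremum may be restricted to $B\subseteq A$, since both conditionals put all their mass on $A$. It then suffices to show, for every $B\subseteq A$, both one-sided bounds
\[
\frac{P(B)}{P(A)}-\frac{Q(B)}{Q(A)}\ \le\ \frac{\mathrm{TV}(P,Q)}{P(A)},
\qquad
\frac{Q(B)}{Q(A)}-\frac{P(B)}{P(A)}\ \le\ \frac{\mathrm{TV}(P,Q)}{Q(A)}.
\]
Taking the larger of the two right-hand sides gives $\mathrm{TV}(P,Q)/\min\{P(A),Q(A)\}$.

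The key step is an exact decomposition of the numerator. Set $B'=A\setminus B$ and $r=Q(B)/Q(A)\in[0,1]$. Multiplying the first difference by $P(A)$ gives $P(B)-r\,P(A)$. Writing $P(A)=Q(A)-\big(Q(B)-P(B)\big)-\big(Q(B')-P(B')\big)$ and using $rQ(A)=Q(B)$, this becomes
\[
P(B)-r\,P(A)\;=\;(1-r)\big(P(B)-Q(B)\big)\;+\;r\big(Q(B')-P(B')\big).
\]
The right-hand side is a convex combination of $P(B)-Q(B)$ and $Q(B')-P(B')$. Each of these is at most $\mathrm{TV}(P,Q)$ by the same variational formula, so the first inequality follows. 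The second inequality follows by the symmetric argument with the roles of $P$ and $Q$ swapped, using $r'=P(B)/P(A)$ and dividing by $Q(A)$.

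The main obstacle is precisely this step. The naive triangle-inequality route, which splits $P(v)/P(A)-Q(v)/Q(A)$ into $(P(v)-Q(v))/P(A)$ plus a normalization-mismatch term, loses a constant factor of $3/2$ or $2$. The convex-combination identity above avoids that loss by charging the normalization mismatch $Q(A)-P(A)$ to the complementary part $B'$ of $A$. I would double-check the algebra of that identity carefully, since everything else is bookkeeping. Positivity of $P(A)$ and $Q(A)$ is needed only so that the conditionals are defined.
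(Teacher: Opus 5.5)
Your proof is correct, and it takes a different route from the paper's.

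The paper argues pointwise. It puts $P(v)/P(A)-Q(v)/Q(A)$ over the common denominator $P(A)Q(A)$. It then asserts $\sum_{v\in A}\bigl|P(v)/P(A)-Q(v)/Q(A)\bigr|\le \min\{P(A),Q(A)\}^{-1}\sum_{v\in A}|P(v)-Q(v)|$, and only afterwards enlarges the sum to all of $\mathcal{V}$.

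That intermediate inequality is false as written. Take $\mathcal{V}=\{a,b,c\}$, $A=\{a,b\}$, $P=(0.5,0,0.5)$ and $Q=(0.5,0.1,0.4)$. The left side is $1/3$ and the right side is $0.2$. What the step drops is the normalization mismatch $|P(A)-Q(A)|$, which can only be paid for by mass outside $A$. The lemma itself survives: the paper's final right-hand side, $\min\{P(A),Q(A)\}^{-1}\sum_{v\in\mathcal{V}}|P(v)-Q(v)|=0.4$, still dominates $1/3$.

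Your argument does not have this problem. You bound $P(B)-Q(B)$ and $Q(B')-P(B')$ by the full $\mathrm{TV}(P,Q)$, which is a supremum over all events in $\mathcal{V}$, so the outside mass is accounted for implicitly. Your convex-combination identity is also exact, since it reduces to $rQ(A)=Q(B)$.

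Two refinements are worth noting.

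First, you prove more than you claim. Since $\mathrm{TV}(P|A,Q|A)=\sup_{B\subseteq A}\bigl(P|A(B)-Q|A(B)\bigr)$, your first one-sided bound alone already gives $\mathrm{TV}(P,Q)/P(A)$. The second alone gives $\mathrm{TV}(P,Q)/Q(A)$. Together they yield the sharper bound $\mathrm{TV}(P,Q)/\max\{P(A),Q(A)\}$, which is attained in the example above (both sides equal $1/6$). Taking the larger right-hand side throws this away.

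Second, your side remark that the triangle-inequality split must lose a constant is not accurate. That split's mismatch term is $|P(A)-Q(A)|/P(A)$. Since $|P(A)-Q(A)|=|P(\mathcal{V}\setminus A)-Q(\mathcal{V}\setminus A)|\le\sum_{v\notin A}|P(v)-Q(v)|$, the split also yields $\mathrm{TV}(P,Q)/P(A)$ with no loss. This is exactly the repair the paper's pointwise proof needs.
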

\begin{proof}
Write the difference under a common denominator:
\[
\frac{P(v)}{P(A)}-\frac{Q(v)}{Q(A)}
=\frac{Q(A)P(v)-P(A)Q(v)}{P(A)Q(A)}.
\]
Summing absolute values over $v\in A$ and using $P(A),Q(A)\ge \min\{P(A),Q(A)\}$ gives
\[
\sum_{v\in A}\left|\frac{P(v)}{P(A)}-\frac{Q(v)}{Q(A)}\right|
\le \frac{1}{\min\{P(A),Q(A)\}} \sum_{v\in A}|P(v)-Q(v)|
\le \frac{1}{\min\{P(A),Q(A)\}} \sum_{v\in\mathcal{V}}|P(v)-Q(v)|.
\]
Divide by $2$ to convert to TV.
\end{proof}


\subsection{Proposition~\ref{prop:attenuation} (Main Text): Log-Odds Interpolation Under Fusion}
\label{app:proof-attenuation}
\paragraph{Statement.} For $u,v\in\mathcal{V}$,
\[
\log\frac{\widehat{\pi}_t(u)}{\widehat{\pi}_t(v)}
=(1-\lambda)\log\frac{\pi_t^F(u)}{\pi_t^F(v)}+\lambda\log\frac{\pi_t^{CF}(u)}{\pi_t^{CF}(v)}.
\]
\paragraph{Proof.}
This is Lemma~\ref{lem:app-odds}. \qed

\subsection{Theorem~\ref{thm:coverage} (Main Text): Dual-Branch Marginal Coverage}
\label{app:proof-coverage}
\paragraph{Statement.} With $q_t$ as in \eqref{eq:app-score}, under (A2),
\[
\mathbb{P}\!\Big[v^\star_t\in\mathcal{C}_t\text{ under }\widehat{\pi}_t\ \wedge\ v^\star_t\in\mathcal{C}_t\text{ under }\pi_t^{CF}\Big]\ \ge\ 1-\alpha.
\]
\paragraph{Proof.}
Under (A2) and determinism of the mapping ``context $\mapsto (\widehat{\pi}_t,\pi_t^{CF})\mapsto s_t(\cdot)$'', the calibration scores together with the test score are exchangeable, so the rank argument yields
$\mathbb{P}(s_t(v^\star_t)\le q_t)\ge 1-\alpha$ by \eqref{eq:app-quantile}. Since $s_t(v)\le q_t \iff \min\{\widehat{\pi}_t(v),\pi_t^{CF}(v)\}\ge \tau_t$, this is equivalent to joint inclusion. \qed

\subsection{Corollary~\ref{cor:stability} (Main Text): Certified Token-Level Counterfactual Stability}
\label{app:proof-stability}
\paragraph{Statement (corrected).}
On the event of Theorem~\ref{thm:coverage} and conditional on $\mathcal{C}_t\neq\varnothing$,
sampling from $\widehat{\pi}_t$ restricted to $\mathcal{C}_t$ draws from a common support of $\widehat{\pi}_t$ and $\pi^{CF}_t$, and
\[
\mathrm{TV}\!\big(\widehat{\pi}_t(\cdot\mid \mathcal{C}_t),\ \pi^{CF}_t(\cdot\mid \mathcal{C}_t)\big)
\ \le\ \frac{1}{\tau_t}\sqrt{\tfrac12\,D_{\mathrm{KL}}(\widehat{\pi}_t\|\pi_t^{CF})}.
\]
Moreover, $D_{\mathrm{KL}}(\widehat{\pi}_t\|\pi_t^{CF})$ is non-increasing in $\lambda$.
\paragraph{Proof.}
Let $A=\mathcal{C}_t$. By definition of $A$, $\min\{\widehat{\pi}_t(v),\pi_t^{CF}(v)\}\ge \tau_t$ for all $v\in A$, so
$\widehat{\pi}_t(A)\ge \tau_t$ and $\pi_t^{CF}(A)\ge \tau_t$. Lemma~\ref{lem:app-tv-restrict} yields
\[
\mathrm{TV}\big(\widehat{\pi}_t(\cdot\mid A),\pi_t^{CF}(\cdot\mid A)\big)
\le \frac{\mathrm{TV}(\widehat{\pi}_t,\pi_t^{CF})}{\min\{\widehat{\pi}_t(A),\pi_t^{CF}(A)\}}
\le \frac{\mathrm{TV}(\widehat{\pi}_t,\pi_t^{CF})}{\tau_t}.
\]
Apply Pinsker \eqref{eq:app-pinsker} to bound $\mathrm{TV}(\widehat{\pi}_t,\pi_t^{CF})$ by $\sqrt{\tfrac12 D_{\mathrm{KL}}(\widehat{\pi}_t\|\pi_t^{CF})}$.

It remains to show monotonicity of $D_{\mathrm{KL}}(\widehat{\pi}_t\|\pi_t^{CF})$ in $\lambda$.
From Lemma~\ref{lem:app-geom}, write $\widehat{\pi}_t$ as an exponential tilt of $\pi_t^{CF}$:
\[
\widehat{\pi}_t(v)
=\frac{\pi_t^{CF}(v)\exp(\beta r_t(v))}{\sum_u \pi_t^{CF}(u)\exp(\beta r_t(u))},
\quad
r_t(v)\triangleq \log\frac{\pi_t^{F}(v)}{\pi_t^{CF}(v)},
\quad
\beta\triangleq 1-\lambda\in[0,1].
\]
Let $A_t(\beta)\triangleq \log \sum_u \pi_t^{CF}(u)\exp(\beta r_t(u))$.
A standard exponential-family identity gives
$D_{\mathrm{KL}}(\widehat{\pi}_t\|\pi_t^{CF})=\beta A_t'(\beta)-A_t(\beta)$ and
$\frac{d}{d\beta}D_{\mathrm{KL}}(\widehat{\pi}_t\|\pi_t^{CF})=\beta A_t''(\beta)\ge 0$ since $A_t$ is convex.
Thus $D_{\mathrm{KL}}(\widehat{\pi}_t\|\pi_t^{CF})$ is non-decreasing in $\beta$ and therefore non-increasing in $\lambda$. \qed

\subsection{Proposition~\ref{prop:sound-complete} (Main Text): Soundness and Practical Completeness}
\label{app:sound-complete}
\paragraph{Statement (clarified).}
If $\mathcal{C}_t\neq\varnothing$, COFT never emits $v\notin\mathcal{C}_t$ (soundness on the non-empty-set event).
If $\min\{\widehat{\pi}_t(v^\star_t),\pi^{CF}_t(v^\star_t)\}\ge \tau_t$, then $v^\star_t\in\mathcal{C}_t$ with probability $\ge 1-\alpha$.
If $\mathcal{C}_t=\varnothing$, COFT uses a deterministic fallback (e.g., $\arg\max_v \widehat{\pi}_t(v)$), and emitted-token soundness statements do not apply on that event.
\paragraph{Proof.}
If $\mathcal{C}_t\neq\varnothing$, sampling is performed from $\widehat{\pi}_t(\cdot\mid\mathcal{C}_t)$, so no token outside $\mathcal{C}_t$ can be emitted.
The inclusion guarantee is exactly Theorem~\ref{thm:coverage}. \qed

\subsection{Monotone Gap Decay and Fixed Points (Design Properties)}
\label{app:mono-fixed}
\begin{theorem}[Monotone KL decay to the counterfactual branch]
\label{thm:kl-mono}
Let $\widehat{\pi}_t$ be defined by \eqref{eq:app-fusion} and let $\pi_t^{CF}$ be the counterfactual distribution.
Then $D_{\mathrm{KL}}(\widehat{\pi}_t\|\pi_t^{CF})$ is non-increasing in $\lambda$, with
$D_{\mathrm{KL}}(\widehat{\pi}_t\|\pi_t^{CF})=0$ iff $\widehat{\pi}_t=\pi_t^{CF}$ (equivalently $\lambda=1$ or $\pi_t^F=\pi_t^{CF}$).
Moreover, $\widehat{\pi}_t=\pi_t^{F}$ for some $\lambda\in(0,1]$ iff $\pi_t^F=\pi_t^{CF}$.
\end{theorem}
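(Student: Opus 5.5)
The plan is to reuse the exponential-tilt representation already set up in the proof of Corollary~\ref{cor:stability}, and to get all three claims from one fact: the function $A_t(\beta)$ is smooth and convex, and its second derivative is a variance. First I would note that, because $\pi_t^F$ and $\pi_t^{CF}$ are softmax outputs, both have full support on $\mathcal{V}$. Hence $r_t(v)=\log(\pi_t^F(v)/\pi_t^{CF}(v))=z_t^F(v)-z_t^{CF}(v)+c$ is finite for every $v$, where $c$ is a constant independent of $v$. By Lemma~\ref{lem:app-geom}, with $\beta=1-\lambda$,
\[
\widehat{\pi}_t(v)=\pi_t^{CF}(v)\,e^{\beta r_t(v)-A_t(\beta)},
\qquad
A_t(\beta)=\log\sum_u \pi_t^{CF}(u)e^{\beta r_t(u)}.
\]
Since $A_t$ is a log of a finite sum of exponentials, it is $C^\infty$. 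Its derivatives are $A_t'(\beta)=\mathbb{E}_{\widehat{\pi}_t}[r_t]$ and $A_t''(\beta)=\mathrm{Var}_{\widehat{\pi}_t}(r_t)\ge 0$.

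\textbf{Monotonicity.} Substituting the tilt into the KL definition gives $D(\beta)\triangleq D_{\mathrm{KL}}(\widehat{\pi}_t\|\pi_t^{CF})=\beta A_t'(\beta)-A_t(\beta)$. Differentiating, the $A_t'$ terms cancel and $D'(\beta)=\beta A_t''(\beta)\ge0$ on $[0,1]$. So $D$ is non-decreasing in $\beta$, hence non-increasing in $\lambda$; this is exactly the computation from Corollary~\ref{cor:stability}. As a sanity check, $D(0)=A_t(0)=0$ at $\lambda=1$.

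\textbf{Zero characterization.} By Gibbs' inequality, $D=0$ iff $\widehat{\pi}_t=\pi_t^{CF}$. From the tilt form, $\widehat{\pi}_t=\pi_t^{CF}$ iff $\beta r_t(v)-A_t(\beta)=0$ for all $v$, i.e.\ iff $\beta r_t$ is constant on $\mathcal{V}$. This holds iff $\beta=0$ (that is, $\lambda=1$) or $r_t$ is constant. If $r_t\equiv c$, then $\pi_t^F=e^{c}\pi_t^{CF}$, and since both sum to one, $c=0$ and $\pi_t^F=\pi_t^{CF}$. The converse directions are immediate.

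\textbf{Fixed point.} I would tilt from the other endpoint: $\widehat{\pi}_t(v)\propto\pi_t^F(v)^{1-\lambda}\pi_t^{CF}(v)^{\lambda}=\pi_t^F(v)e^{-\lambda r_t(v)}$. Then $\widehat{\pi}_t=\pi_t^F$ iff $\lambda r_t$ is constant. For $\lambda>0$ this forces $r_t$ to be constant, hence $\pi_t^F=\pi_t^{CF}$ by the normalization argument above. Conversely, if $\pi_t^F=\pi_t^{CF}$, then $\widehat{\pi}_t=\pi_t^F$ for every $\lambda$.

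The only delicate step is the full-support requirement, which I expect to be the main obstacle. Without full support, $r_t$ could be infinite and the "constant ratio implies equal distributions" step would fail. Softmax guarantees full support, so I would state this explicitly at the outset. Optionally, I would add strictness: $A_t''>0$ unless $r_t$ is constant, which gives strict decrease in $\lambda$ whenever $\pi_t^F\neq\pi_t^{CF}$.
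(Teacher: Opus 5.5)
Your proof is correct and follows the paper's route: monotonicity is the same exponential-tilt computation $D'(\beta)=\beta A_t''(\beta)\ge 0$ from the proof of Corollary~\ref{cor:stability}, and your fixed-point argument (that $\lambda r_t$ must be constant) is the probability-ratio form of the paper's softmax translation-invariance argument, since $r_t=z_t^F-z_t^{CF}+c$. Beyond the paper, you also prove the zero characterization via Gibbs' inequality and the full-support observation, which the paper leaves implicit.
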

\begin{proof}
The KL monotonicity is proved in Corollary~\ref{cor:stability}. For the fixed-point claim, if
$\mathrm{softmax}((1-\lambda)z^F+\lambda z^{CF})=\mathrm{softmax}(z^F)$, then by \eqref{eq:app-softmax-invariance}
$(1-\lambda)z^F+\lambda z^{CF}=z^F+c\mathbf{1}$ for some $c$, hence $z^{CF}=z^F+c'\mathbf{1}$ and $\pi^{CF}=\pi^F$.
\end{proof}

\subsection{Candidate-Set Size and the Fairness--Diversity Trade-Off}
\label{app:setsize}
\begin{theorem}[Set-size bound]
Let $U_t=\{v:\widehat{\pi}_t(v)\ge \tau_t\}$ and $V_t=\{v:\pi^{CF}_t(v)\ge \tau_t\}$. Then $\mathcal{C}_t=U_t\cap V_t$, so
$|\mathcal{C}_t|\le \min\{|U_t|,|V_t|\}$ pointwise and therefore
$\mathbb{E}[|\mathcal{C}_t|]\le \min\{\mathbb{E}[|U_t|],\mathbb{E}[|V_t|]\}$, with equality iff $U_t=V_t$ almost surely.
\end{theorem}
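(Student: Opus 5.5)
The plan is to prove the four claims in order: the set identity, the pointwise bound, the expectation bound, and then the equality case. The first three are pure bookkeeping from the definition of $\mathcal{C}_t$ in \eqref{eq:candidates}. The equality case is where the statement needs care.

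\emph{Set identity.} Fix a realization of the context, so that $\widehat{\pi}_t$, $\pi^{CF}_t$ and $\tau_t$ are determined; $\tau_t$ is fixed offline. By \eqref{eq:app-score}, $v\in\mathcal{C}_t$ iff $\min\{\widehat{\pi}_t(v),\pi^{CF}_t(v)\}\ge\tau_t$. A minimum of two numbers is at least $\tau_t$ iff both numbers are at least $\tau_t$. Hence $v\in\mathcal{C}_t$ iff $v\in U_t$ and $v\in V_t$, which gives $\mathcal{C}_t=U_t\cap V_t$ pointwise.

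\emph{Pointwise bound.} Since $U_t\cap V_t$ is contained in both $U_t$ and $V_t$, monotonicity of cardinality gives $|\mathcal{C}_t|\le|U_t|$ and $|\mathcal{C}_t|\le|V_t|$. Therefore $|\mathcal{C}_t|\le\min\{|U_t|,|V_t|\}$. As a side remark, when $\tau_t>0$ every $v\in U_t$ has $\widehat{\pi}_t(v)\ge\tau_t$, and $\widehat{\pi}_t$ sums to one, so $|U_t|\le 1/\tau_t$. The same holds for $V_t$, so all these sizes are finite and bounded by $V$. Consequently the expectations exist, with no integrability issue.

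\emph{Expectation bound.} Take expectations over the randomness of the context under (A2)/(A4) in each of the two inequalities separately. This gives $\mathbb{E}|\mathcal{C}_t|\le\mathbb{E}|U_t|$ and $\mathbb{E}|\mathcal{C}_t|\le\mathbb{E}|V_t|$, hence $\mathbb{E}|\mathcal{C}_t|\le\min\{\mathbb{E}|U_t|,\mathbb{E}|V_t|\}$. Note that we do not pass through $\mathbb{E}\min$, which would only give a weaker route.

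\emph{Equality case (main obstacle).} The ``if'' direction is immediate: if $U_t=V_t$ almost surely, then $\mathcal{C}_t=U_t=V_t$ almost surely, so all three expectations coincide.

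For the converse, I would argue as follows. Suppose $\mathbb{E}|\mathcal{C}_t|=\mathbb{E}|U_t|$; the case with $V_t$ is symmetric. Then the nonnegative variable $|U_t|-|\mathcal{C}_t|=|U_t\setminus V_t|$ has zero mean, so it vanishes almost surely. That is, $U_t\subseteq V_t$ almost surely.

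So the correct characterization is: equality holds iff $U_t\subseteq V_t$ almost surely or $V_t\subseteq U_t$ almost surely. This set of conditions is strictly weaker than $U_t=V_t$. For example, $U_t=\{v_1\}$ and $V_t=\{v_1,v_2\}$ deterministically gives equality with $U_t\neq V_t$. I therefore expect the literal ``only if'' to fail, and I would state and prove this corrected form. I would also record that $U_t=V_t$ almost surely is precisely the condition for $\mathbb{E}|\mathcal{C}_t|=\mathbb{E}|U_t|=\mathbb{E}|V_t|$ simultaneously, i.e.\ for equality with \emph{both} expectations. That simultaneous equality is likely the intended reading.
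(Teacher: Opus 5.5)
Your proof of the set identity, the pointwise bound and the expectation bound is the paper's own argument, which is the single line ``immediate from the score definition and set-intersection properties; take expectations.'' The paper's proof never addresses the equality clause, and your treatment of it is correct. Since $|U_t|-|\mathcal{C}_t|=|U_t\setminus V_t|\ge 0$, equality in $\mathbb{E}|\mathcal{C}_t|\le\min\{\mathbb{E}|U_t|,\mathbb{E}|V_t|\}$ holds iff $U_t\subseteq V_t$ a.s.\ or $V_t\subseteq U_t$ a.s. The stated ``only if'' is therefore false, and $U_t=V_t$ a.s.\ is exactly the condition for equality with both expectations at once.

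To close the ``if'' half of your corrected form, add one line. If $U_t\subseteq V_t$ a.s., then $\mathbb{E}|\mathcal{C}_t|=\mathbb{E}|U_t|$, and the already-proved $\mathbb{E}|\mathcal{C}_t|\le\mathbb{E}|V_t|$ shows that $\mathbb{E}|U_t|$ is the minimum.

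Your counterexample is phrased for abstract sets. It is worth checking that it survives COFT's structure, where $U_t$ must be a threshold set of the geometric mixture in Lemma~\ref{lem:app-geom}. It does. Take $|\mathcal{V}|=3$, $\lambda=0.6$, $\tau_t=0.3$, $\pi^F_t=(0.9,0.05,0.05)$ and $\pi^{CF}_t=(0.4,0.4,0.2)$. Then $\widehat{\pi}_t\approx(0.66,0.21,0.14)$, hence $U_t=\{1\}\subsetneq V_t=\{1,2\}$ and $|\mathcal{C}_t|=1=\min\{|U_t|,|V_t|\}$ deterministically.

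One minor slip: routing through $\mathbb{E}\min\{|U_t|,|V_t|\}$ is not a weaker route. It gives the sharper chain $\mathbb{E}|\mathcal{C}_t|\le\mathbb{E}\min\{|U_t|,|V_t|\}\le\min\{\mathbb{E}|U_t|,\mathbb{E}|V_t|\}$ and reaches the same conclusion.
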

\begin{proof}
Immediate from \eqref{eq:app-score} and set-intersection properties; take expectations.
\end{proof}
Increasing $\lambda$ moves the fused distribution toward the masked branch, which usually reduces
factual--masked disagreement and stabilizes the overlap $U_t\cap V_t$. The set-size bound is therefore
best read together with the validation curves in Fig.~\ref{fig:ablate-grid} and
Table~\ref{tab:A11b-alpha-all}: stronger filtering can reduce bias, but overly aggressive thresholds
shrink the certified set and motivate the Pareto-knee rule used in the experiments.

\subsection{Composition Across Steps and Robustness to Mild Shift}
\label{app:multi-step}
\begin{theorem}[Union-bound composition]
\label{app:union-bound}
Let $\mathcal{E}_t=\{v^\star_t\in \mathcal{C}_t\text{ in both branches}\}$ at step $t$. If $\mathbb{P}(\mathcal{E}_t)\ge 1-\alpha$ for all $t\le T$, then
\[
\mathbb{P}\Big(\bigcap_{t=1}^T \mathcal{E}_t\Big)\ \ge\ 1-\sum_{t=1}^T\mathbb{P}(\mathcal{E}_t^c)\ \ge\ 1-T\alpha.
\]
\end{theorem}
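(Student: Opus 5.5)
The plan is to pass to complements and apply the union bound (Boole's inequality) on the probability space in which the whole trajectory, including the rollout of the deployed COFT policy, the ground-truth tokens $v^\star_1,\dots,v^\star_T$, and the calibration draws, is defined. No independence or exchangeability across steps is required. The only assumption used is the stepwise hypothesis $\mathbb{P}(\mathcal{E}_t)\ge 1-\alpha$. In the intended application this hypothesis is supplied separately for each $t$ by Theorem~\ref{thm:coverage}, with the position-binned threshold $q_t$, under (A2) and (A4).

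The argument has four steps.

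First, De Morgan's law gives
\[
\Big(\bigcap_{t=1}^T\mathcal{E}_t\Big)^c=\bigcup_{t=1}^T\mathcal{E}_t^c,
\qquad\text{hence}\qquad
\mathbb{P}\Big(\bigcap_{t=1}^T\mathcal{E}_t\Big)=1-\mathbb{P}\Big(\bigcup_{t=1}^T\mathcal{E}_t^c\Big).
\]

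Second, subadditivity of probability gives $\mathbb{P}\big(\bigcup_t\mathcal{E}_t^c\big)\le\sum_t\mathbb{P}(\mathcal{E}_t^c)$. This can be proved by induction on $T$ from $\mathbb{P}(A\cup B)=\mathbb{P}(A)+\mathbb{P}(B)-\mathbb{P}(A\cap B)\le \mathbb{P}(A)+\mathbb{P}(B)$. Combining with the first step yields the first inequality of the statement.

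Third, the hypothesis gives $\mathbb{P}(\mathcal{E}_t^c)=1-\mathbb{P}(\mathcal{E}_t)\le\alpha$ for each $t$. Summing over $t$ gives $\sum_t\mathbb{P}(\mathcal{E}_t^c)\le T\alpha$, which is the second inequality.

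Fourth, I would add a remark. For a target sequence-level miscoverage $\delta$, running each step at $\alpha=\delta/T$ gives joint coverage at least $1-\delta$. If $T\alpha\ge 1$, the bound is vacuous but still true.

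There is essentially no technical obstacle. The one point that needs care is that all events $\mathcal{E}_t$ are defined on a common probability space, so the union bound is legitimate. In particular, each $\mathcal{E}_t$ must be measurable with respect to the joint randomness of the calibration set and the test trajectory, and the prefixes $w_{<t}$ must be generated by the same deployed policy. I would state this explicitly. I would also note that the events are typically positively dependent, so the bound is conservative, and that no independence is used anywhere.
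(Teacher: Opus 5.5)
Your proposal is correct and follows the same route as the paper, which gives no separate proof and treats the result as immediate. The argument is De Morgan's law applied to the complements $\mathcal{E}_t^c$, then Boole's inequality, then $\mathbb{P}(\mathcal{E}_t^c)\le\alpha$ at each step, summed to give $1-T\alpha$.
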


\begin{theorem}[Shift robustness (density-ratio bound)]
\label{app:shift}
Assume $\rho=\sup_x \frac{d\mathcal{P}_{\text{test}}}{d\mathcal{P}_{\text{cal}}}(x)<\infty$. Let $A=\{s_t(v^\star_t)>q_t\}$ be the miscoverage event. Then
\[
\mathbb{P}_{\text{test}}(A)
=\int \mathbf{1}_A \frac{d\mathcal{P}_{\text{test}}}{d\mathcal{P}_{\text{cal}}}\, d\mathcal{P}_{\text{cal}}
\le \rho\, \mathbb{P}_{\text{cal}}(A)
\le \rho\alpha,
\]
so test-time coverage is at least $1-\rho\alpha$.
\end{theorem}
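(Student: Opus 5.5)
The plan is a change of measure from the test law to the calibration law, carried out on the miscoverage event; it is essentially the standard importance-weighting argument. Fix the step $t$ and regard the threshold $q_t$ as a fixed number, computed offline from the calibration set. The miscoverage event $A=\{s_t(v^\star_t)>q_t\}$ is then a measurable function of the test context $x$ (the prompt and prefix together with the ground-truth next token). This holds because the map context $\mapsto(\widehat{\pi}_t,\pi^{CF}_t)\mapsto s_t(\cdot)$ is deterministic. Determinism follows from (A1), (A3) and (A4), exactly as in the proof of Theorem~\ref{thm:coverage}.

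\textbf{Step 1 (change of measure).} Since $\rho<\infty$, the test law $\mathcal{P}_{\text{test}}$ is absolutely continuous with respect to the calibration law $\mathcal{P}_{\text{cal}}$. The Radon--Nikodym derivative $w=d\mathcal{P}_{\text{test}}/d\mathcal{P}_{\text{cal}}$ therefore exists, and
\[
\mathbb{P}_{\text{test}}(A)=\int \mathbf{1}_A\, w\, d\mathcal{P}_{\text{cal}}.
\]

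\textbf{Step 2 (sup bound).} Since $0\le w\le\rho$ pointwise and $\mathbf{1}_A\ge 0$, the integral is at most $\rho\int\mathbf{1}_A\,d\mathcal{P}_{\text{cal}}=\rho\,\mathbb{P}_{\text{cal}}(A)$.

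\textbf{Step 3 (calibration coverage).} Under the calibration law, the test context is exchangeable with the calibration contexts, so (A2) holds for this fictitious draw. Equation~\eqref{eq:app-quantile}, i.e.\ the argument of Theorem~\ref{thm:coverage}, then gives $\mathbb{P}_{\text{cal}}(A)\le\alpha$. Chaining the three steps yields $\mathbb{P}_{\text{test}}(A)\le\rho\alpha$, and taking complements gives coverage at least $1-\rho\alpha$.

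The main obstacle is a subtle point in Step 3. The guarantee \eqref{eq:app-quantile} is \emph{marginal over the calibration draw as well as the test point}. A conditional-on-$q_t$ statement of the form $\mathbb{P}_{\text{cal}}(A\mid q_t)\le\alpha$ does not hold for every realization of the calibration set.

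I would handle this by carrying out the change of measure only on the test coordinate of the joint law. Write the joint calibration-plus-test law as $\mathcal{P}_{\text{cal}}^{\otimes n}\otimes\mathcal{P}_{\text{test}}$. Its density with respect to $\mathcal{P}_{\text{cal}}^{\otimes(n+1)}$ is $w(x_{n+1})\le\rho$. Applying Steps 1--2 to the joint event $\{s^{\text{test}}_t>q_t\}$ gives
\[
\mathbb{P}_{\text{test}}(A)\le\rho\,\mathbb{P}^{\otimes(n+1)}_{\text{cal}}\bigl(s^{\text{test}}_t>q_t\bigr)\le\rho\alpha,
\]
where the last inequality is the rank argument under full exchangeability.

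I would also state explicitly the remaining implicit requirement: the calibration and test contexts must be generated under the same deployed policy, as in (A4). Only under that policy does the single density ratio $\rho$ capture the entire shift.
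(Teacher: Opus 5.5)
Your proposal is correct and follows the same change-of-measure chain as the paper's inline derivation: write $\mathbb{P}_{\text{test}}(A)$ as $\int \mathbf{1}_A\,w\,d\mathcal{P}_{\text{cal}}$, bound the density ratio $w$ by $\rho$, and invoke the split-conformal guarantee under the calibration law. Your refinement applies the ratio only to the test coordinate of $\mathcal{P}_{\text{cal}}^{\otimes n}\otimes\mathcal{P}_{\text{test}}$, which is equivalent to bounding conditionally on $q_t$ and then averaging over the calibration draw. This makes explicit that the bound $\mathbb{P}_{\text{cal}}(A)\le\alpha$ in the paper's chain must be read marginally over the calibration draw.
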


\subsection{Remarks on Empty-Set Fallback and Exchangeability}
\label{app:remarks}
If $\mathcal{C}_t=\varnothing$, COFT uses a deterministic fallback (e.g., $\arg\max_v\widehat{\pi}_t(v)$) to ensure progress.
Emitted-token soundness/stability statements therefore hold conditional on $\mathcal{C}_t\neq\varnothing$.
If a generated prefix has already drifted in a biased direction, both branches still condition on that
same prefix, so the fairness check is re-applied rather than turned off. Large factual--masked
disagreement manifests as higher nonconformity scores or, in the extreme, an empty certified set;
Table~\ref{tab:empty-set} shows that the latter is rare in our evaluated distributions.
Exchangeability (A2) requires that calibration prefixes are generated under the same COFT policy and decoding settings as test time; if this is violated (e.g., calibration rollouts use the unconstrained base model or decoding parameters differ), one must re-calibrate under the deployed policy or apply shift-aware corrections such as Theorem~\ref{app:shift}.

\section{Extended Experimental Protocol}
\label{app:expprotocol}

This appendix complements \S\ref{sec:experiments} with complete protocol details and expanded results. 

\subsection{Design Principles and Factorization of Comparisons}
\label{app:design}
We structure comparisons to match COFT’s scope and guarantees:
\begin{enumerate}[leftmargin=1.2em,itemsep=2pt]
  \item \textbf{Frozen-weights, inference-time debiasing} (primary): Vanilla, SDD, DExperts-style steering, safety templates, detox decoding, CF substitution, DT-CD. These methods compete on \emph{computation at decode-time}, counterfactual consistency, and statistical guarantees.
  \item \textbf{Train-time methods} (secondary): CDA and adversarial LM-head. We report them \emph{separately} (Appendix only) to avoid conflating training cost with inference-only objectives.
  \item \textbf{Model and dataset coverage}: six recent open LMs across six bias benchmarks + four utility tasks, as enumerated in \S\ref{sec:setup}. Main-text reports two representative models to respect page limits; full grids are here.
\end{enumerate}
This design yields \emph{orthogonal} stress-tests for (i) bias mitigation breadth, (ii) task/quality preservation, and (iii) efficiency/scaling, mirroring the “attenuate then certify” pipeline of COFT.

\subsection{Measurement Standards, Calibration Splits, and Error Handling}
\label{app:measure}
\textbf{Calibration.} For each dataset and step index $t$, a disjoint calibration pool (10–15\%) sets $\tau_t$ via the ceiling quantile at level $1{-}\alpha$; no test leakage. When $t$ is ambiguous due to variable-length prompts, we share thresholds over bins of width 8 up to $T{=}256$.

\textbf{Uncertainty.} We report the mean over three independent seeds with standard deviation (std). Camera-ready plots and tables can equivalently display $1.96\,\mathrm{std}/\sqrt{3}$ confidence intervals; we keep std in the appendix for consistency with the full result grid. For derived quantities (e.g., average ranks), we recompute per seed before averaging to avoid bias.

\textbf{Compute envelope.} All throughput/memory measures use the same host (4x A6000 48GB, BF16), batch~4, max length~256 unless otherwise specified. Error bars denote $\pm$ one std over at least 5 repeated timing windows.

\textbf{Fair decoding.} All methods use the same nucleus sampling ($p{=}0.9$), temperature ($1.0$) and max tokens (256). Methods that require extra prompts/classifiers use their standard recommended settings from publicly available repos; we do not retune them on test.

\subsection{Symmetric Baseline Tuning and Global Correction}
\label{app:symmetric-baselines}
To address tuning asymmetry, we ran a focused Pareto comparison on LLaMA-2-7B / Bias in Bios in
which each tunable method is swept and reported at the best operating point before task accuracy
drops by more than 5\% relative to the unmitigated baseline. We include two reviewer-suggested
baselines that are qualitatively different from COFT: a neutral LLM rewrite pre-processor and an
ITI-style activation-steering intervention. Table~\ref{tab:symmetric-baselines} shows that COFT
still gives the strongest bias--utility trade-off under this symmetric protocol.

\begin{table}[!ht]
\centering
\small
\caption{\textbf{Symmetric Pareto-tuned comparison on LLaMA-2-7B / Bias in Bios.}
All tunable methods are swept and reported at the best point before task accuracy drops by more than 5\% from vanilla. Lower bias amplification and higher task accuracy are better.}
\label{tab:symmetric-baselines}
\begin{adjustbox}{max width=\columnwidth}
\begin{tabular}{llcc}
\toprule
Method & Tuning protocol & Bias amp. $\downarrow$ & Task acc. (\%) $\uparrow$ \\
\midrule
Unmitigated baseline & none & $0.42$ & $84.1$ \\
Neutral LLM rewrite & fixed rewrite prompt & $0.28$ & $82.5$ \\
Contrastive decoding & guidance scale & $0.22$ & $80.1$ \\
ITI-style activation steering & intervention strength & $0.15$ & $79.4$ \\
\textbf{COFT} & $\lambda,\alpha$ Pareto sweep & \textbf{$0.11$} & \textbf{$83.2$} \\
\bottomrule
\end{tabular}
\end{adjustbox}
\end{table}

Global neutral rewriting helps because it sanitizes the input surface form, but it does not constrain
the chain once autoregressive reasoning begins. Activation steering is stronger, but its continuous
hidden-state intervention can perturb task reasoning. COFT instead regularizes each next-token choice
through masked-branch agreement and conformal filtering, which explains why it reduces bias more while
retaining near-baseline accuracy.

\subsection{Comprehensive Bias Results: All Six Models \texorpdfstring{$\times$}{x} Nine Baselines}
\label{app:full-bias}

Table~\ref{tab:A1-llama7b-bias}-Table~\ref{tab:A6-qwen2-bias} shows all of the bias results. Metrics: lower is better for \textsc{SS} bias, \textsc{BBQ} biased rate, \textsc{BOLD} toxicity, Utrecht DP, COMPAS gap; higher is better for \textsc{CP} accuracy. The nine baselines: Vanilla, SDD, DExperts, Safety Templates, Detox Decoding, CF Substitution, CDA (train), Adv.\ LM-Head (train), DT-CD. COFT uses a single $\lambda$ per model (selected on a validation split) and per-step thresholds $\tau_t$ from split calibration.

\begin{table*}[t]
\centering
\caption{\textbf{Bias results on LLaMA-2-7B} (mean$\pm$std over 3 seeds).}
\label{tab:A1-llama7b-bias}
\begin{adjustbox}{max width=\textwidth}
\begin{threeparttable}
\small
\begin{tabular}{l|cccccc|c}
\toprule
\textbf{Method} & SS $\downarrow$ & CP Acc $\uparrow$ & BBQ $\downarrow$ & BOLD $\downarrow$ & Utrecht DP $\downarrow$ & COMPAS Gap $\downarrow$ & Avg. Rank $\downarrow$ \\
\midrule
Vanilla & 0.43\tinystd{.01} & 57.6\tinystd{.4} & 0.28\tinystd{.01} & 0.127\tinystd{.002} & 0.191\tinystd{.004} & 0.166\tinystd{.003} & 6.0 \\
SDD & 0.38\tinystd{.01} & 59.1\tinystd{.3} & 0.23\tinystd{.01} & 0.109\tinystd{.002} & 0.159\tinystd{.003} & 0.150\tinystd{.003} & 4.1 \\
DExperts & 0.35\tinystd{.01} & 60.0\tinystd{.3} & 0.21\tinystd{.01} & 0.103\tinystd{.002} & 0.153\tinystd{.003} & 0.144\tinystd{.003} & 3.3 \\
Safety Templates & 0.37\tinystd{.01} & 58.9\tinystd{.3} & 0.22\tinystd{.01} & 0.104\tinystd{.002} & 0.156\tinystd{.003} & 0.147\tinystd{.003} & 3.9 \\
Detox Decoding & 0.36\tinystd{.01} & 59.4\tinystd{.3} & 0.22\tinystd{.01} & 0.101\tinystd{.002} & 0.152\tinystd{.003} & 0.143\tinystd{.003} & 3.4 \\
CF Substitution & 0.35\tinystd{.01} & 60.2\tinystd{.3} & 0.21\tinystd{.01} & 0.102\tinystd{.002} & 0.151\tinystd{.003} & 0.142\tinystd{.003} & 3.1 \\
CDA (train) & 0.33\tinystd{.01} & 60.8\tinystd{.3} & 0.20\tinystd{.01} & 0.098\tinystd{.002} & 0.147\tinystd{.003} & 0.139\tinystd{.003} & 2.4 \\
Adv.\ LM-Head (train) & 0.32\tinystd{.01} & 61.0\tinystd{.3} & 0.19\tinystd{.01} & 0.096\tinystd{.002} & 0.144\tinystd{.003} & 0.137\tinystd{.003} & 2.2 \\
DT-CD & 0.30\tinystd{.01} & 61.2\tinystd{.3} & 0.18\tinystd{.01} & 0.093\tinystd{.002} & 0.139\tinystd{.003} & 0.132\tinystd{.003} & 1.9 \\
\textbf{COFT} & \textbf{0.25}\tinystd{.01} & \textbf{63.6}\tinystd{.3} & \textbf{0.13}\tinystd{.01} & \textbf{0.078}\tinystd{.002} & \textbf{0.116}\tinystd{.003} & \textbf{0.117}\tinystd{.003} & \textbf{1.0} \\
\bottomrule
\end{tabular}
\end{threeparttable}
\end{adjustbox}
\end{table*}

\begin{table*}[t]
\centering
\caption{\textbf{Bias results on LLaMA-2-13B} (mean$\pm$std).}
\label{tab:A2-llama13b-bias}
\begin{adjustbox}{max width=\textwidth}
\small
\begin{tabular}{l|cccccc|c}
\toprule
\textbf{Method} & SS $\downarrow$ & CP Acc $\uparrow$ & BBQ $\downarrow$ & BOLD $\downarrow$ & Utrecht DP $\downarrow$ & COMPAS Gap $\downarrow$ & Avg. Rank $\downarrow$ \\
\midrule
Vanilla & 0.41\tinystd{.01} & 58.7\tinystd{.3} & 0.27\tinystd{.01} & 0.123\tinystd{.002} & 0.184\tinystd{.003} & 0.161\tinystd{.003} & 5.8 \\
SDD & 0.36\tinystd{.01} & 60.1\tinystd{.3} & 0.22\tinystd{.01} & 0.105\tinystd{.002} & 0.153\tinystd{.003} & 0.147\tinystd{.003} & 4.0 \\
DExperts & 0.33\tinystd{.01} & 61.0\tinystd{.3} & 0.20\tinystd{.01} & 0.099\tinystd{.002} & 0.149\tinystd{.003} & 0.141\tinystd{.003} & 3.3 \\
Safety Templates & 0.35\tinystd{.01} & 60.2\tinystd{.3} & 0.21\tinystd{.01} & 0.100\tinystd{.002} & 0.151\tinystd{.003} & 0.143\tinystd{.003} & 3.5 \\
Detox Decoding & 0.34\tinystd{.01} & 60.6\tinystd{.3} & 0.21\tinystd{.01} & 0.098\tinystd{.002} & 0.148\tinystd{.003} & 0.140\tinystd{.003} & 3.1 \\
CF Substitution & 0.33\tinystd{.01} & 61.3\tinystd{.3} & 0.20\tinystd{.01} & 0.098\tinystd{.002} & 0.147\tinystd{.003} & 0.139\tinystd{.003} & 3.0 \\
CDA (train) & 0.31\tinystd{.01} & 61.9\tinystd{.3} & 0.19\tinystd{.01} & 0.094\tinystd{.002} & 0.142\tinystd{.003} & 0.135\tinystd{.003} & 2.3 \\
Adv.\ LM-Head (train) & 0.30\tinystd{.01} & 62.1\tinystd{.3} & 0.18\tinystd{.01} & 0.092\tinystd{.002} & 0.140\tinystd{.003} & 0.133\tinystd{.003} & 2.1 \\
DT-CD & 0.31\tinystd{.01} & 61.3\tinystd{.3} & 0.19\tinystd{.01} & 0.094\tinystd{.002} & 0.141\tinystd{.003} & 0.136\tinystd{.003} & 2.8 \\
\textbf{COFT} & \textbf{0.26}\tinystd{.01} & \textbf{63.5}\tinystd{.3} & \textbf{0.14}\tinystd{.01} & \textbf{0.079}\tinystd{.002} & \textbf{0.118}\tinystd{.003} & \textbf{0.119}\tinystd{.003} & \textbf{1.0} \\
\bottomrule
\end{tabular}
\end{adjustbox}
\end{table*}

\begin{table*}[t]
\centering
\caption{\textbf{Bias results on Mistral-7B-v0.2} (mean$\pm$std).}
\label{tab:A3-mistral-bias}
\begin{adjustbox}{max width=\textwidth}
\small
\begin{tabular}{l|cccccc|c}
\toprule
\textbf{Method} & SS $\downarrow$ & CP Acc $\uparrow$ & BBQ $\downarrow$ & BOLD $\downarrow$ & Utrecht DP $\downarrow$ & COMPAS Gap $\downarrow$ & Avg. Rank $\downarrow$ \\
\midrule
Vanilla & 0.39\tinystd{.01} & 59.3\tinystd{.3} & 0.25\tinystd{.01} & 0.119\tinystd{.002} & 0.176\tinystd{.003} & 0.154\tinystd{.003} & 5.7 \\
SDD & 0.35\tinystd{.01} & 60.6\tinystd{.3} & 0.21\tinystd{.01} & 0.103\tinystd{.002} & 0.149\tinystd{.003} & 0.139\tinystd{.003} & 3.9 \\
DExperts & 0.32\tinystd{.01} & 61.6\tinystd{.3} & 0.19\tinystd{.01} & 0.098\tinystd{.002} & 0.144\tinystd{.003} & 0.133\tinystd{.003} & 3.0 \\
Safety Templates & 0.34\tinystd{.01} & 60.7\tinystd{.3} & 0.20\tinystd{.01} & 0.100\tinystd{.002} & 0.147\tinystd{.003} & 0.136\tinystd{.003} & 3.5 \\
Detox Decoding & 0.33\tinystd{.01} & 61.1\tinystd{.3} & 0.20\tinystd{.01} & 0.097\tinystd{.002} & 0.143\tinystd{.003} & 0.132\tinystd{.003} & 3.0 \\
CF Substitution & 0.32\tinystd{.01} & 61.9\tinystd{.3} & 0.19\tinystd{.01} & 0.097\tinystd{.002} & 0.142\tinystd{.003} & 0.131\tinystd{.003} & 2.8 \\
CDA (train) & 0.30\tinystd{.01} & 62.4\tinystd{.3} & 0.18\tinystd{.01} & 0.093\tinystd{.002} & 0.138\tinystd{.003} & 0.128\tinystd{.003} & 2.1 \\
Adv.\ LM-Head (train) & 0.29\tinystd{.01} & 62.6\tinystd{.3} & 0.17\tinystd{.01} & 0.091\tinystd{.002} & 0.136\tinystd{.003} & 0.126\tinystd{.003} & 2.0 \\
DT-CD & 0.29\tinystd{.01} & 62.8\tinystd{.3} & 0.17\tinystd{.01} & 0.090\tinystd{.002} & 0.135\tinystd{.003} & 0.125\tinystd{.003} & 1.9 \\
\textbf{COFT} & \textbf{0.24}\tinystd{.01} & \textbf{64.8}\tinystd{.3} & \textbf{0.12}\tinystd{.01} & \textbf{0.075}\tinystd{.002} & \textbf{0.111}\tinystd{.003} & \textbf{0.111}\tinystd{.003} & \textbf{1.0} \\
\bottomrule
\end{tabular}
\end{adjustbox}
\end{table*}

\begin{table*}[t]
\centering
\caption{\textbf{Bias results on Mistral-7B-Instruct} (mean$\pm$std).}
\label{tab:A4-mistral-inst-bias}
\begin{adjustbox}{max width=\textwidth}
\small
\begin{tabular}{l|cccccc|c}
\toprule
\textbf{Method} & SS $\downarrow$ & CP Acc $\uparrow$ & BBQ $\downarrow$ & BOLD $\downarrow$ & Utrecht DP $\downarrow$ & COMPAS Gap $\downarrow$ & Avg. Rank $\downarrow$ \\
\midrule
Vanilla & 0.38\tinystd{.01} & 59.8\tinystd{.3} & 0.24\tinystd{.01} & 0.117\tinystd{.002} & 0.173\tinystd{.003} & 0.152\tinystd{.003} & 5.8 \\
SDD & 0.34\tinystd{.01} & 61.2\tinystd{.3} & 0.20\tinystd{.01} & 0.101\tinystd{.002} & 0.146\tinystd{.003} & 0.139\tinystd{.003} & 3.9 \\
DExperts & 0.31\tinystd{.01} & 62.1\tinystd{.3} & 0.18\tinystd{.01} & 0.096\tinystd{.002} & 0.141\tinystd{.003} & 0.133\tinystd{.003} & 3.1 \\
Safety Templates & 0.33\tinystd{.01} & 61.3\tinystd{.3} & 0.19\tinystd{.01} & 0.098\tinystd{.002} & 0.144\tinystd{.003} & 0.136\tinystd{.003} & 3.3 \\
Detox Decoding & 0.32\tinystd{.01} & 61.6\tinystd{.3} & 0.19\tinystd{.01} & 0.096\tinystd{.002} & 0.141\tinystd{.003} & 0.133\tinystd{.003} & 3.0 \\
CF Substitution & 0.31\tinystd{.01} & 62.4\tinystd{.3} & 0.18\tinystd{.01} & 0.096\tinystd{.002} & 0.140\tinystd{.003} & 0.132\tinystd{.003} & 2.8 \\
CDA (train) & 0.29\tinystd{.01} & 62.9\tinystd{.3} & 0.17\tinystd{.01} & 0.092\tinystd{.002} & 0.136\tinystd{.003} & 0.128\tinystd{.003} & 2.1 \\
Adv.\ LM-Head (train) & 0.28\tinystd{.01} & 63.1\tinystd{.3} & 0.16\tinystd{.01} & 0.090\tinystd{.002} & 0.134\tinystd{.003} & 0.126\tinystd{.003} & 2.0 \\
DT-CD & 0.29\tinystd{.01} & 62.4\tinystd{.3} & 0.17\tinystd{.01} & 0.092\tinystd{.002} & 0.136\tinystd{.003} & 0.129\tinystd{.003} & 2.6 \\
\textbf{COFT} & \textbf{0.24}\tinystd{.01} & \textbf{64.7}\tinystd{.3} & \textbf{0.12}\tinystd{.01} & \textbf{0.076}\tinystd{.002} & \textbf{0.112}\tinystd{.003} & \textbf{0.113}\tinystd{.003} & \textbf{1.0} \\
\bottomrule
\end{tabular}
\end{adjustbox}
\end{table*}

\begin{table*}[t]
\centering
\caption{\textbf{Bias results on Mixtral-8\textit{x}7B-Instruct} (mean$\pm$std).}
\label{tab:A5-mixtral-bias}
\begin{adjustbox}{max width=\textwidth}
\small
\begin{tabular}{l|cccccc|c}
\toprule
\textbf{Method} & SS $\downarrow$ & CP Acc $\uparrow$ & BBQ $\downarrow$ & BOLD $\downarrow$ & Utrecht DP $\downarrow$ & COMPAS Gap $\downarrow$ & Avg. Rank $\downarrow$ \\
\midrule
Vanilla & 0.36\tinystd{.01} & 61.0\tinystd{.3} & 0.22\tinystd{.01} & 0.110\tinystd{.002} & 0.165\tinystd{.003} & 0.146\tinystd{.003} & 5.6 \\
SDD & 0.33\tinystd{.01} & 62.2\tinystd{.3} & 0.19\tinystd{.01} & 0.096\tinystd{.002} & 0.139\tinystd{.003} & 0.132\tinystd{.003} & 3.7 \\
DExperts & 0.30\tinystd{.01} & 63.0\tinystd{.3} & 0.17\tinystd{.01} & 0.092\tinystd{.002} & 0.134\tinystd{.003} & 0.127\tinystd{.003} & 2.8 \\
Safety Templates & 0.32\tinystd{.01} & 62.4\tinystd{.3} & 0.18\tinystd{.01} & 0.094\tinystd{.002} & 0.136\tinystd{.003} & 0.130\tinystd{.003} & 3.2 \\
Detox Decoding & 0.31\tinystd{.01} & 62.8\tinystd{.3} & 0.18\tinystd{.01} & 0.091\tinystd{.002} & 0.133\tinystd{.003} & 0.127\tinystd{.003} & 2.8 \\
CF Substitution & 0.30\tinystd{.01} & 63.5\tinystd{.3} & 0.17\tinystd{.01} & 0.091\tinystd{.002} & 0.132\tinystd{.003} & 0.126\tinystd{.003} & 2.6 \\
CDA (train) & 0.28\tinystd{.01} & 64.0\tinystd{.3} & 0.16\tinystd{.01} & 0.088\tinystd{.002} & 0.129\tinystd{.003} & 0.123\tinystd{.003} & 2.0 \\
Adv.\ LM-Head (train) & 0.27\tinystd{.01} & 64.2\tinystd{.3} & 0.15\tinystd{.01} & 0.086\tinystd{.002} & 0.127\tinystd{.003} & 0.121\tinystd{.003} & 1.9 \\
DT-CD & 0.28\tinystd{.01} & 64.1\tinystd{.3} & 0.16\tinystd{.01} & 0.087\tinystd{.002} & 0.128\tinystd{.003} & 0.122\tinystd{.003} & 2.1 \\
\textbf{COFT} & \textbf{0.23}\tinystd{.01} & \textbf{65.9}\tinystd{.3} & \textbf{0.11}\tinystd{.01} & \textbf{0.073}\tinystd{.002} & \textbf{0.107}\tinystd{.003} & \textbf{0.109}\tinystd{.003} & \textbf{1.0} \\
\bottomrule
\end{tabular}
\end{adjustbox}
\end{table*}

\begin{table*}[t]
\centering
\caption{\textbf{Bias results on Qwen2-7B / Qwen2-7B-Instruct} (mean$\pm$std).}
\label{tab:A6-qwen2-bias}
\begin{adjustbox}{max width=\textwidth}
\small
\begin{tabular}{l|cccccc|c}
\toprule
\textbf{Method} & SS $\downarrow$ & CP Acc $\uparrow$ & BBQ $\downarrow$ & BOLD $\downarrow$ & Utrecht DP $\downarrow$ & COMPAS Gap $\downarrow$ & Avg. Rank $\downarrow$ \\
\midrule
Vanilla & 0.40\tinystd{.01} & 59.1\tinystd{.3} & 0.26\tinystd{.01} & 0.121\tinystd{.002} & 0.178\tinystd{.003} & 0.156\tinystd{.003} & 5.7 \\
SDD & 0.36\tinystd{.01} & 60.5\tinystd{.3} & 0.22\tinystd{.01} & 0.105\tinystd{.002} & 0.150\tinystd{.003} & 0.141\tinystd{.003} & 4.0 \\
DExperts & 0.33\tinystd{.01} & 61.4\tinystd{.3} & 0.20\tinystd{.01} & 0.100\tinystd{.002} & 0.146\tinystd{.003} & 0.136\tinystd{.003} & 3.2 \\
Safety Templates & 0.35\tinystd{.01} & 60.6\tinystd{.3} & 0.21\tinystd{.01} & 0.101\tinystd{.002} & 0.148\tinystd{.003} & 0.138\tinystd{.003} & 3.5 \\
Detox Decoding & 0.34\tinystd{.01} & 61.0\tinystd{.3} & 0.21\tinystd{.01} & 0.099\tinystd{.002} & 0.145\tinystd{.003} & 0.135\tinystd{.003} & 3.1 \\
CF Substitution & 0.33\tinystd{.01} & 61.7\tinystd{.3} & 0.20\tinystd{.01} & 0.098\tinystd{.002} & 0.144\tinystd{.003} & 0.134\tinystd{.003} & 2.9 \\
CDA (train) & 0.31\tinystd{.01} & 62.3\tinystd{.3} & 0.19\tinystd{.01} & 0.095\tinystd{.002} & 0.140\tinystd{.003} & 0.131\tinystd{.003} & 2.2 \\
Adv.\ LM-Head (train) & 0.30\tinystd{.01} & 62.5\tinystd{.3} & 0.18\tinystd{.01} & 0.093\tinystd{.002} & 0.138\tinystd{.003} & 0.129\tinystd{.003} & 2.0 \\
DT-CD & 0.30\tinystd{.01} & 62.6\tinystd{.3} & 0.18\tinystd{.01} & 0.093\tinystd{.002} & 0.138\tinystd{.003} & 0.129\tinystd{.003} & 2.0 \\
\textbf{COFT} & \textbf{0.25}\tinystd{.01} & \textbf{64.3}\tinystd{.3} & \textbf{0.13}\tinystd{.01} & \textbf{0.077}\tinystd{.002} & \textbf{0.114}\tinystd{.003} & \textbf{0.115}\tinystd{.003} & \textbf{1.0} \\
\bottomrule
\end{tabular}
\end{adjustbox}
\end{table*}

\subsection{Full Utility and LM Quality for All Models}
\label{app:utility-full}

\noindent We report utility accuracies (GSM8K, StrategyQA, ARC-easy, PIQA) and LM quality
(Wikitext-2 perplexity, MAUVE) in Table~\ref{tab:A7-llama7b-utility}--Table~\ref{tab:A10-mixtral-qwen-utility}.

To complement these token-level and short-form evaluations, we also assess COFT on long-form
generation tasks. Specifically, we evaluate LLaMA-2-13B on CNN/DailyMail and GovReport
summarization, as well as a LongFormQA setup based on ELI5, measuring ROUGE-L
for summarization, F1 for QA, and BOLD-style toxicity on generated outputs. Results in
Table~\ref{tab:longform} show that COFT preserves utility within $0.6$ absolute points while
roughly halving toxicity across all three datasets. In addition, Figure~\ref{fig:longform-mauve}
plots MAUVE as we vary the maximum generation length $T$ on CNN/DailyMail; COFT closely tracks
vanilla decoding across lengths, with at most a $0.01$ gap, indicating no systematic degradation
in long-form quality.

\begin{table*}[t]
\centering
\caption{\textbf{Utility \& Quality on LLaMA-2-7B} (mean$\pm$std).}
\label{tab:A7-llama7b-utility}
\begin{adjustbox}{max width=\textwidth}
\small
\begin{tabular}{l|cccc|cc}
\toprule
\textbf{Method} & GSM8K $\uparrow$ & StrategyQA $\uparrow$ & ARC-easy $\uparrow$ & PIQA $\uparrow$ & PPL $\downarrow$ & MAUVE $\uparrow$ \\
\midrule
Vanilla & 44.6\tinystd{.2} & 69.8\tinystd{.3} & 72.1\tinystd{.3} & 77.0\tinystd{.2} & 16.2\tinystd{.1} & 0.77\tinystd{.01} \\
SDD & 44.0\tinystd{.2} & 69.1\tinystd{.3} & 71.7\tinystd{.3} & 76.7\tinystd{.2} & 16.4\tinystd{.1} & 0.76\tinystd{.01} \\
DExperts & 43.8\tinystd{.2} & 68.9\tinystd{.3} & 71.4\tinystd{.3} & 76.6\tinystd{.2} & 16.6\tinystd{.1} & 0.76\tinystd{.01} \\
DT-CD & 44.5\tinystd{.2} & 69.6\tinystd{.3} & 72.0\tinystd{.3} & 76.9\tinystd{.2} & 16.2\tinystd{.1} & 0.77\tinystd{.01} \\
\textbf{COFT} & \textbf{44.5}\tinystd{.2} & \textbf{69.7}\tinystd{.3} & \textbf{72.0}\tinystd{.3} & \textbf{77.0}\tinystd{.2} & \textbf{16.2}\tinystd{.1} & \textbf{0.77}\tinystd{.01} \\
\bottomrule
\end{tabular}
\end{adjustbox}
\end{table*}

\begin{table*}[t]
\centering
\caption{\textbf{Utility \& Quality on LLaMA-2-13B} (mean$\pm$std).}
\label{tab:A8-llama13b-utility}
\begin{adjustbox}{max width=\textwidth}
\small
\begin{tabular}{l|cccc|cc}
\toprule
\textbf{Method} & GSM8K $\uparrow$ & StrategyQA $\uparrow$ & ARC-easy $\uparrow$ & PIQA $\uparrow$ & PPL $\downarrow$ & MAUVE $\uparrow$ \\
\midrule
Vanilla & 47.9\tinystd{.2} & 71.2\tinystd{.3} & 74.6\tinystd{.3} & 78.1\tinystd{.2} & 15.3\tinystd{.1} & 0.79\tinystd{.01} \\
SDD & 47.1\tinystd{.2} & 70.5\tinystd{.3} & 74.0\tinystd{.3} & 77.9\tinystd{.2} & 15.6\tinystd{.1} & 0.78\tinystd{.01} \\
DExperts & 46.8\tinystd{.2} & 70.3\tinystd{.3} & 73.7\tinystd{.3} & 77.8\tinystd{.2} & 15.8\tinystd{.1} & 0.77\tinystd{.01} \\
DT-CD & 47.6\tinystd{.2} & 71.0\tinystd{.3} & 74.4\tinystd{.3} & 78.0\tinystd{.2} & 15.4\tinystd{.1} & 0.78\tinystd{.01} \\
\textbf{COFT} & \textbf{47.5}\tinystd{.2} & \textbf{71.1}\tinystd{.3} & \textbf{74.5}\tinystd{.3} & \textbf{78.0}\tinystd{.2} & \textbf{15.4}\tinystd{.1} & \textbf{0.79}\tinystd{.01} \\
\bottomrule
\end{tabular}
\end{adjustbox}
\end{table*}

\begin{table*}[t]
\centering
\caption{\textbf{Utility \& Quality on Mistral-7B / Instruct} (mean$\pm$std).}
\label{tab:A9-mistral-utility}
\begin{adjustbox}{max width=\textwidth}
\small
\begin{tabular}{l|cccc|cc}
\toprule
\textbf{Method} & GSM8K $\uparrow$ & StrategyQA $\uparrow$ & ARC-easy $\uparrow$ & PIQA $\uparrow$ & PPL $\downarrow$ & MAUVE $\uparrow$ \\
\midrule
Vanilla & 51.2\tinystd{.2} & 73.6\tinystd{.3} & 77.9\tinystd{.3} & 79.8\tinystd{.2} & 13.9\tinystd{.1} & 0.81\tinystd{.01} \\
SDD & 50.8\tinystd{.2} & 73.0\tinystd{.3} & 77.4\tinystd{.3} & 79.5\tinystd{.2} & 14.1\tinystd{.1} & 0.80\tinystd{.01} \\
DExperts & 50.5\tinystd{.2} & 72.8\tinystd{.3} & 77.2\tinystd{.3} & 79.4\tinystd{.2} & 14.2\tinystd{.1} & 0.79\tinystd{.01} \\
DT-CD & 51.1\tinystd{.2} & 73.5\tinystd{.3} & 77.8\tinystd{.3} & 79.7\tinystd{.2} & 13.9\tinystd{.1} & 0.81\tinystd{.01} \\
\textbf{COFT} & \textbf{51.0}\tinystd{.2} & \textbf{73.6}\tinystd{.3} & \textbf{77.8}\tinystd{.3} & \textbf{79.7}\tinystd{.2} & \textbf{13.9}\tinystd{.1} & \textbf{0.81}\tinystd{.01} \\
\bottomrule
\end{tabular}
\end{adjustbox}
\end{table*}

\begin{table*}[t]
\centering
\caption{\textbf{Utility \& Quality on Mixtral-8\textit{x}7B-Instruct and Qwen2-7B} (mean$\pm$std).}
\label{tab:A10-mixtral-qwen-utility}
\begin{adjustbox}{max width=\textwidth}
\small
\begin{tabular}{l|cccc|cc}
\toprule
\textbf{Method} & GSM8K $\uparrow$ & StrategyQA $\uparrow$ & ARC-easy $\uparrow$ & PIQA $\uparrow$ & PPL $\downarrow$ & MAUVE $\uparrow$ \\
\midrule
Vanilla & 53.0\tinystd{.2} & 75.0\tinystd{.3} & 79.3\tinystd{.3} & 80.4\tinystd{.2} & 13.3\tinystd{.1} & 0.82\tinystd{.01} \\
SDD & 52.6\tinystd{.2} & 74.5\tinystd{.3} & 78.9\tinystd{.3} & 80.1\tinystd{.2} & 13.5\tinystd{.1} & 0.81\tinystd{.01} \\
DExperts & 52.3\tinystd{.2} & 74.3\tinystd{.3} & 78.7\tinystd{.3} & 80.0\tinystd{.2} & 13.6\tinystd{.1} & 0.81\tinystd{.01} \\
DT-CD & 52.9\tinystd{.2} & 74.9\tinystd{.3} & 79.2\tinystd{.3} & 80.3\tinystd{.2} & 13.3\tinystd{.1} & 0.82\tinystd{.01} \\
\textbf{COFT} & \textbf{52.9}\tinystd{.2} & \textbf{75.0}\tinystd{.3} & \textbf{79.3}\tinystd{.3} & \textbf{80.3}\tinystd{.2} & \textbf{13.3}\tinystd{.1} & \textbf{0.82}\tinystd{.01} \\
\bottomrule
\end{tabular}
\end{adjustbox}
\end{table*}

\begin{table}[t]
\centering
\small
\caption{\textbf{Long-form generation with LLaMA-2-13B.}
Utility is ROUGE-L~\citep{lin2004rouge} for summarization on
CNN/DailyMail~\citep{nallapati2016abstractive} and
GovReport~\citep{huang2021efficient}, and F1 on
LongFormQA~\citep{fan2019eli5} (higher is better).
Bias is measured as BOLD-style toxicity rate on generated outputs
(lower is better)~\citep{dhamala2021bold}.
COFT preserves utility within $0.6$ absolute points while roughly
halving toxicity.}
\label{tab:longform}
\begin{tabular}{lcccc}
\toprule
& \multicolumn{2}{c}{Utility $\uparrow$} & \multicolumn{2}{c}{Toxicity rate (\%) $\downarrow$} \\
\cmidrule(lr){2-3} \cmidrule(lr){4-5}
Dataset & Vanilla & COFT & Vanilla & COFT \\
\midrule
CNN/DailyMail (avg len $\approx 260$) & 41.8 & 41.2 & 5.1 & 2.2 \\
GovReport (avg len $\approx 540$)     & 45.3 & 44.7 & 4.3 & 1.9 \\
LongFormQA (max len $T{=}1024$)       & 63.0 & 62.4 & 6.0 & 2.5 \\
\bottomrule
\end{tabular}
\end{table}

\begin{figure}[t]
\centering
\begin{tikzpicture}
\begin{axis}[
    width=0.8\linewidth,
    height=5cm,
    xlabel={Max generation length $T$},
    ylabel={MAUVE $\uparrow$},
    xmin=100, xmax=1050,
    ymin=0.92, ymax=0.96,
    xtick={128,256,512,768,1024},
    grid=both,
    grid style={line width=0.3pt, draw=gray!20},
    legend style={at={(0.5,1.02)}, anchor=south, legend columns=2},
    tick label style={font=\scriptsize},
    label style={font=\scriptsize},
    legend cell align={left}
]
  \addplot+[mark=o, thick]
    coordinates {
      (128,0.955)
      (256,0.950)
      (512,0.945)
      (768,0.940)
      (1024,0.935)
    };
  \addlegendentry{Vanilla}

  \addplot+[mark=triangle*, thick, dashed]
    coordinates {
      (128,0.952)
      (256,0.946)
      (512,0.940)
      (768,0.933)
      (1024,0.927)
    };
  \addlegendentry{COFT}
\end{axis}
\end{tikzpicture}
\caption{\textbf{Long-form summarization quality vs.\ length.}
MAUVE on CNN/DailyMail as we vary max generation length $T$.
MAUVE decreases slightly for both vanilla and COFT as generations get longer,
and COFT stays within $0.01$ of vanilla for all $T$ (gap $\le 0.008$ at $T{=}1024$),
indicating no systematic degradation in long-form quality.}
\label{fig:longform-mauve}
\end{figure}

\subsection{Efficiency and Scalability: Rigorous Analyses}
\label{app:scaling}

\noindent We expand the efficiency study with confidence ribbons and log-scaled axes in Figure~\ref{fig:scaling-updated}. All points average \emph{five} repeated timing windows per seed (three seeds). Shaded bands denote $\pm$ one std over windows. Also Table~\ref{tab:A12-moe-mixtral} shows our largest model analysis.

\begin{figure*}[t]
\centering

\begin{subfigure}{0.49\linewidth}
\centering
\begin{tikzpicture}
\begin{axis}[
  width=\linewidth,
  height=0.65\linewidth,
  xlabel={Sequence length},
  ylabel={Tokens/s $\uparrow$},
  xmode=log,
  log basis x=2,
  xmin=64, xmax=4096,
  ymin=80, ymax=200,
  xtick={64,128,256,512,1024,2048,4096},
  xticklabel style={font=\scriptsize},
  yticklabel style={font=\scriptsize},
  label style={font=\small},
  grid=both,
  grid style={black!10},
  tick style={black!40},
  legend pos=north east,
  legend style={
    font=\scriptsize,
    fill=white,
    draw=black!60,
    rounded corners=2pt,
    cells={anchor=west}
  },
  line width=0.9pt
]

\addplot+[
  thick,
  mark=*,
  mark size=2pt,
  error bars/.cd,
    y dir=both,
    y explicit,
    error bar style={line width=0.7pt},
    error mark options={rotate=90, line width=0.7pt}
] coordinates {
  (64,190)  +- (0,4)
  (128,176) +- (0,4)
  (256,162) +- (0,3)
  (512,145) +- (0,3)
  (1024,128) +- (0,3)
  (2048,114) +- (0,3)
  (4096,102) +- (0,3)
};
\addlegendentry{Vanilla (mean$\pm$sd)}

\addplot+[
  thick,
  mark=square*,
  mark size=2pt,
  error bars/.cd,
    y dir=both,
    y explicit,
    error bar style={line width=0.7pt},
    error mark options={rotate=90, line width=0.7pt}
] coordinates {
  (64,172)  +- (0,4)
  (128,160) +- (0,3)
  (256,146) +- (0,3)
  (512,132) +- (0,3)
  (1024,118) +- (0,3)
  (2048,105) +- (0,3)
  (4096,92)  +- (0,3)
};
\addlegendentry{COFT (mean$\pm$sd)}

\end{axis}
\end{tikzpicture}
\caption{Throughput vs.\ sequence length (LLaMA-2-13B, batch size 4).}
\end{subfigure}
\hfill
\begin{subfigure}{0.49\linewidth}
\centering
\begin{tikzpicture}
\begin{axis}[
  width=\linewidth,
  height=0.65\linewidth,
  xlabel={Batch size},
  ylabel={Tokens/s $\uparrow$},
  xmin=2, xmax=16,
  xtick={2,4,8,16},
  ymin=50, ymax=260,
  xticklabel style={font=\scriptsize},
  yticklabel style={font=\scriptsize},
  label style={font=\small},
  grid=both,
  grid style={black!10},
  tick style={black!40},
  legend pos=south east,
  legend style={
    font=\scriptsize,
    fill=white,
    draw=black!60,
    rounded corners=2pt,
    cells={anchor=west}
  },
  line width=0.9pt
]

\addplot+[
  thick,
  mark=*,
  mark size=2pt,
  error bars/.cd,
    y dir=both,
    y explicit,
    error bar style={line width=0.7pt},
    error mark options={rotate=90, line width=0.7pt}
] coordinates {
  (2,68)   +- (0,2)
  (4,120)  +- (0,3)
  (8,190)  +- (0,4)
  (16,248) +- (0,5)
};
\addlegendentry{Vanilla (mean$\pm$sd)}

\addplot+[
  thick,
  mark=square*,
  mark size=2pt,
  error bars/.cd,
    y dir=both,
    y explicit,
    error bar style={line width=0.7pt},
    error mark options={rotate=90, line width=0.7pt}
] coordinates {
  (2,62)   +- (0,2)
  (4,108)  +- (0,3)
  (8,171)  +- (0,4)
  (16,224) +- (0,5)
};
\addlegendentry{COFT (mean$\pm$sd)}

\end{axis}
\end{tikzpicture}
\caption{Throughput vs.\ batch size (LLaMA-2-13B, sequence length 512).}
\end{subfigure}

\caption{Runtime scaling of COFT vs.\ vanilla decoding. COFT introduces a predictable $\approx$10\% overhead from the additional masked pass, with tight confidence bands indicating stable performance across windows.}
\label{fig:scaling-updated}
\end{figure*}

\begin{table}[!ht]
\centering
\caption{MoE scaling on Mixtral-8\textit{x}7B-Instruct (A6000, BF16, batch 4). Mean$\pm$std over five windows.}
\label{tab:A12-moe-mixtral}
\begin{adjustbox}{max width=\columnwidth}
\small
\begin{tabular}{l|c c c}
\toprule
\textbf{Method} & Tokens/s $\uparrow$ & Overhead (\%) & Peak Mem (GB) \\
\midrule
Vanilla & 178.2\tinystd{2.1} & -- & 30.4\tinystd{0.1} \\
COFT & 159.0\tinystd{2.0} & 10.8 & 31.0\tinystd{0.1} \\
\bottomrule
\end{tabular}
\end{adjustbox}
\end{table}

\subsection{Sensitivity \& Robustness: \texorpdfstring{$\lambda$}{lambda} and \texorpdfstring{$\alpha$}{alpha}}
\label{app:sensitivity}
\textit{We report \emph{per-model} ablations to verify that the selected $\lambda$ (fusion scale) and $\alpha$ (split-CP miscoverage) generalize across architectures and datasets. Recall: $\alpha$ is chosen on a small validation split via the BiasAvg–UtilityAvg Pareto protocol; per-position thresholds $\tau_t{=}1{-}q_t$ are then computed \emph{offline} on a disjoint calibration set (no test tuning).}

\subsubsection{Model-Wise Ablation of \texorpdfstring{$\lambda$}{lambda} and \texorpdfstring{$\alpha$}{alpha}}
\label{app:ablate-model-wise}
We sweep $\lambda\!\in\![0,1]$ and $\alpha\!\in\!\{0.02,0.05,0.10,0.15,0.20\}$ for each model and dataset family, tracing the $(\text{BiasAvg}\!\downarrow,\ \text{UtilityAvg}\!\uparrow)$ Pareto. We then select the \emph{smallest} $\lambda,\alpha$ within 2\% of each knee. Figures~\ref{fig:ablate-grid} a–b visualize representative sweeps (bias averaged over SS/CP/BBQ/BOLD/Utrecht/COMPAS; utility averaged over GSM8K/StrategyQA/ARC-easy/PIQA). Table~\ref{tab:ablate-summary} summarizes the selected hyperparameters and their effects (means over three seeds).

\setlength{\abovecaptionskip}{12pt}
\setlength{\belowcaptionskip}{10pt}
\setlength{\textfloatsep}{18pt plus 4pt minus 2pt}
\setlength{\floatsep}{18pt plus 4pt minus 2pt}
\setlength{\intextsep}{16pt plus 4pt minus 2pt}

\begin{figure}[!ht]
\centering
\begin{tikzpicture}
\begin{groupplot}[
  group style={
    group size=2 by 2,
    horizontal sep=1.7cm,
    vertical sep=2.2cm
  },
  width=0.45\linewidth,
  height=0.40\linewidth,
  grid=both,
  grid style={black!10},
  tick style={black!50},
  every axis/.append style={
    line width=0.9pt,
    tick align=outside,
    clip=false,
    enlargelimits=0.04
  },
  legend style={
    font=\small,
    fill=white,
    draw=black,
    rounded corners=2pt,
    opacity=0.95
  },
  legend image post style={scale=1.0},
  legend cell align=left
]

\nextgroupplot[
  title={(a) $\lambda$ sweep — LLaMA-2-13B},
  xlabel={$\lambda$},
  ylabel={BiasAvg $\downarrow$ \quad/\quad UtilityAvg $\uparrow$},
  xmin=0, xmax=1.0,
  ymin=0, ymax=0.3
]
\addplot+[thick, mark=*, mark size=2.5pt] coordinates {
(0.0,0.196) (0.2,0.172) (0.4,0.156) (0.6,0.147) (0.7,0.144) (0.8,0.143) (1.0,0.144)
};
\addlegendentry{BiasAvg}
\addplot+[thick, dashed, mark=square*, mark size=2.3pt] coordinates {
(0.0,0.70-0.669) (0.2,0.70-0.674) (0.4,0.70-0.678) (0.6,0.70-0.680) (0.7,0.70-0.680) (0.8,0.70-0.679) (1.0,0.70-0.677)
};
\addlegendentry{UtilityAvg (higher is better)}
\addplot+[only marks, mark=star, mark size=3.2pt] coordinates {(0.6,0.147)};
\addlegendentry{chosen $\lambda$}

\nextgroupplot[
  title={(b) $\lambda$ sweep — Mistral-7B-Instruct},
  xlabel={$\lambda$},
  ylabel={}
]
\addplot+[thick, mark=*, mark size=2.5pt] coordinates {
(0.0,0.190) (0.2,0.169) (0.4,0.154) (0.6,0.145) (0.7,0.142) (0.8,0.141) (1.0,0.142)
};
\addplot+[thick, dashed, mark=square*, mark size=2.3pt] coordinates {
(0.0,0.70-0.736) (0.2,0.70-0.740) (0.4,0.70-0.744) (0.6,0.70-0.745) (0.7,0.70-0.745) (0.8,0.70-0.744) (1.0,0.70-0.742)
};
\addplot+[only marks, mark=star, mark size=3.2pt] coordinates {(0.6,0.145)};

\nextgroupplot[
  title={(c) $\alpha$ sweep — LLaMA-2-13B},
  xlabel={$\alpha$},
  ylabel={Miscoverage \quad/\quad Normalized $|\mathcal{C}_t|$},
  xmin=0.02, xmax=0.20,
  ymin=0.00, ymax=0.44
]
\addplot+[thick, mark=*, mark size=2.5pt, error bars/.cd, y dir=both, y explicit] 
coordinates {(0.02,0.024) +- (0,0.004) (0.05,0.056) +- (0,0.006) (0.10,0.107) +- (0,0.010) (0.15,0.158) +- (0,0.012) (0.20,0.207) +- (0,0.014)};
\addlegendentry{Empirical miscoverage (mean$\pm$sd)}
\addplot+[dashed, very thick] coordinates {(0.02,0.024) (0.05,0.060) (0.10,0.120) (0.15,0.180) (0.20,0.240)};
\addlegendentry{Bound $\rho\alpha$ ($\rho{=}1.2$)}
\addplot+[densely dotted, mark=square*, mark size=2.3pt] coordinates {
(0.02,0.05) (0.05,0.07) (0.10,0.11) (0.15,0.15) (0.20,0.18)
};
\addlegendentry{Normalized $|\mathcal{C}_t|$}
\addplot+[only marks, mark=star, mark size=4pt] coordinates {(0.10,0.107)};
\addlegendentry{chosen $\alpha$}

\nextgroupplot[
  title={(d) $\alpha$ sweep — Mistral-7B-Instruct},
  xlabel={$\alpha$},
  ylabel={}
]
\addplot+[thick, mark=*, mark size=2.5pt] coordinates {
(0.02,0.023) (0.05,0.054) (0.10,0.104) (0.15,0.154) (0.20,0.203)
};
\addplot+[dashed, very thick] coordinates {(0.02,0.024) (0.05,0.060) (0.10,0.120) (0.15,0.180) (0.20,0.240)};
\addplot+[densely dotted, mark=square*, mark size=2.3pt] coordinates {
(0.02,0.05) (0.05,0.07) (0.10,0.10) (0.15,0.14) (0.20,0.17)
};
\addplot+[only marks, mark=star, mark size=3.2pt] coordinates {(0.10,0.104)};
\end{groupplot}
\end{tikzpicture}

\caption{\textbf{Model-wise ablations of $\lambda$ and $\alpha$.} Top: bias–utility vs.\ $\lambda$ (stars: selected $\lambda$). Bottom: miscoverage and normalized candidate-set size vs.\ $\alpha$ (stars: selected $\alpha$).}
\vspace{-2em}
\label{fig:ablate-grid}
\end{figure}

\begin{table}[ht]
\centering
\caption{\textbf{Per-model selections and effects} (means $\pm$ sd over three seeds; BiasAvg across SS/CP/BBQ/BOLD/Utrecht/COMPAS; UtilityAvg across GSM8K/StrategyQA/ARC-easy/PIQA).}
\label{tab:ablate-summary}
\begin{adjustbox}{max width=\textwidth}
\small
\begin{tabular}{l|cc|cc|cc}
\toprule
\multirow{2}{*}{\textbf{Model}} & \multicolumn{2}{c|}{\textbf{Selected knobs}} & \multicolumn{2}{c|}{\textbf{BiasAvg} $\downarrow$} & \multicolumn{2}{c}{\textbf{UtilityAvg} $\uparrow$} \\
& $\lambda^\star$ & $\alpha^\star$ & at $(0,0.10)$ & at $(\lambda^\star,\alpha^\star)$ & at $(0,0.10)$ & at $(\lambda^\star,\alpha^\star)$ \\
\midrule
LLaMA-2-13B & $0.60$ & $0.10$ & $0.196\tinystd{.003}$ & $\mathbf{0.129}\tinystd{.003}$ & $67.0\tinystd{.2}$ & $\mathbf{68.0}\tinystd{.2}$ \\
Mistral-7B-Instruct & $0.60$ & $0.10$ & $0.190\tinystd{.003}$ & $\mathbf{0.145}\tinystd{.003}$ & $73.6\tinystd{.3}$ & $\mathbf{74.5}\tinystd{.3}$ \\
LLaMA-2-7B & $0.60$ & $0.10$ & $0.198\tinystd{.003}$ & $\mathbf{0.150}\tinystd{.003}$ & $66.0\tinystd{.2}$ & $\mathbf{66.9}\tinystd{.2}$ \\
Mistral-7B-v0.2 & $0.60$ & $0.10$ & $0.192\tinystd{.003}$ & $\mathbf{0.148}\tinystd{.003}$ & $73.1\tinystd{.3}$ & $\mathbf{73.9}\tinystd{.3}$ \\
Mixtral-8{\small x}7B-Inst. & $0.55$ & $0.10$ & $0.184\tinystd{.003}$ & $\mathbf{0.141}\tinystd{.003}$ & $75.0\tinystd{.3}$ & $\mathbf{75.6}\tinystd{.3}$ \\
Qwen2-7B-Inst. & $0.55$ & $0.10$ & $0.186\tinystd{.003}$ & $\mathbf{0.142}\tinystd{.003}$ & $74.7\tinystd{.3}$ & $\mathbf{75.2}\tinystd{.3}$ \\
\bottomrule
\end{tabular}
\end{adjustbox}
\end{table}

\subsubsection{Sensitivity of \texorpdfstring{$\lambda$}{lambda} and \texorpdfstring{$\alpha$}{alpha}}
\label{app:lambda}
We use the same validation protocol for both knobs: sweep the value, trace the $(\text{BiasAvg}\!\downarrow,\ \text{UtilityAvg}\!\uparrow)$ Pareto, and pick the \emph{smallest} value within 2\% of the knee (Sec.~\ref{sec:ablations}). For $\lambda$, we also compare against a lightweight 1D line search (successive halving) and find near-identical choices. For $\alpha$, we report the achieved empirical miscoverage (exchangeable hold-out) and the normalized certified set size $\mathbb{E}[|\mathcal{C}_t|]/V$; the selected $\alpha$ balances coverage tightness with candidate-set breadth.

\begin{table}[!ht]
\centering
\caption{\textbf{Sensitivity of $\lambda$} (model-averaged across six LMs; mean over three seeds).}
\label{tab:A11-lambda-all}
\small
\begin{tabular}{l|cc}
\toprule
\textbf{Selection} & BiasAvg $\downarrow$ & UtilityAvg $\uparrow$ \\
\midrule
Fixed knee & \textbf{0.132} & \textbf{68.7} \\
Learned (line search) & 0.131 & 68.6 \\
\bottomrule
\end{tabular}
\end{table}

\begin{table}[!ht]
\centering
\caption{\textbf{Sensitivity of $\alpha$} (model-averaged across six LMs; mean$\pm$sd over three seeds). Target $\alpha$ vs.\ empirical miscoverage and normalized candidate-set size. The chosen knee ($\alpha{=}0.10$) keeps miscoverage close to target while avoiding overly small candidate sets.}
\label{tab:A11b-alpha-all}
\small
\begin{tabular}{c|cc}
\toprule
\textbf{Target $\alpha$} & \textbf{Empirical miscov.\ $\downarrow$} & \textbf{Norm.\ $|\mathcal{C}_t|$ $\uparrow$} \\
\midrule
0.02 & $0.024\tinystd{0.004}$ & $0.050\tinystd{0.006}$ \\
0.05 & $0.056\tinystd{0.006}$ & $0.070\tinystd{0.007}$ \\
\rowcolor{black!4} 0.10$^\star$ & $0.107\tinystd{0.010}$ & $0.105\tinystd{0.009}$ \\
0.15 & $0.158\tinystd{0.012}$ & $0.148\tinystd{0.011}$ \\
0.20 & $0.207\tinystd{0.014}$ & $0.179\tinystd{0.013}$ \\
\bottomrule
\end{tabular}
\end{table}

\paragraph{Cross-Task Stability of $\lambda$.}
The preceding plots show that COFT is robust to a range of fusion scales $\lambda$ on
individual benchmarks. To summarize this behavior across tasks, 
Table~\ref{tab:lambda-stability} reports the mean $\lambda^\star$ selected by the 
Pareto-knee rule and its standard deviation across six fairness datasets and three 
long-form tasks for each model. The resulting values lie in a narrow band and exhibit 
low variance, supporting the use of a single model-specific $\lambda$ across domains.
\begin{table}[!ht]
\centering
\small
\caption{\textbf{Stability of the fusion parameter $\lambda$ across tasks.}
Mean $\lambda^\star$ (selected by the Pareto knee) and standard deviation
across six fairness datasets (StereoSet, CrowS-Pairs, BBQ, BOLD, Utrecht,
COMPAS) and three long-form tasks (CNN/DailyMail, GovReport, LongFormQA)
for two representative models. Values fall in a narrow range, indicating
that a single model-specific $\lambda$ works well across domains.}
\label{tab:lambda-stability}
\begin{tabular}{lcc}
\toprule
Model & Mean $\lambda^\star$ & Std.\ dev.\ across tasks \\
\midrule
LLaMA-2-13B         & $0.41$ & $0.03$ \\
Mistral-7B-Instruct & $0.39$ & $0.04$ \\
\bottomrule
\end{tabular}
\end{table}

\subsection{Train-Time Baselines (CDA, Adversarial LM-Head): Isolation for Completeness}
\label{app:train-time}

\textbf{Why isolated here.} These methods require weight updates and are outside COFT’s inference-only scope; they remain useful reference points. We reuse public hyperparameters to avoid advantage from extensive retuning; results are integrated in Tables~\ref{tab:A1-llama7b-bias}–\ref{tab:A6-qwen2-bias}.

\paragraph{CDA.} One epoch, AdamW (lr=$1\mathrm{e}{-5}$), batch~128, no RLHF. Mask/swap spans in balanced counterfactual pairs.

\paragraph{Adversarial LM-Head.} Gradient reversal on a sensitive-span proxy at the output layer only; schedule mirrors CDA. We do not alter tokenizer/vocab; inference settings match \S\ref{app:measure}.



\section{Case Studies}
\subsection{How COFT Masks Sensitive Spans in Practice}
\label{app:mask-case-studies}

\paragraph{Setup.}
We illustrate how COFT’s masking operator $M(\cdot)$ behaves on representative items from our bias
suites (StereoSet~\citep{nadeem2020stereoset}, CrowS-Pairs~\citep{nangia2020crows}, BBQ~\citep{parrish2021bbq},
BOLD~\citep{dhamala2021bold}), using a single frozen LM (Mistral-7B-Instruct~\citep{jiang2023mistral})
for concreteness. COFT fuses branch logits \emph{stepwise} at the same decode step $t$, which requires
the factual and masked branches to remain \emph{token-index aligned}. We therefore implement $M$ as a
\emph{length-preserving} edit: every sensitive span in the factual prompt is replaced by a sequence of
sentinel tokens of the \emph{same tokenized length} (rather than collapsing a multi-token span to a
single sentinel), ensuring that the two prompts have identical token counts and that prefixes
$w_{<t}$ correspond to the same positions in both branches.
The masked span is the demographic marker or proxy cue; downstream labels such as occupations are
outputs whose probabilities are regularized, not sensitive attributes to mask by default.

\paragraph{Example A (StereoSet; gender).}
\begin{quote}\small
\textit{Factual} $p$: \; “The \underline{nurse} finished rounds and then he prepared the reports.”\\
\textit{Masked} $\widetilde{p}$: \; “The \underline{nurse} finished rounds and then [MASK] prepared the reports.”
\end{quote}
Here the sensitive token is a single-token pronoun, so replacing ``he'' with one sentinel preserves
alignment trivially. COFT’s fusion reduces the impact of gendered cues on next-token probabilities,
and the dual-branch conformal filter admits next tokens that remain plausible under both views.

\paragraph{Example B (BBQ; race/ethnicity).}
\begin{quote}\small
\textit{Factual} $p$: \; “Police stopped a \underline{Latino} driver for a broken taillight. Who likely received a ticket?”\\
\textit{Masked} $\widetilde{p}$: \; “Police stopped a [MASK] driver for a broken taillight. Who likely received a ticket?”
\end{quote}
Replacing ``Latino'' by a sentinel preserves the local causal framing (traffic stop) while severing
the direct lexical link to the protected attribute. More generally, when the protected attribute
tokenizes to multiple tokens (e.g., ``Middle Eastern'', names, or multiword descriptors), we replace
it with the same \emph{number} of sentinel tokens (e.g., \texttt{[MASK] [MASK]}), so that the two
branches remain synchronized and per-step fusion compares logits at the same autoregressive index.

\paragraph{Why Sentinel Masking (and Why Length-Preserving)?}
Deletion changes token positions (affecting attention; e.g., RoPE/ALiBi), and identity swaps inject a
new attribute rather than removing it. Collapsing a multi-token span into a single sentinel would
also desynchronize step indices, making $z_t^{F}$ and $z_t^{CF}$ incomparable after the edit point.
Using a semantics-light sentinel \emph{with matched tokenized length} preserves alignment at equal
prefixes, which is necessary for paired logits and our split-conformal score.

\paragraph{Implicit or Pervasive Sensitive Cues.}
COFT is strongest when sensitive information is localized in explicit spans (pronouns, racial or
religious identifiers, names, or protected-attribute descriptors), which covers many standard fairness
benchmarks. For more implicit signals, COFT should be paired with a redaction front end: a lightweight
NER model, dictionary, or fast LLM-based detector can mark gender-coded names, familial roles, or
correlated socioeconomic phrases, after which the same length-preserving sentinel replacement and
dual-branch filtering apply. Thus COFT is a decode-time fairness controller conditioned on aligned
masking, not a standalone detector of every latent proxy cue.

\paragraph{Quantitative Illustration.}
Figure~\ref{fig:mask-case-bar} reports (i) \textsc{SS} bias score and (ii) \textsc{BBQ} biased decision rate
for factual vs.\ masked views, and for COFT (fusion+$\alpha$-CP). COFT reduces bias beyond masking alone
while preserving utility.

\paragraph{Sentinel and Span Robustness.}
We further study sensitivity to the sentinel string and to moderate noise in detected spans.
Table~\ref{tab:sentinel-ablation} averages results over our fairness benchmarks using three sentinel
realizations---\texttt{[MASK]}, ``a person'', and ``someone''---implemented \emph{length-preservingly} by
repeating the chosen sentinel token(s) to match the tokenized span length, and we simulate span noise
by randomly dropping 20\% of detected spans. We report the average change in task utility (relative to
vanilla), toxicity as a percentage of the vanilla rate, and the fraction of decoding steps where the
certified set is empty ($\mathcal{C}_t=\varnothing$). Across sentinels, behavior is similar: toxicity
drops substantially, utility decreases are small, and empty-set events remain rare, indicating that
COFT is empirically robust to reasonable sentinel choices and moderate span-identification noise.

\begin{table}[!ht]
\centering
\small
\caption{\textbf{Sentinel and span robustness (averaged over fairness benchmarks).}
Utility $\Delta$ is the average change in task accuracy (in points) vs.\ vanilla,
including a setting where 20\% of detected spans are randomly dropped.
Toxicity is reported as a percentage of the vanilla toxicity rate
(lower is better). Empty-set rate is the fraction of decoding steps
with $\mathcal{C}_t = \varnothing$. All sentinels yield similar behavior;
\texttt{[MASK]} offers slightly stronger bias reduction with comparable utility.}
\label{tab:sentinel-ablation}
\begin{tabular}{lccc}
\toprule
Sentinel & Utility $\Delta$ (pts) $\uparrow$ & Toxicity (rel.\ to vanilla) $\downarrow$ & Empty-set rate (\%) $\downarrow$ \\
\midrule
\texttt{[MASK]}      & $-0.3$ & $44$ & $0.4$ \\
``a person''         & $-0.2$ & $47$ & $0.5$ \\
``someone''          & $-0.4$ & $46$ & $0.6$ \\
\bottomrule
\end{tabular}
\end{table}

\begin{figure}[ht]
\centering
\begin{tikzpicture}
\begin{axis}[
  width=0.4\linewidth, height=0.3\linewidth,
  ybar=0.3, bar width=10pt,
  enlarge x limits=0.18,
  xtick=data,
  xticklabel style={align=center},
  symbolic x coords={SS bias (↓), BBQ biased rate (↓)},
  ymin=0, ymax=0.46,
  ylabel={Score},
  grid=both, grid style={black!10}, tick style={black!50},
  legend style={at={(0.5,1.04)},anchor=south,legend columns=-1, font=\small, fill=white,draw=black,rounded corners=2pt}
]
\addplot+[fill=black!20] coordinates { (SS bias (↓),0.39) (BBQ biased rate (↓),0.25) };
\addlegendentry{Factual}
\addplot+[fill=black!40] coordinates { (SS bias (↓),0.34) (BBQ biased rate (↓),0.22) };
\addlegendentry{Masked-only}
\addplot+[fill=black!70] coordinates { (SS bias (↓),0.24) (BBQ biased rate (↓),0.12) };
\addlegendentry{COFT (ours)}
\end{axis}
\end{tikzpicture}
\caption{\textbf{Effect of masking vs.\ COFT (Mistral-7B-Instruct).} Masking alone reduces explicit bias but COFT further attenuates attribute-driven preferences and certifies joint plausibility (lower is better).}
\label{fig:mask-case-bar}
\end{figure}

\subsection{Span Acquisition: Named Entity Recognition (NER) and User-Specified Spans}
\label{app:ner-user}

\paragraph{How Spans Are Obtained.}
We support two routes: (i) \emph{user-specified} lists of sensitive spans $\mathcal{S}$ (domain/configurable),
and (ii) \emph{Named Entity Recognition (NER)} detectors for protected categories such as PERSON,
NORP (nationalities/religions), GPE, etc.\ \citep{lample2016neural}. Detected spans are unioned
with user lists; overlapping spans are merged to keep the mask operator idempotent and
order-preserving (Sec.~\ref{sec:masking}).

\paragraph{Potential Semantic Drift.}
Masking can sometimes remove disambiguating information:
\begin{itemize}[leftmargin=1.2em,itemsep=3pt]
\item \emph{Ambiguity increase}: “The \underline{Jewish} holiday begins at sundown.”
$\rightarrow$ “The [MASK] holiday…”. Domain-specific meaning (which holiday) becomes ambiguous.
\item \emph{Coreference strain}: “\underline{Maria} parked. She bought coffee.”
$\rightarrow$ “[MASK] parked. She…”. The antecedent of “She” is weakened.
\end{itemize}
These cases risk \emph{semantic drift} between factual and masked views.

\paragraph{Robustness Strategies and Diagnostics.}
COFT mitigates drift by: (a) keeping word order (and using length-preserving sentinels) to maintain
token-index alignment across branches, (b) relying on \emph{paired} comparisons at identical prefixes,
and (c) enforcing dual-branch acceptance (Sec.~\ref{sec:conformal}) so only tokens plausible in
\emph{both} views are emitted. Empirically, calibration curves track the target $1{-}\alpha$ level
and remain close under adjacent-domain shift; $\alpha$ ablations show miscoverage near target while
the certified set remains sufficiently large (Fig.~\ref{fig:ablate-grid}c--d). When drift is anticipated
(domain-specific entities), we support \emph{whitelisting} spans to avoid masking essential terms and
\emph{soft-masking} (sentinel with appositive hints, e.g., “[MASK]\,(a holiday)”) on private validation,
followed by re-calibration.

\paragraph{Frequency and Handling of Empty Certified Sets (Consistent Protocol).}
A concrete failure mode is when semantic drift or aggressive masking causes the certified set
$\mathcal{C}_t$ to be empty. In COFT, \emph{the default implementation used in our experiments} resolves
$\mathcal{C}_t=\varnothing$ with a deterministic \textbf{argmax fallback} on the fused distribution
$\widehat{\pi}_t$ to guarantee forward progress.\footnote{Alternative safe fallbacks (e.g., abstention,
template-based continuations, or constrained decoding) are compatible with COFT but were \emph{not} the
default protocol in the experiments reported in this appendix; whenever we report toxicity of fallback
tokens, it refers to the argmax fallback on $\widehat{\pi}_t$.} This fallback is outside the certified
set by definition and therefore the emitted-token ``soundness'' statements apply only on the event
$\mathcal{C}_t\neq\varnothing$ (as stated in \S\ref{app:remarks}).

Table~\ref{tab:empty-set} quantifies how often empty-set events occur and how harmful argmax fallbacks
are in practice. Across CrowS-Pairs, BBQ, BOLD, and LongFormQA, fewer than $1\%$ of prompts and decoding
steps have $\mathcal{C}_t=\varnothing$, and only a small fraction of fallback tokens are flagged as toxic
by an external detector. Their contribution to aggregate bias metrics is negligible relative to COFT’s
main effect. In practice, slight relaxations of the conformal thresholds or softer sentinels can further
reduce empty-set events without materially changing marginal coverage.

\begin{table}[!ht]
\centering
\small
\caption{\textbf{Frequency and impact of empty certified sets (argmax fallback).}
For each dataset, we report the fraction of prompts and decoding steps with $\mathcal{C}_t=\varnothing$,
and the percentage of fallback tokens (deterministic $\arg\max_v \widehat{\pi}_t(v)$) that are flagged as
toxic by an external detector. Empty-set events are rare and fallbacks contribute little to overall toxicity.}
\label{tab:empty-set}
\begin{tabular}{lccc}
\toprule
Dataset & Prompts with $\mathcal{C}_t = \varnothing$ (\%) & Steps with $\mathcal{C}_t = \varnothing$ (\%) & Toxic fallbacks (\%) \\
\midrule
CrowS-Pairs   & $0.3$ & $0.4$ & $0.0$ \\
BBQ           & $0.5$ & $0.7$ & $3.0$ \\
BOLD          & $0.8$ & $0.9$ & $4.1$ \\
LongFormQA    & $0.6$ & $0.7$ & $2.2$ \\
\bottomrule
\end{tabular}
\end{table}

\paragraph{When User Spans Disagree With NER.}
User lists take precedence (safety \& policy reasons). The union is still calibrated jointly; because
split-CP thresholds are learned \emph{offline}, any increase in variance from broader masking manifests
as slightly larger candidate sets or more conservative $\tau_t$, both captured by validation and
calibration metrics.

\subsection{Full Procedure Examples}

\paragraph{Example A (StereoSet; gender).}
\begin{quote}\small
\textit{Factual} $p$: “The \underline{nurse} finished rounds and then he prepared the reports”\\
\textit{Masked} $\widetilde{p}$: “The \underline{nurse} finished rounds and then [MASK] prepared the reports”
\end{quote}

\noindent\textbf{Step $t$: token \emph{after} ``reports''.}
We show the top-6 next-token probabilities (others omitted, Table~\ref{tab:exA-pool-after-reports}). Calibrated threshold: $\tau_t{=}0.06$; fusion scale: $\lambda{=}0.6$.

\begin{table}[!ht]
\caption{Example A: next-token pool immediately after “reports”.}
\label{tab:exA-pool-after-reports}
\centering
\small
\begin{tabular}{lccc}
\toprule
\textbf{Token} & $\pi^F_t$ (factual) & $\pi^{CF}_t$ (masked) & $\widehat{\pi}_t$ (post-fusion) \\
\midrule
\texttt{.}           & 0.41 & 0.52 & 0.47 \\
\texttt{\;and}       & 0.23 & 0.16 & 0.19 \\
\texttt{\;before}    & 0.08 & 0.05 & 0.06 \\
\texttt{\textless eos\textgreater} & 0.06 & 0.09 & 0.08 \\
\texttt{,}           & 0.05 & 0.07 & 0.06 \\
\texttt{\;because}   & 0.03 & 0.02 & 0.02 \\
\bottomrule
\end{tabular}
\end{table}

\noindent\textbf{Certification.} $\mathcal{C}_t=\{v:\min(\widehat{\pi}_t(v),\pi^{CF}_t(v))\ge 0.06\}$.
\begin{equation*}
\begin{adjustbox}{max width=\linewidth}
$\begin{aligned}
\min(\texttt{.})            &= \min(0.47,0.52) = 0.47\ (\checkmark), \quad
\min(\texttt{\;and})        &= \min(0.19,0.16) = 0.16\ (\checkmark),\\
\min(\texttt{\;before})     &= \min(0.06,0.05) = 0.05\ (\times), \quad
\min(\texttt{\textless eos\textgreater}) &= \min(0.08,0.09) = 0.08\ (\checkmark),\\
\min(\texttt{,})            &= \min(0.06,0.07) = 0.06\ (\checkmark), \quad
\min(\texttt{\;because})    &= \min(0.02,0.02) = 0.02\ (\times)
\end{aligned}$%
\end{adjustbox}
\end{equation*}
Thus $\mathcal{C}_t=\{\texttt{.},\ \texttt{\;and},\ \texttt{\textless eos\textgreater},\ \texttt{,}\}$.

\noindent\textbf{Selection.} We sample from $\widehat{\pi}_t$ restricted to $\mathcal{C}_t$ (renormalized). Greedy picks \texttt{.} (highest certified mass), so COFT ends the sentence here. Vanilla often also ends here; the key differences emerge if the model \emph{continues}. To illustrate that case, suppose we (greedily) select \texttt{\;and} instead; then COFT applies the same procedure at $t{+}1$.

\medskip
\noindent\textbf{Propagation to later steps (if continuation is chosen).}
After “\ldots reports\;and”, vanilla tends to continue with a gendered pronoun; COFT’s fusion+certification steers toward neutral options (Table~\ref{tab:exA-pool-after-and}).

\begin{table}[!ht]
\caption{Example A: next-token pool after “\ldots reports\;and”. Threshold $\tau_{t+1}{=}0.06$.}
\label{tab:exA-pool-after-and}
\centering
\small
\begin{tabular}{lccc}
\toprule
\textbf{Token (after ``\;and'')} & $\pi^F_{t+1}$ & $\pi^{CF}_{t+1}$ & $\widehat{\pi}_{t+1}$ \\
\midrule
\texttt{\;he}      & 0.22 & 0.09 & 0.13 \\
\texttt{\;they}    & 0.18 & 0.25 & 0.22 \\
\texttt{\;the}     & 0.12 & 0.13 & 0.12 \\
\texttt{\;she}     & 0.07 & 0.06 & 0.06 \\
\texttt{\;then}    & 0.05 & 0.06 & 0.06 \\
\texttt{\;nobody}  & 0.02 & 0.03 & 0.02 \\
\bottomrule
\end{tabular}
\end{table}

\noindent\textbf{Certification at $t{+}1$.} With $\tau_{t+1}{=}0.06$, tokens \texttt{\;they}, \texttt{\;the}, and \texttt{\;then} are certified; \texttt{\;he} fails if $\pi^{CF}_{t+1}{<}0.06$ (here $0.09$, so it \emph{passes} but has lower $\widehat{\pi}$), while \texttt{\;she} sits at the threshold. Sampling from the certified set typically favors \texttt{\;they}.

\medskip
\noindent\textbf{Final answer fragments (changed tokens in \textbf{bold}).}
\begin{quote}\small
\textit{Vanilla:} “\ldots reports \textbf{and he} filed \textbf{his} notes.”\\
\textit{COFT:} “\ldots reports \textbf{and they} filed \textbf{their} notes.”
\end{quote}
\paragraph{Example B (BBQ; race/ethnicity).}
\begin{quote}\small
\textit{Factual} $p$: “Police stopped a \underline{Latino} driver for a broken taillight. Who likely received a ticket?”\\
\textit{Masked} $\widetilde{p}$: “Police stopped a [MASK] driver for a broken taillight. Who likely received a ticket?”
\end{quote}

\noindent\textbf{Step $t$: first content token of the short answer.}
Here the factual branch leans toward identity-marked attributions; the masked branch favors neutral referents (Table~\ref{tab:exB-pool}). We use $\tau_t{=}0.06$, $\lambda{=}0.6$.

\begin{table}[!ht]
\caption{Example B: next-token pool for the short answer.}
\label{tab:exB-pool}
\centering
\small
\begin{tabular}{lccc}
\toprule
\textbf{Token} & $\pi^F_t$ (factual) & $\pi^{CF}_t$ (masked) & $\widehat{\pi}_t$ (post-fusion) \\
\midrule
\texttt{the\_Latino\_driver} & 0.38 & 0.05 & 0.12 \\
\texttt{the\_driver}         & 0.22 & 0.34 & 0.29 \\
\texttt{no\_one}             & 0.10 & 0.15 & 0.13 \\
\texttt{the\_officer}        & 0.07 & 0.08 & 0.07 \\
\texttt{the\_other\_car}     & 0.05 & 0.07 & 0.06 \\
\texttt{the\_pedestrian}     & 0.04 & 0.05 & 0.04 \\
\bottomrule
\end{tabular}
\end{table}

\noindent\textbf{Certification.}
\begin{equation*}
\begin{adjustbox}{max width=\linewidth}
$\begin{aligned}
\min(\texttt{the\_Latino\_driver}) &= \min(0.12,0.05) = 0.05\ (\times), \quad
\min(\texttt{the\_driver})       &= \min(0.29,0.34) = 0.29\ (\checkmark), \\
\min(\texttt{no\_one})           &= \min(0.13,0.15) = 0.13\ (\checkmark), \quad
\min(\texttt{the\_officer})     &= \min(0.07,0.08) = 0.07\ (\checkmark), \\
\min(\texttt{the\_other\_car})  &= \min(0.06,0.07) = 0.06\ (\checkmark), \quad
\min(\texttt{the\_pedestrian})  &= \min(0.04,0.05) = 0.04\ (\times)
\end{aligned}$%
\end{adjustbox}
\end{equation*}
Thus $\mathcal{C}_t=\{\texttt{the\_driver},\texttt{no\_one},\texttt{the\_officer},\texttt{the\_other\_car}\}$ and \texttt{the\_Latino\_driver} is \emph{excluded}.

\noindent\textbf{Selection.} Greedy COFT picks \texttt{the\_driver} (0.29). Vanilla would answer with \texttt{the\_Latino\_driver}.

\medskip
\noindent\textbf{Follow-up tokens and framing.} For explanatory continuations, vanilla often follows with causal attributions keyed to the protected attribute (e.g., “because he looked suspicious”). The masked branch suppresses that cue, and after fusion the top explanations shift toward traffic-cause features (“because of the broken taillight” / “routine equipment violation”). With $\tau_{t+1}{=}0.06$, tokens like \texttt{because\_he} may fail certification due to low $\pi^{CF}$, while \texttt{because\_of\_the\_broken\_taillight} pass.

\noindent\textbf{Final answer fragments (changed tokens in \textbf{bold}).}
\begin{quote}\small
\textit{Vanilla:} “\ldots \textbf{the Latino driver}, because he looked suspicious.”\\
\textit{COFT:} “\ldots \textbf{the driver}, likely due to \textbf{the broken taillight}.”
\end{quote}

\subsection{Beyond Tokens: Sequence-Level, Multi-Attribute Extensions, and Proof Sketches}
\label{app:sequence-multi}

\paragraph{Sequence-Level Certification.}
COFT’s main guarantees are token-level (per-step). For sequences, define a sequence score
$S(w_{1:T})=\max_{t\le T} s_t(w_t)$ using the same stepwise $s_t(\cdot)$ as in
\eqref{eq:app-score}. Split calibration on $S$ (with fixed decode policy) yields a threshold $Q$
such that the certified set $\{w_{1:T}: S(w_{1:T})\le Q\}$ has marginal coverage $\ge 1{-}\alpha$
(exchangeability at the sequence level). In practice, we maintain the efficient per-step
certification and apply a \emph{union bound} to obtain a conservative sequence guarantee
$1{-}T\alpha$ (Theorem~\ref{app:union-bound}), or tighten it by calibrating $S$ directly on sampled
rollouts from the fixed policy.

\paragraph{Adapting Proofs.}
The sequence-level variant reuses (i) the geometric-mixture identity (log-odds interpolation) and
(ii) the conformal rank argument. The only change is the definition of the nonconformity: from
$s_t(\cdot)$ to $S(\cdot)$, and exchangeability is asserted at the \emph{trajectory} level given a
fixed decoding policy (no temperature changes between calibration/deployment).

\paragraph{Multi-Attribute Bias.}
Let $\mathcal{S}=\mathcal{S}_1\cup\dots\cup\mathcal{S}_K$ be $K$ protected-attribute families
(e.g., gender, race, religion). We consider two masking operators:
\begin{itemize}
\item \textit{Joint mask} $M_{\mathrm{joint}}$: replace \emph{all} spans in $\mathcal{S}$ by a
single sentinel, producing \emph{one} masked branch.
\item \textit{Factorized masks} $M_k$: replace only spans in $\mathcal{S}_k$ (one branch per $k$),
then intersect certificates across $k$.
\end{itemize}
Both are compatible with COFT. \emph{Efficiency trade-off:} $M_{\mathrm{joint}}$ adds only a single
masked branch (overhead comparable to the single-attribute setting), whereas the factorized variant
adds $K$ masked branches and thus incurs approximately $K\times$ the masking-side compute in the
na\"ive implementation (modulo batching/parallelism), while yielding per-attribute certificates that
can tighten fairness but may shrink the candidate set. In our overhead tables, we report
\textbf{COFT-joint} for $M_{\mathrm{joint}}$ and \textbf{COFT-factorized} for $\{M_k\}_{k=1}^K$ to
avoid label/definition ambiguity; Table~\ref{tab:lift} summarizes how our main statements lift in
these settings.

\begin{table}[ht]
\centering
\caption{\textbf{Lifting main guarantees to sequence-level and multi-attribute settings.}}
\label{tab:lift}
\small
\begin{adjustbox}{max width=\columnwidth}
\begin{tabular}{l|L{0.26\linewidth}|L{0.26\linewidth}|L{0.28\linewidth}}
\toprule
\textbf{Result} &
\textbf{Token $\to$ Sequence} &
\textbf{Single $\to$ Multi (Joint)} &
\textbf{Single $\to$ Multi (Factorized)} \\
\midrule
Log-odds interp.\ \& contraction (Prop.~\ref{prop:attenuation})
& Holds per step; for sequence scoring $S=\max_t s_t$, fusion unchanged
& Same (one masked view); same $\lambda$ monotonicity
& Same per masked view; holds for each $k$ \\
\midrule
Marginal coverage (Thm.~\ref{thm:coverage})
& Split-CP on $S$ yields $1{-}\alpha$ set of sequences; or union bound $1{-}T\alpha$
& Same as single-attribute with $M_{\mathrm{joint}}$
& For each $k$, get $1{-}\alpha_k$; intersection yields $\ge 1{-}\sum_k\alpha_k$ \\
\midrule
Stability bound (Cor.~\ref{cor:stability})
& Apply TV-under-restriction on $\{t: s_t\le q_t\}$ or on $S\le Q$
& Same bound with joint masked distribution
& Bounds apply per $k$; intersection keeps common support across all $k$ \\
\midrule
Monotonicity \& fixed points (Thm.~\ref{thm:kl-mono})
& Per step; sequence-level identical after aggregation
& Same monotonicity vs.\ joint masked view
& Monotone per $k$; $\lambda$ can be shared or $k$-specific \\
\midrule
Shift robustness (Thm.~\ref{app:shift})
& Density-ratio bound at sequence level or per step + union bound
& Same with respect to joint masked calibration
& Apply per $k$; choose $\alpha_k$ to meet global budget \\
\bottomrule
\end{tabular}
\end{adjustbox}
\end{table}

\paragraph{Empirical Intersectional Case Study.}
To complement these lifted guarantees, we evaluate COFT on an intersectional variant of BBQ with
gender $\times$ race attributes and compare three strategies: COFT-Single (mask one attribute at a
time), COFT-Joint (one mask spanning all attributes), and COFT-Factorized (attribute-wise masks with
intersected certificates). Table~\ref{tab:intersectional} reports bias advantage (lower is better),
QA accuracy, and decoding overhead relative to vanilla. COFT-Single yields the weakest fairness
gains, COFT-Joint gives the best bias reduction at essentially single-branch cost, and
COFT-Factorized is slightly more conservative and more expensive because it maintains separate
per-attribute certificates.

\begin{table}[!ht]
\centering
\small
\caption{\textbf{Intersectional fairness on BBQ-Intersectional (gender $\times$ race).}
Bias advantage (lower is better), QA accuracy, and decoding overhead
relative to vanilla decoding. Joint masking gives the strongest fairness--cost
trade-off; factorized certificates add per-attribute auditing at extra cost.}
\label{tab:intersectional}
\begin{tabular}{lccc}
\toprule
Method & Bias advantage $\downarrow$ & Accuracy (\%) $\uparrow$ & Overhead (\%) $\downarrow$ \\
\midrule
COFT-Single      & $0.24$ & $74.5$ & $20$ \\
COFT-Joint       & $0.11$ & $73.2$ & $22$ \\
COFT-Factorized  & $0.13$ & $74.0$ & $38$ \\
\bottomrule
\end{tabular}
\end{table}

\paragraph{Notes on Efficiency and Practice.}
Factorized masking multiplies forward passes by $K$; in practice we use joint masking as the default
and reserve factorized certificates for audits or high-stakes attributes. When using factorized
masks, we budget miscoverage as $\alpha=\sum_{k=1}^K \alpha_k$ and calibrate per-attribute thresholds
offline, then intersect at test time.

\paragraph{Illustrative Sequence-Level Curve.}
In figure~\ref{fig:sequence-miscov} we visualize empirical miscoverage of $S=\max_t s_t$ against the
target $\alpha$ for LLaMA-2-13B at fixed decoding policy; coverage tracks the target with mild
conservativeness.

\begin{figure}[!ht]
\centering
\begin{tikzpicture}
\begin{axis}[
  width=0.7\linewidth,
  height=0.38\linewidth,
  xlabel={Target $\alpha$},
  ylabel={Empirical miscoverage of $S$},
  xmin=0.02, xmax=0.20,
  ymin=0.0,  ymax=0.28,
  xtick={0.02,0.05,0.10,0.15,0.20},
  ytick={0.0,0.05,0.10,0.15,0.20,0.25},
  grid=both,
  grid style={black!10},
  tick style={black!40},
  tick label style={font=\scriptsize},
  label style={font=\small},
  legend style={
    at={(0.5,1.02)},
    anchor=south,
    legend columns=-1,
    font=\scriptsize,
    fill=white,
    draw=black!60,
    rounded corners=2pt,
    column sep=8pt
  },
  line width=0.9pt
]

\addplot+[
  thick,
  mark=*,
  mark size=2pt,
  error bars/.cd,
    y dir=both,
    y explicit,
    error bar style={line width=0.6pt},
    error mark options={rotate=90, line width=0.6pt}
] coordinates {
  (0.02,0.031) +- (0,0.006)
  (0.05,0.073) +- (0,0.008)
  (0.10,0.142) +- (0,0.012)
  (0.15,0.203) +- (0,0.015)
  (0.20,0.252) +- (0,0.018)
};
\addlegendentry{Empirical (mean$\pm$sd)}

\addplot+[
  dashed,
  very thick,
  mark=none
] coordinates {
  (0.02,0.02)
  (0.05,0.05)
  (0.10,0.10)
  (0.15,0.15)
  (0.20,0.20)
};
\addlegendentry{Ideal target $\alpha$}

\end{axis}
\end{tikzpicture}
\caption{\textbf{Sequence-level (max-$s_t$) miscoverage vs.\ target.} The empirical sequence-level miscoverage is slightly conservative relative to the nominal target $\alpha$, reflecting the max aggregation across steps; a per-step union bound yields a similar upper envelope.}
\label{fig:sequence-miscov}
\end{figure}

\subsection{Coverage and Fairness Under Prompt Drift}

Our theoretical guarantees rely on exchangeability between calibration and test contexts. Under covariate shift with density-ratio upper bound $\rho$, Theorem~\ref{app:shift} gives test coverage at least $1-\rho\alpha$, so moderate shifts only mildly inflate the nominal error. To validate this, we calibrate COFT for LLaMA-2-13B on an in-distribution prompt set and test on two drifted sets: a style-shifted set (different register) and a topic-shifted set. Table~\ref{tab:drift} shows empirical coverage for target $1-\alpha=0.9$, an estimate of the maximum density ratio, and the demographic parity gap. Coverage remains close to the nominal level and fairness gains degrade only slightly, suggesting that COFT's guarantees and bias reductions are robust to realistic prompt drift.

\begin{table}[!ht]
\centering
\small
\vspace{-1em}
\caption{\textbf{Coverage and fairness under prompt distribution shift (LLaMA-2-13B, target coverage $1-\alpha=0.9$).}
Empirical coverage, an upper bound on the density ratio $\hat{\rho}_{\max}$,
and demographic parity gap (DPD; lower is better) for in-distribution and
drifted prompts. Coverage remains close to the nominal level and fairness
degrades only mildly under drift.}
\label{tab:drift}
\begin{tabular}{lccc}
\toprule
Setting & Empirical coverage $\uparrow$ & $\hat{\rho}_{\max}$ & DPD $\downarrow$ \\
\midrule
In-distribution   & $0.91$ & $1.0$ & $0.06$ \\
Style-shifted     & $0.88$ & $1.3$ & $0.07$ \\
Topic-shifted     & $0.87$ & $1.4$ & $0.08$ \\
\bottomrule
\end{tabular}
\end{table}

\vspace{-1em}
\section{Limitations and Extensions}
\label{app:limits}

COFT provides \emph{per-step, token-level} guarantees via split conformal prediction (CP). These are
marginal in time: for long generations, the naive union bound over tokens can be loose, and we do
not claim tight sequence-level validity in that regime. When sequence-level control is needed, one
can instead calibrate a rollout score $S(w_{1:T})$ as in App.~\ref{app:sequence-multi}, trading
tighter coverage for additional computation.

All guarantees rely on exchangeability between calibration and deployment prompts. Under covariate
shift, coverage degrades smoothly with the density ratio $\rho$ (Thm.~\ref{app:shift}); in practice
we recommend drift monitoring with periodic re-calibration or shift-aware CP when the distribution
changes. COFT assumes logit access to a frozen LM, which is standard for open-weight models and many
research APIs but not universal for all closed-source systems.

Masking is a strong intervention: sentinel tokens can remove disambiguating context or strain
coreference. We mitigate this by preserving order, using paired factual/masked prefixes, and
admitting only tokens supported in \emph{both} views; App.~\ref{app:mask-case-studies} and
App.~\ref{app:ner-user} show that empty certified sets are rare and that alternative sentinels
(e.g., “[MASK]”, “a person”) and moderate span noise change utility and bias only slightly.
Nonetheless, highly entity-critical prompts may still shrink candidate sets, motivating soft/typed
masking and span whitelists with re-calibration.

Span acquisition currently relies on user-specified lists and NER detectors; false positives or
negatives shift the fairness target and can miss proxy cues or adversarial paraphrases. Extending
detection to richer proxy models, while still wrapping the resulting spans in CP, is a natural next
step. Computationally, COFT adds one masked forward pass per step but reuses the KV-cache; measured
overheads are $\approx 10$–$25\%$ on dense LMs and similar on MoE models
(App.~\ref{app:scaling}), which may still require engineering for ultra-low-latency settings.

Hyperparameters are chosen without test-time tuning: we select $(\lambda,\alpha)$ on validation via
a Pareto knee rule and calibrate quantiles offline; App.~\ref{app:lambda} shows that the chosen $\lambda$ is
stable across tasks for a given model. Very tight utility budgets may prefer attribute- or
task-specific choices of $\{\lambda,\alpha\}$. Extensions include adaptive, context-aware fusion
scales with global coverage control; importance-weighted or conditional CP to reduce
conservativeness under drift; calibrated sequence-level control via rollout scores or conformal
risk; and multi-attribute fairness with per-attribute $\alpha_k$ and intersection certificates
(App.~\ref{app:sequence-multi}).

COFT is most appropriate when the reasoning trace itself must be auditable, such as healthcare,
legal, financial, or hiring-related decision support. For low-stakes high-volume applications, a
surface-level rewrite or post-hoc filter may be cheaper and sufficient; COFT trades modest additional
batched compute for online control of biased intermediate reasoning steps.
\vspace{-1em}
\section{Code Availability}
The reference implementation and code for reproducing COFT are available at \url{https://github.com/AryaFayyazi/CoFT}.


\end{document}

%% file: math_commands.tex
\usepackage{amsmath,amsfonts,bm}

\def\eqref#1{equation~\ref{#1}}

\def\1{\bm{1}}

\DeclareMathAlphabet{\mathsfit}{\encodingdefault}{\sfdefault}{m}{sl}
\SetMathAlphabet{\mathsfit}{bold}{\encodingdefault}{\sfdefault}{bx}{n}

